\documentclass{article}

\PassOptionsToPackage{numbers,compress}{natbib}
\usepackage[main,preprint]{neurips_2026}

\usepackage[utf8]{inputenc}
\usepackage[T1]{fontenc}
\usepackage{url}
\usepackage{booktabs}
\usepackage{amsfonts}
\usepackage{nicefrac}
\usepackage{microtype}
\usepackage{xcolor}
\usepackage{tikz}
\usetikzlibrary{positioning,arrows.meta}

\usepackage{amsmath,amssymb,amsthm,mathtools}
\usepackage{thmtools}
\usepackage{thm-restate}
\usepackage{bm}
\usepackage{graphicx}
\usepackage{subcaption}  
\usepackage{enumitem}
\usepackage{algorithm}
\usepackage{algpseudocode}   

\usepackage{multirow}
\usepackage{colortbl}
\usepackage{hyperref}
\usepackage{cleveref}

\newtheorem{claim}{Claim}
\newtheorem{theorem}{Theorem}[section]
\newtheorem{lemma}[theorem]{Lemma}
\newtheorem{definition}{Definition}

\theoremstyle{remark}
\newtheorem{remark}{Remark}

\title{Recovery Guarantees for Posterior Sampling of One-Bit Compressed Sensing}

\author{
Jing Ma$^{\star}$
\qquad
Yujia Wu$^{\star}$
\qquad
Zhaoqiang Liu$^{\dagger}$
\\
University of Electronic Science and Technology of China
}

\begin{document}

\maketitle

\begingroup
\renewcommand\thefootnote{}
\footnotetext{
$^{\star}$Equal contribution.
\quad
$^{\dagger}$Corresponding author (zqliu12@gmail.com). 
}
\endgroup

\maketitle

\begin{abstract}
We study the sample complexity of noisy one-bit compressed sensing for signals drawn from a prior distribution. By characterizing the effective distributional complexity of the prior via its approximate covering number, we prove that posterior sampling achieves accurate recovery with high probability when the number of measurements scales with the logarithm of the approximate covering number, up to a one-bit separation gap factor. This upper bound is robust to learned prior mismatch. Specifically, we show that posterior sampling with an approximate prior remains reliable, provided that the learned prior distribution is sufficiently close to the true signal distribution in Wasserstein distance. In addition, we establish a sample complexity lower bound for any reliable method of noisy one-bit compressed sensing, showing that our upper bound is nearly matched in its main prior dependent term. To approximate the ideal posterior sampling process for real world scenarios, we instantiate posterior sampling through a plug-and-play algorithm with diffusion priors. Experiments on the FFHQ and ImageNet datasets demonstrate the effectiveness of our proposed approach.

\end{abstract}

\section{Introduction}
Compressed sensing studies how to reliably recover high dimensional signals from a number of measurements far below the ambient dimension. Its success relies on a central premise: High dimensional signals often have low intrinsic dimension~\citep{baraniuk2009random,bora2017compressed,jalal2021instance}. Classical compressed sensing theory formalizes this structure through models such as sparsity~\citep{candes2006robust}, unions of subspaces~\citep{eldar2009robust}, low rankness~\citep{recht2010guaranteed}, or low dimensional manifolds~\citep{baraniuk2009random}, and establishes recovery guarantees when the number of
measurements is sufficiently large relative to the intrinsic complexity of the
underlying signal class~\citep{candes2006robust,donoho2006compressed,baraniuk2009random,recht2010guaranteed}. However, in many modern sensing systems, the measurement process is often further constrained by low precision acquisition devices, limited storage, or restricted communication bandwidth~\citep{jacques2013robust,laska2011trust,ruan2024adaptive,ruan2024optimal,han2024mmsign}. In such scenarios, the sensor may record only the signs of linear measurements, leading to the one-bit compressed sensing model~\citep{boufounos20081,plan2012robust}:
\begin{equation}
    y=\operatorname{sign}(Ax^\ast+\xi)\in\{-1,+1\}^m,
    \label{eq: compressedSensingModel}
\end{equation}
where \(x^\ast\in \mathbb{S}^{n-1}:=\{x\in\mathbb{R}^n:\|x\|_2=1\}\) is the unknown signal,
\(A\in \mathbb R^{m\times n}\) is the measurement matrix, and 
\(\xi\sim\mathcal N(0,\sigma^2I_m)\) denotes Gaussian noise with noise level \(\sigma>0\).
For one-bit measurements, there is a loss of the magnitude information on the signal~\citep{boufounos20081,plan2013one,knudson2016one} and we focus on directional recovery and measure reconstruction error
by geodesic distance on \(\mathbb S^{n-1}\).

Classical compressed sensing often adopts a set based viewpoint, seeking recovery over a prescribed structured class such as sparse vectors~\citep{needell2009uniform,needell2009cosamp}, low dimensional manifolds~\citep{hegde2007random,eftekhari2015new}, or range of generative model~\citep{bora2017compressed,hand2018phase}. While this perspective has been successful, it can be overly pessimistic for expressive priors since it ignores how probability mass is distributed within the signal model~\citep{jalal2021instance,reeves2012sampling}. A distributional viewpoint instead treats the signal prior itself as the object of study and asks for high probability recovery when the signal \(x^\ast\) is drawn from that prior~\citep{ji2008bayesian,yu2011statistical,reeves2012sampling,aeron2010information}. In this setting, it is more natural to measure the complexity of the distribution through the approximate covering number of a high probability region~\citep{jalal2021instance,adcock2025many}, rather than through the covering number of the entire support. For linear Gaussian measurements, it is shown that the approximate covering number governs the sample complexity of posterior sampling for arbitrary data priors~\citep{jalal2021instance}. This motivates the central question of this paper: For signals drawn from an arbitrary data prior distribution on the unit sphere, how should the sample complexity of reliable recovery from noisy one-bit measurements be characterized in terms of the prior distribution?

\subsection{Related works}
Early work on one-bit compressed sensing studied sensing systems in which only the signs of linear measurements are retained~\citep{boufounos20081}.
Since such sign observations discard amplitude information, traditional one-bit compressed sensing focuses on directional recovery, typically under a normalization constraint on the signal~\citep{plan2012robust,jacques2013robust}.
For sparse signals, foundational results establish recovery guarantees from noisy~\citep{kafle2022noisy} or noiseless sign measurements using convex programs and related optimization methods~\citep{plan2012robust,plan2013one,jacques2013robust,ai2014one}.
A key geometric principle underlying these results is that random hyperplanes can provide binary embeddings: As first proven in ~\citep{jacques2013robust} and soon after in ~\citep{plan2013one}, the normalized Hamming distance between sign patterns concentrates around the angular distance between signals, uniformly over low complexity sets.
Subsequent works further develop robust recovery guarantees under measurement noise, sign flips, and other quantization effects~\citep{jacques2013robust,knudson2016one}.
These classical results show that one-bit measurements can preserve sufficient angular information for reliable recovery over structured signal classes, despite the severe information loss caused by quantization.

With the development of deep learning, the seminal work~\citep{bora2017compressed} shows that signals lying near the range of a generative model can be recovered from significantly fewer linear measurements than the ambient dimension by optimizing over the latent space of the generator. This viewpoint replaces hand crafted structures, such as sparsity or low rankness, with the range of a learned generator, and has motivated a broad line of work on compressed sensing with generative priors~\citep{liu2020information,chen2023unified,chen2025solving}. Motivated by these developments, recent studies have extended generative prior methods to one-bit compressed sensing, where recovery is typically performed by searching for a latent code whose generated signal is consistent with the observed signs~\citep{joseph2020one,qiu2020robust,liu2020sample}.
Theoretical analyses in this direction often rely on binary stable embeddings or uniform concentration over the generator range, leading to sample complexity bounds for Lipschitz generative models under Gaussian hyperplane measurements~\citep{liu2020sample,liu2020information,qiu2020robust}. 
More recently, pretrained diffusion generative models~\citep{ho2020denoising, song2020ddim, songscore} have emerged as powerful deep generators, and are widely leveraged as highly flexible priors for signal recovery~\citep{chung2023diffusion,kawar2022ddrm,wang2022ddnm,wang2024dmplug,zheng2025integrating,wu2024principled,zhang2025improving,alkhouri2025sitcom,zheng2026image,chung2024decomposed,xu2024provably,coeurdoux2024plug,bouman2023generative,chang2025provable,jiao2026provable,liu2026posterior}, including subsequent extensions to one-bit compressed sensing
\citep{mengquantized, tang2025learning, chen2026diffusion}. However, the theoretical guarantees for these diffusion based methods remain relatively underexplored.
Unlike previous set based guarantees over prescribed signal classes or range of generator~\citep{liu2020sample,liu2022non,chen2023unified,chen2024efficient}, we study arbitrary data prior distributions on the unit sphere and characterize the sample complexity through the approximate covering complexity~\citep{jalal2021instance} of high probability regions.

% To operationalize such distributional viewpoints, extensive works leverage pre-trained diffusion models~\citep{ho2020denoising, song2020ddim, songscore} as flexible priors for signal recovery~\citep{chung2023diffusion,kawar2022ddrm,wang2022ddnm,bora2017compressed,wang2024dmplug,wu2024principled,zhang2025improving,alkhouri2025sitcom,chung2024decomposed,xu2024provably}. Recently, this generative paradigm has been extended to one-bit compressed sensing~\citep{mengquantized, tang2025learning, chen2026diffusion}. However, these existing approaches predominantly seek a single optimal point estimate via optimization techniques. To rigorously align our empirical implementation with the theoretical guarantees of posterior sampling, we propose a specialized one-bit plug-and-play algorithm.

 The work~\citep{jalal2021instance} is perhaps the most closely relevant work, which develops a posterior sampling framework for linear compressed sensing with arbitrary data priors. Their theory introduces approximate covering numbers as a central complexity measure for distributional compressed sensing and shows that posterior sampling is instance optimal under linear Gaussian measurements. We extend this idea to one-bit measurements. This extension is not a simple technical modification. Since one-bit observations retain only binary sign patterns, residual comparisons and likelihood separation arguments from continuous linear measurements are no longer directly applicable, and we can only distinguish spherical signals using a separation gap defined through the expected Hamming distance. Moreover, the nonlinearity of the one-bit measurement model makes the analysis of noise and model mismatch more difficult. Additive Gaussian noise, after passing through the sign function, no longer behaves as a simple continuous perturbation. When the true prior and the sampling prior are mismatched, we use a R\'enyi type stability~\citep{van2014renyi} argument to control how this perturbation affects the posterior distribution. In addition, the analysis is carried out on the unit sphere with geodesic distance, which requires spherical versions of covering, separation, and Wasserstein coupling arguments.

\subsection{Contributions}
Our work extends the distributional posterior sampling framework from continuous linear measurements to the noisy one-bit setting, together with the corresponding analysis of noisy binary separation, recovery guarantees under prior mismatch, and a nearly matching information-theoretic necessary condition in the leading prior complexity term. Our main results are summarized as follows:
\begin{itemize}
    \item 
    In~\Cref{upper_bound}, we derive an upper bound on the sample complexity with Gaussian measurement matrices. The result gives the upper bound of sample complexity conditions for achieving geodesic recovery accuracy \(\eta\), and is robust to mismatch between the learned prior distribution and the true signal distribution when they are sufficiently close in Wasserstein distance.

    \item 
    In~\Cref{lower_bound}, we derive a nearly matching lower bound on the sample complexity required by any reliable recovery method, showing that the approximate covering number in the upper bound is unavoidable.
    
    \item 
    In~\Cref{PnP}, we develop a specialized Plug-and-Play method with diffusion priors to approximate the posterior sampling process for noisy one-bit compressed sensing. The experimental results verify the effectiveness of this approach in image recovery tasks.
\end{itemize}
\section{Preliminaries}

\begin{definition}[Signal and observation distances]
Since signals are normalized to the unit sphere
\(\mathbb S^{n-1}\),
we measure signal distance by the normalized geodesic distance
$\mathrm{d}_{\mathrm{S}}(x_1,x_2):=\frac{1}{\pi}\arccos\langle x_1,x_2\rangle$, where $x_1,x_2\in\mathbb S^{n-1}$.
For binary observations \(u,v\in\{-1,+1\}^m\), we measure their discrepancy by the normalized Hamming distance \(\mathrm{d}_{\mathrm{H}}(u,v):=\frac{1}{m}\sum_{i=1}^m \mathbf 1\{u_i\neq v_i\}\).
\end{definition}
\begin{definition}[$(\eta,\delta)$-approximate covering number~\citep{jalal2021instance}]
For \(x_0\in\mathbb S^{n-1}\) and \(0<\eta\le 1\), define the geodesic $\eta$-cap
$\mathcal B(x_0,\eta):=\{x\in\mathbb S^{n-1}: \mathrm{d}_{\mathrm{S}}(x,x_0)\le \eta\}$. For a distribution \(R\) on \(\mathbb S^{n-1}\) and \(\delta\in[0,1)\), the
\((\eta,\delta)\)-approximate covering number is
\begin{equation}
\mathrm{Cov}_{\eta,\delta}(R):=\min\left\{k\in \mathbb N:R\left(\bigcup_{i=1}^k \mathcal B(x_i,\eta)\right)\ge 1-\delta,
\ x_i\in\mathbb S^{n-1}
\right\}.
\end{equation}
\end{definition}

\begin{definition}[Geodesic Wasserstein distance]
Let \(\mu\) and \(\nu\) be probability distributions on \(\mathbb S^{n-1}\), and let
\(\Pi(\mu,\nu)\) denote the set of all couplings of \(\mu\) and \(\nu\).
For \(p\ge 1\), the geodesic Wasserstein-\(p\) distance between \(\mu\) and \(\nu\) is defined as
\begin{equation}
\mathcal W_{p,\mathrm{geo}}(\mu,\nu)
:=
\left(
\inf_{\gamma\in\Pi(\mu,\nu)}
\int_{\mathbb S^{n-1}\times\mathbb S^{n-1}}
\mathrm{d}_{\mathrm{S}}(x,y)^p\, d\gamma(x,y)
\right)^{1/p}.
\end{equation}
The corresponding geodesic Wasserstein-\(\infty\) distance is defined as
\begin{equation}
\mathcal W_{\infty,\mathrm{geo}}(\mu,\nu)
:=
\inf_{\gamma\in\Pi(\mu,\nu)}
\operatorname*{ess\,sup}_{(x,y)\sim\gamma}
\mathrm{d}_{\mathrm{S}}(x,y).
\end{equation}
\end{definition}
\begin{definition}[One-bit separation gap]
\label{separation_gap}
For the noisy Gaussian one-bit model with noise level \(\sigma\), let \(f_\sigma(w)=\frac{1}{\pi}\arccos\!\left(\frac{\cos(\pi w)}{\sqrt{1+\sigma^2}}\right)\), \(w\in[0,1]\). For \(u\in(0,1)\) and \(s\in(1,1/u)\), the one-bit separation gap is defined by \(\Delta_\sigma(s,u):=f_\sigma(su)-f_\sigma(u)\). Since \(s>1\), we have \(su>u\), by the monotonicity of
\(f_\sigma\), \(\Delta_\sigma(s,u)\ge 0\).
\end{definition}
\begin{definition}[Maximum row $\ell_2$ norm]
Let $A\in\mathbb{R}^{m\times n}$ and let $a_i^\top$ denote the $i$-th row of $A$. We define
$\|A\|_{2,\infty}:=\max_{i\in[m]}\|a_i\|_2.$ 
\end{definition}
\begin{definition}[One-bit probit conditional likelihood]
Under the one-bit probit model, the conditional likelihood of observing 
$y\in\{-1,+1\}^m$ given $x$ and $A$ is defined as
$\Pr(y| x,A):=\prod_{i=1}^m\Phi\left(\frac{y_i a_i^\top x}{\sigma}\right)$, where \(\Phi\) is the standard Gaussian cumulative distribution function and \(\sigma>0\) is the noise level.
\end{definition}

\section{Main results}
In this section, we study the sample complexity of reliable signal recovery by presenting both upper bound and lower bound. Specifically, the upper bound provides a sufficient measurement condition for reliable recovery, whereas the lower bound provides a necessary measurement condition that must be satisfied by any reliable recovery method. In the upper bound analysis, we assume an i.i.d. Gaussian sensing matrix \(A_{ij}\sim \mathcal N(0,1)\), which ensures that geodesic separation on the sphere is translated into predictable Hamming separation between one-bit sign patterns. 
In the lower bound analysis, we consider both deterministic and Gaussian sensing matrices.

\subsection{Upper bound}
\label{upper_bound}
In this section, we establish an upper bound on the sample complexity of posterior sampling for one-bit compressed sensing. The key idea is that, under sufficiently many independent Gaussian measurements, geodesically separated signals induce one-bit observations that can be distinguished through their Hamming disagreement.

\begin{theorem}[One-bit posterior sampling upper bound]%{theorem}{uppercorollary}
\label{cor:main}
Let \(R\) and \(P\) be two probability distributions on the unit sphere \(\mathbb S^{n-1}\).\footnote{\(R\) is regarded as the true signal distribution and \(P\) is regarded as learned distribution corresponding to the generative model.} Let \(p\ge 1\), \(0<\eta<1\), \(0<\delta<1/4\), and assume 
$\mathcal W_{p,\mathrm{geo}}(R,P)\le \rho $. Define the effective mismatch level
$\varepsilon_\delta:=\rho/\delta^{1/p}$,
and define the parameter $
\bar\eta
:=
\eta+\varepsilon_\delta $.
We take $c\in\left(1,\frac{1-\varepsilon_\delta}{\bar\eta}-1\right)$, assume further that \(2\eta+3\varepsilon_\delta<1\), which guarantees that
$\left(1,\frac{1-\varepsilon_\delta}{\bar\eta}-1\right)$
is a nonempty interval. Given \((y,A)\),
$\widehat x
\sim
P(\cdot| y,A)$
is the posterior sampling estimator based on \(P\).
Furthermore, define
\(r_{\varepsilon_\delta}
:=
2\sin\left(
\frac{\pi\varepsilon_\delta}{2}
\right)\), assume that the model mismatch is sufficiently small, such that
\(\frac{r_{\varepsilon_\delta}^2}{\sigma^2}
\le\frac{1}{6}\Delta_\sigma(c,\bar\eta)^2\).
For \(\zeta\in(0,1)\), when
\(m\ge C\frac{\log \mathrm{Cov}_{\eta,\delta}(R)+\log 2+\log(1/\zeta)}
{\Delta_\sigma(c,\bar\eta)^2}\) for some \(C>0\), with probability at least \(1-\zeta\) over the draw of \(A\),
\begin{equation}
    \Pr_{x^*\sim R,\xi,\widehat{x}| A}\left(
\mathrm{d}_{\mathrm{S}}(x^*,\widehat x)
\ge(c+1)\bar\eta+\varepsilon_\delta
\right)
\le2\delta+\zeta .
\end{equation}
In particular, if \(c>1\) is also a fixed constant and
\(\varepsilon_\delta=O(\eta)\), then \(\bar\eta=\Theta(\eta)\).
In this case, to ensure
\(\mathrm{d}_{\mathrm{S}}(x^*,\widehat x)=O(\eta)\) with probability at least
\(1-2\delta-\zeta\) conditionally on a measurement matrix \(A\), it suffices to take
\begin{equation}
    m=
\begin{cases}
O\!\left((\log \mathrm{Cov}_{\eta,\delta}(R)+\log(1/\zeta))\,\eta^{-2}\right),
& \sigma\lesssim \eta,\\[2mm]
O\!\left((\log \mathrm{Cov}_{\eta,\delta}(R)+\log(1/\zeta))\,\sigma^2\eta^{-4}\right),
& \sigma\gtrsim \eta .
\end{cases}
\end{equation}
\end{theorem}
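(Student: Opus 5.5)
The plan follows the posterior-sampling template of \citep{jalal2021instance}, adapted to sign measurements. There are three stages: reduce the mismatched problem to a problem about a finite cover, prove a likelihood-separation statement for one-bit observations over that cover, and then assemble the failure events.

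\textbf{Step 1 (coupling and cover).} Take an optimal coupling $\gamma$ of $(R,P)$ for $\mathcal W_{p,\mathrm{geo}}$. By Markov's inequality applied to $\mathrm{d}_{\mathrm S}^p$, we get $\Pr_\gamma(\mathrm{d}_{\mathrm S}(x,x')>\varepsilon_\delta)\le \rho^p/\varepsilon_\delta^p=\delta$. Let $x_1,\dots,x_N$, with $N=\mathrm{Cov}_{\eta,\delta}(R)$, be centres whose $\eta$-caps carry $R$-mass at least $1-\delta$. Off an event of probability $2\delta$, the pair $(x^*,x')\sim\gamma$ then satisfies $x^*\in\mathcal B(x_i,\eta)$ for some $i$ and $\mathrm{d}_{\mathrm S}(x^*,x')\le\varepsilon_\delta$, so $x'\in\mathcal B(x_i,\bar\eta)$. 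The $\varepsilon_\delta$-enlarged caps therefore give an approximate cover of $P$ at radius $\bar\eta$. The bad set for $\widehat x$ is $\{x:\mathrm{d}_{\mathrm S}(x,x^*)\ge (c+1)\bar\eta+\varepsilon_\delta\}$. By the triangle inequality, this is contained in the complement of $\mathcal B(x_i,c\bar\eta)$, so it suffices to show that the posterior under $P$ puts little mass outside $\mathcal B(x_i,c\bar\eta)$.

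\textbf{Step 2 (one-bit separation, the main obstacle).} For the Gaussian row $a$ and noise $\xi$, the disagreement probability $\Pr(\operatorname{sign}(a^\top u+\xi)\ne \operatorname{sign}(a^\top v+\xi'))$ for independent noises is exactly $f_\sigma(\mathrm{d}_{\mathrm S}(u,v))$. This is the arccos formula applied to the normalized vectors $(u,\sigma e)$ and $(v,\sigma e')$, which have inner product $\cos(\pi w)/(1+\sigma^2)$ after normalisation by $\sqrt{1+\sigma^2}$. This is why $f_\sigma$ appears, and I will take that formula as the base identity. The consequence is that $y$ generated from a signal within $\bar\eta$ of $x_i$ has expected Hamming distance at most $f_\sigma(\bar\eta)$ to an independent resampling from $x_i$'s neighbourhood. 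A signal farther than $c\bar\eta$ incurs at least $f_\sigma(c\bar\eta)$, giving a gap of $\Delta_\sigma(c,\bar\eta)$; the constraint $c<(1-\varepsilon_\delta)/\bar\eta-1$ keeps all arguments in $[0,1]$, where $f_\sigma$ is monotone. I would convert this gap into a likelihood ratio test in the style of Jalal et al. The posterior mass of a far cap relative to the true cap is bounded by the probability that an independent sample $\tilde x$ from the far region produces likelihood comparable to the truth. Hoeffding's inequality over the $m$ independent rows gives a factor of $\exp(-c'm\Delta_\sigma^2)$ per pair. A union bound over the $N$ cap centres (pairs $(i,\cdot)$ taken in expectation over $A$), followed by Markov's inequality over $A$, yields probability $1-\zeta$ once $m\gtrsim(\log N+\log 2+\log(1/\zeta))/\Delta_\sigma^2$.

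\textbf{Step 3 (mismatch in the likelihood).} The data are generated from $x^*\sim R$, whereas the posterior is computed under $P$ and is evaluated at $x'$. To compare $\Pr(y\mid x^*,A)$ with $\Pr(y\mid x',A)$, I will bound the R\'enyi-2 (or $\chi^2$) divergence between the two probit product laws. Each coordinate contributes $O((a_i^\top(x^*-x'))^2/\sigma^2)$. Since $a_i^\top(x^*-x')\sim\mathcal N(0,\|x^*-x'\|^2)$ and $\|x^*-x'\|\le r_{\varepsilon_\delta}$ (chord length equals $2\sin(\pi d/2)$), the expected total contribution is at most $m r_{\varepsilon_\delta}^2/\sigma^2$. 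Under the hypothesis $r_{\varepsilon_\delta}^2/\sigma^2\le\Delta_\sigma^2/6$, this change of measure costs $\exp(m\Delta_\sigma^2/6)$. That is dominated by the Hoeffding gain $\exp(-2m\Delta_\sigma^2\cdot\Theta(1))$, so the failure probability keeps the form $\exp(-\Omega(m\Delta_\sigma^2))$. Balancing these exponents, and fixing the cutoff in Hoeffding at the midpoint $f_\sigma(\bar\eta)+\Delta/2$, is the delicate part. I expect the constant $1/6$ to come from this balance.

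\textbf{Step 4 (rates).} Collecting Steps 1--3 gives total failure probability $2\delta+\zeta$. For the explicit rates, use $f_\sigma'(w)=\frac{\sin(\pi w)}{\sqrt{1+\sigma^2}\,\sin(\pi f_\sigma(w))}$. When $\sigma\lesssim\eta$, $f_\sigma$ is roughly the identity at scale $\eta$, so $\Delta\asymp\eta$. When $\sigma\gtrsim\eta$, one has $\sin(\pi f_\sigma)\approx\sigma$ near $w\sim\eta$, hence $f_\sigma'\asymp\eta/\sigma$ and $\Delta\asymp\eta^2/\sigma$. Substituting these into $m\gtrsim\log(\cdot)/\Delta^2$ yields the two stated regimes.
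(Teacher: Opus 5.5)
\textbf{Genuine gap in Step 2.} Step 1 is fine. It directly derives the decomposition $R=(1-2\delta)R'+2\delta R''$, $P=(1-2\delta)P'+2\delta P''$ with $\mathcal W_{\infty,\mathrm{geo}}(R',P')\le\varepsilon_\delta$ and $P'$ covered by $N=\mathrm{Cov}_{\eta,\delta}(R)$ caps of radius $\bar\eta$, which the paper imports from Jalal et al. Step 4 is also fine.

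The first problem is the base identity. For two independent \emph{noisy} sign measurements, the normalized augmented vectors have inner product $\cos(\pi w)/(1+\sigma^2)$, as you yourself compute. The disagreement probability is therefore $\frac1\pi\arccos\bigl(\cos(\pi w)/(1+\sigma^2)\bigr)=f_{\tilde\sigma}(w)$ with $\tilde\sigma=\sigma\sqrt{2+\sigma^2}$, not $f_\sigma(w)$. A test built on that comparison separates only by $\Delta_{\tilde\sigma}(c,\bar\eta)<\Delta_\sigma(c,\bar\eta)$. In the noise-dominated regime $\Delta_{\tilde\sigma}\approx\Delta_\sigma/\sqrt{2+\sigma^2}$, so the sample size degrades by a factor $2+\sigma^2$. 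Moreover, the hypothesis $r_{\varepsilon_\delta}^2/\sigma^2\le\Delta_\sigma^2/6$ no longer suffices to absorb the mismatch cost.

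The $\sqrt{1+\sigma^2}$ in $f_\sigma$ comes from comparing the noisy $y$ with the \emph{noiseless} pattern of the cover centre. Use $\phi_i(y;A)=\mathbf 1\{\mathrm{d}_{\mathrm H}(y,\operatorname{sign}(Ax_i))\ge\tfrac12(f_\sigma(\bar\eta)+f_\sigma(c\bar\eta))\}$. By Sheppard's formula (correlation $\langle x,x_i\rangle/\sqrt{1+\sigma^2}$) and Hoeffding, both of its error probabilities, averaged over $(A,\xi)$, are at most $e^{-m\Delta_\sigma(c,\bar\eta)^2/2}$.

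The second problem is that Step 2 lacks the step turning a test into a posterior bound. The idea that ``an independent far sample produces likelihood comparable to the truth'' cannot be controlled by Hamming concentration, because the probit likelihood depends on the magnitudes $|a_k^\top x|$. The needed fact is as follows. Take a partition block $S_i$ of the cover and $F_i=\{x:\mathrm{d}_{\mathrm S}(x,x_i)\ge c\bar\eta\}$, and set $H_i^{\mathrm{in}}(u)=\int_{S_i}\Pr(u\mid x,A)\,dP(x)$, with $H_i^{\mathrm{far}}$ defined likewise. Then $\Pr(z^*\in S_i,\widehat z\in F_i)\le\frac{1}{1-2\delta}\mathbb E_A\sum_u\min\{H_i^{\mathrm{in}}(u),H_i^{\mathrm{far}}(u)\}$. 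The factor $\frac{1}{1-2\delta}$ comes from $P\ge(1-2\delta)P'$, since the truth is drawn from $P'$ while the posterior uses all of $P$. The right-hand side is at most $\frac{1}{1-2\delta}\bigl(P(S_i)\cdot\text{type I error}+P(F_i)\cdot\text{type II error}\bigr)$ for \emph{any} test, which is where $\phi_i$ enters. Summing over $i$ gives the matched bound of order $N e^{-m\Delta_\sigma(c,\bar\eta)^2/2}/(1-2\delta)$.

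\textbf{Accounting errors in Step 3.} Step 3 has the paper's structure: a R\'enyi-2 change of measure, data processing to the unquantized Gaussian channel, and the chord bound $r_{\varepsilon_\delta}$. The accounting, however, is off in ways that matter for the constant $1/6$. Write $\Delta=\Delta_\sigma(c,\bar\eta)$ and $r=r_{\varepsilon_\delta}$.

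First, the change of measure must be made on the joint law of $(A,u)$. So the Markov step over $A$ that you place in Step 2 has to come last, because the cost is controlled only on average over $A$. That average is $\mathbb E_A\exp(\|A(x^*-x')\|_2^2/\sigma^2)=(1-2\|x^*-x'\|_2^2/\sigma^2)^{-m/2}$, by the $\chi^2_m$ moment generating function. This exceeds $\exp(mr^2/\sigma^2)$, since Jensen goes the other way. It is finite only because $2r^2/\sigma^2\le\frac13$, and it is handled with $-\log(1-t)\le\frac32 t$.

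Second, Cauchy--Schwarz multiplies the cost by the \emph{square root} of the matched failure probability. The available gain is therefore $\sqrt{2N}\,e^{-m\Delta^2/4}$, not a full Hoeffding exponent. The cost is $(1-2r^2/\sigma^2)^{-m/4}\le e^{3mr^2/(4\sigma^2)}\le e^{m\Delta^2/8}$, leaving $\sqrt{2N}\,e^{-m\Delta^2/8}$; this is where the $1/6$ enters. Finally, Markov's inequality with $\beta=e^{-m\Delta^2/48}$ and $m\ge C(\log N+\log 2+\log(1/\zeta))/\Delta^2$ gives the claim.
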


\begin{remark}
The condition $\frac{r_{\varepsilon_\delta}^2}{\sigma^2}\le\frac{1}{6}\Delta_\sigma(c,\bar\eta)^2$
requires the model mismatch to be small compared with the effective noise level and the one-bit near--far separation gap. Since $
r_{\varepsilon_\delta}=2\sin\left(\frac{\pi\varepsilon_\delta}{2}\right),\varepsilon_\delta=\frac{\rho}{\delta^{1/p}},$
for small \(\varepsilon_\delta\) this condition is equivalent up to constants to $\rho\lesssim\delta^{1/p}\sigma\Delta_\sigma(c,\bar\eta)$.
Thus, the learned sampling prior \(P\) must be sufficiently close to the true prior \(R\). If the mismatch is too large, the main probability masses of \(R\) and \(P\) may be separated on the sphere, so even after Gaussian smoothing their induced one-bit observation distributions may not overlap enough. Consequently, the posterior induced by \(P\) may place little mass near the true signal \(x^*\sim R\), causing the stability guarantee for posterior sampling to fail. Moreover, extensive studies have shown the generated distribution \(P\) can be close to the true distribution \(R\) under appropriate conditions~\citep{chen2023sampling,lee2023convergence,gao2025wasserstein,lu2020universal,gao2023approximating}.

\end{remark}

\begin{remark}
Compared with the linear posterior sampling upper bound~\citep{jalal2021instance}, the one-bit setting contains an additional factor
\(\Delta_\sigma(c,\bar\eta)^{-2}\). This is due to the one-bit observation process, which preserves only sign information and discards the amplitude information available in linear measurements. Consequently, the geometric separation between two covering caps is not directly reflected in the observation space, but can only be detected through the difference between their induced expected Hamming disagreements. This difference is precisely
$\Delta_\sigma(c,\bar\eta)=f_\sigma(c\bar\eta)-f_\sigma(\bar\eta)$.
To reliably distinguish the covering caps, the empirical Hamming distance must concentrate at the scale of \(\Delta_\sigma(c,\bar\eta)\), which requires
$m\Delta_\sigma(c,\bar\eta)^2\gtrsim \log \mathrm{Cov}_{\eta,\delta}(R)$.
Therefore, the sample complexity naturally contains the factor
$\Delta_\sigma(c,\bar\eta)^{-2}$.
This term reflects the intrinsic loss of information caused by one-bit quantization and noise.
\end{remark}

We first outline the proof of \Cref{cor:main}. Since one-bit observations retain only sign information, the Euclidean residuals used in linear models are no longer directly available. Therefore, we need to establish a new notion of separability in the binary observation space. Specifically, we first characterize the relationship between spherical geodesic distance and the Hamming disagreement rate of one-bit observations, and then use this relationship to construct a near--far test. We then combine approximate covering with a union bound to control the probability that the posterior sample falls into an incorrect far-away region, and finally use a R\'enyi type stability to handle model mismatch.

We first need to characterize how distinguishable two spherical signals are after one-bit observation. Due to the rotational invariance of Gaussian measurements, the sign disagreement probability of two signals depends only on their geodesic distance. The following lemma gives an explicit expression for this relationship.
\begin{lemma}[Expected Hamming distance between noisy and noiseless one-bit measurements]
    \label{hammingdistance}
    Consider two signals $x_1,x_2\in\mathbb S^{n-1}$ defined on the unit Euclidean sphere. Define the observations of these two signals as $u=\operatorname{sign}(Ax_1+\xi)$ and $v=\operatorname{sign}(Ax_2)$. We have:
\begin{equation}
\mathbb{E}[\mathrm{d}_{\mathrm{H}}(u,v)] = \frac{1}{\pi} \arccos\left(\frac{\cos(\pi \mathrm{d}_{\mathrm{S}}(x_1,x_2))}{\sqrt{1+\sigma^2}}\right).
\end{equation}
\end{lemma}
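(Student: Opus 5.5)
By linearity of expectation, $\mathbb E[\mathrm d_{\mathrm H}(u,v)]=\frac1m\sum_{i=1}^m\Pr(u_i\neq v_i)$. Because the rows $a_i$ are i.i.d. $\mathcal N(0,I_n)$ and the $\xi_i$ are i.i.d. $\mathcal N(0,\sigma^2)$, it suffices to compute a single disagreement probability $\Pr\big(\operatorname{sign}(a^\top x_1+\xi)\neq\operatorname{sign}(a^\top x_2)\big)$ with $a\sim\mathcal N(0,I_n)$ and $\xi\sim\mathcal N(0,\sigma^2)$ independent. Ties have probability zero, so the sign convention at $0$ is irrelevant.

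Write $\theta:=\arccos\langle x_1,x_2\rangle=\pi\,\mathrm d_{\mathrm S}(x_1,x_2)$. Since $\|x_1\|_2=\|x_2\|_2=1$, the pair $(g_1,g_2):=(a^\top x_1,a^\top x_2)$ is centered Gaussian with unit variances and covariance $\cos\theta$. Then $Z_1:=g_1+\xi$ and $Z_2:=g_2$ are jointly Gaussian and centered, with
\[
\operatorname{Var}(Z_1)=1+\sigma^2,\qquad \operatorname{Var}(Z_2)=1,\qquad \operatorname{Cov}(Z_1,Z_2)=\cos\theta,
\]
where independence of $\xi$ from $a$ is used. Their correlation is therefore $\rho=\cos\theta/\sqrt{1+\sigma^2}$.

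The key step is the classical Sheppard (Grothendieck) identity: for a centered bivariate Gaussian with correlation $\rho$, $\Pr(\operatorname{sign}Z_1\neq\operatorname{sign}Z_2)=\frac1\pi\arccos\rho$. The sign events are invariant under positive rescaling, so we may normalize $Z_1$ to unit variance without changing the probability. To prove the identity, write the normalized pair as $(\langle w,e_1\rangle,\langle w,e_2\rangle)$, where $w\sim\mathcal N(0,I_2)$ and $e_1,e_2$ are unit vectors in $\mathbb R^2$ at angle $\arccos\rho$. The direction of $w$ is uniform on the circle. The signs differ exactly when the line orthogonal to $w$ separates $e_1$ from $e_2$, and this happens on two arcs of total angle $2\arccos\rho$, giving probability $\arccos\rho/\pi$.

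Substituting $\rho=\cos(\pi\,\mathrm d_{\mathrm S}(x_1,x_2))/\sqrt{1+\sigma^2}$ gives the stated formula. Every coordinate has the same disagreement probability, so averaging over $i$ leaves it unchanged. The main obstacle is the reduction step: checking that adding independent noise to only one signal produces a jointly Gaussian pair with the stated correlation, and that rescaling preserves the sign events. After that, the result is the standard random-hyperplane computation.
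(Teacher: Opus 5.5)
Your proof is correct and follows essentially the same route as the paper: reduce to a single coordinate, compute the variances $1+\sigma^2$ and $1$ and the covariance $\langle x_1,x_2\rangle$ of the pre-quantization Gaussians to get $\rho=\cos(\pi\,\mathrm{d}_{\mathrm{S}}(x_1,x_2))/\sqrt{1+\sigma^2}$, then apply Sheppard's formula and average over $i$. The only difference is that you also sketch a proof of Sheppard's identity using the uniform direction of a planar Gaussian and note the tie and rescaling details, whereas the paper simply cites the identity.
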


The previous lemma shows that geodesic separation is transformed by the one-bit channel into a gap in expected Hamming disagreement. Hence the quantity
$\Delta_\sigma(c,\eta)=f_\sigma(c\eta)-f_\sigma(\eta)$ measures the observational margin between signals inside $\mathcal B(x_2,\eta)$ and signals outside \(\mathcal B(x_2,c\eta)\)-cap. This margin allows us to construct the following near--far test.

\begin{lemma}[One-bit near--far test lemma]
\label{lem:near_far_test}
    Fix a reference point \(x_0\in\mathbb S^{n-1}\). Let the testing threshold be $\tau_{\eta,c,\sigma}:=\frac{f_\sigma(\eta)+f_\sigma(c\eta)}{2}$,
and define the test function
$\phi_{x_0,\eta}(y;A):=
\mathbf 1\left\{\mathrm{d}_{\mathrm{H}}\left(y,\operatorname{sign}(Ax_0)\right)\ge\tau_{\eta,c,\sigma}\right\}$.
Then the following two testing error bounds hold in the annealed sense:
\begin{equation}
    \sup_{x:\ \mathrm{d}_{\mathrm{S}}(x,x_0)\le \eta}
\Pr_{A,\xi| x}
\left(\phi_{x_0,\eta}(y;A)=1\right)
\le
\exp\left(-\frac{m}{2}\Delta_\sigma(c,\eta)^2\right),
\end{equation}

and
\begin{equation}
\sup_{x:\ \mathrm{d}_{\mathrm{S}}(x,x_0)\ge c\eta}\Pr_{A,\xi| x}
\left(\phi_{x_0,\eta}(y;A)=0\right)
\le
\exp\left(-\frac{m}{2}\Delta_\sigma(c,\eta)^2\right).
\end{equation}
Here the probability is taken with respect to the joint randomness of the measurement matrix \(A\) and the noise \(\xi\), conditional on the fixed reference point \(x\). Hence the result is annealed, in the sense that the probability is averaged over the measurement matrix \(A\), rather than a conditional in \(A\) bound for a fixed realization of \(A\).
\end{lemma}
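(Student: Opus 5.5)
The plan is to write the empirical Hamming distance as an average of i.i.d. Bernoulli variables and apply Hoeffding's inequality, using \Cref{hammingdistance} to identify the mean. Fix $x\in\mathbb S^{n-1}$, and let $y=\operatorname{sign}(Ax+\xi)$ and $v=\operatorname{sign}(Ax_0)$. Because the rows $a_i$ of $A$ and the noise entries $\xi_i$ are independent across $i$, the indicators $Z_i:=\mathbf 1\{\operatorname{sign}(a_i^\top x+\xi_i)\neq\operatorname{sign}(a_i^\top x_0)\}$ are i.i.d. Bernoulli. Their common mean is obtained by applying \Cref{hammingdistance} with $x_1=x$ and $x_2=x_0$:
\[
\mathbb E Z_i=f_\sigma(\mathrm d_{\mathrm S}(x,x_0)).
\]
(If the lemma were not available, this is the per-row computation: $(a_i^\top x+\xi_i,\,a_i^\top x_0)$ is a centered bivariate Gaussian with correlation $\langle x,x_0\rangle/\sqrt{1+\sigma^2}$, and Sheppard's formula gives the disagreement probability as $\frac1\pi\arccos$ of that correlation.) The normalized Hamming distance is then $\mathrm d_{\mathrm H}(y,v)=\frac1m\sum_i Z_i$.

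Next I use the monotonicity of $f_\sigma$ on $[0,1]$, which holds because $\cos(\pi w)$ is decreasing and $\arccos$ is decreasing.
\begin{itemize}
\item Near case: if $\mathrm d_{\mathrm S}(x,x_0)\le\eta$, then $\mathbb E Z_i\le f_\sigma(\eta)$. Hence $\tau_{\eta,c,\sigma}-\mathbb E Z_i\ge \tfrac12\Delta_\sigma(c,\eta)$.
\item Far case: if $\mathrm d_{\mathrm S}(x,x_0)\ge c\eta$, then $\mathbb E Z_i\ge f_\sigma(c\eta)$. Hence $\mathbb E Z_i-\tau_{\eta,c,\sigma}\ge\tfrac12\Delta_\sigma(c,\eta)$.
\end{itemize}
We need $c\eta<1$, which holds under the standing assumption $s<1/u$ in \Cref{separation_gap}. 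Note that the margin to the threshold is half the gap $\Delta_\sigma(c,\eta)$.

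Hoeffding's inequality for $[0,1]$-valued variables gives $\Pr(\frac1m\sum Z_i-\mathbb EZ_i\ge t)\le \exp(-2mt^2)$. Taking $t=\Delta_\sigma(c,\eta)/2$, each one-sided error is at most
\[
\exp\!\left(-2m\cdot\tfrac{\Delta_\sigma(c,\eta)^2}{4}\right)=\exp\!\left(-\tfrac m2\Delta_\sigma(c,\eta)^2\right),
\]
which is the claimed rate. The event $\phi=1$ in the near case is $\{\frac1m\sum Z_i\ge\tau\}$, which lies inside a deviation of at least $t$ above the mean. The event $\phi=0$ in the far case is $\{\frac1m\sum Z_i<\tau\}$, which lies inside a deviation of at least $t$ below the mean. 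Neither bound depends on $x$ beyond the distance constraint, so taking the supremum over $x$ preserves them. All probabilities here are joint over $(A,\xi)$, which is exactly why the result is annealed.

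The step needing the most care is justifying that the $Z_i$ are independent with exactly the mean $f_\sigma(\mathrm d_{\mathrm S}(x,x_0))$. This requires $A$ and $\xi$ to be random jointly while $x$ and $x_0$ are held fixed; it follows from the i.i.d. Gaussian rows together with \Cref{hammingdistance}. Everything after that is routine concentration.
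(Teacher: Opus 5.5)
Your proposal is correct and matches the paper's proof: both write the empirical Hamming distance as an average of i.i.d.\ Bernoulli indicators, take the mean $f_\sigma(\mathrm{d}_{\mathrm{S}}(x,x_0))$ from Lemma~\ref{hammingdistance}, and use the monotonicity of $f_\sigma$ to get a margin of $\Delta_\sigma(c,\eta)/2$ from the midpoint threshold. Applying one-sided Hoeffding with that margin then gives $\exp\!\left(-\frac{m}{2}\Delta_\sigma(c,\eta)^2\right)$ in both the near and far cases, exactly as in the paper.
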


\begin{remark}
~\Cref{hammingdistance} and ~\Cref{lem:near_far_test} are closely related to the distance-preservation principle underlying binary stable embeddings. When $\sigma=0$, we have $f_0(w)=w$, so ~\Cref{hammingdistance} reduces to the classical random-hyperplane identity $\mathbb{E}[\mathrm{d}_{\mathrm{H}}(\operatorname{sign}(Ax_1),\operatorname{sign}(Ax_2))]=\mathrm{d}_{\mathrm{S}}(x_1,x_2)$, with $\Delta_0(c,\eta)=(c-1)\eta$, while ~\Cref{lem:near_far_test} yields concentration of the empirical Hamming distance and hence a near--far separation guarantee. Unlike standard binary stable embeddings, our posterior sampling analysis only requires such near--far separation over an approximate cover of a high-probability region of the prior, rather than uniform distance preservation over all signal pairs. For $\sigma>0$, $f_\sigma$($\cdot$) and $\Delta_\sigma(c,\eta)$ quantify the additional distortion induced by Gaussian noise before quantization. A visual explanation of this section can be found in the diagram below, the shaded area represents the cases in which we recover from errors.
\end{remark}
\begin{figure}[H]
    \centering
    \includegraphics[width=\textwidth]{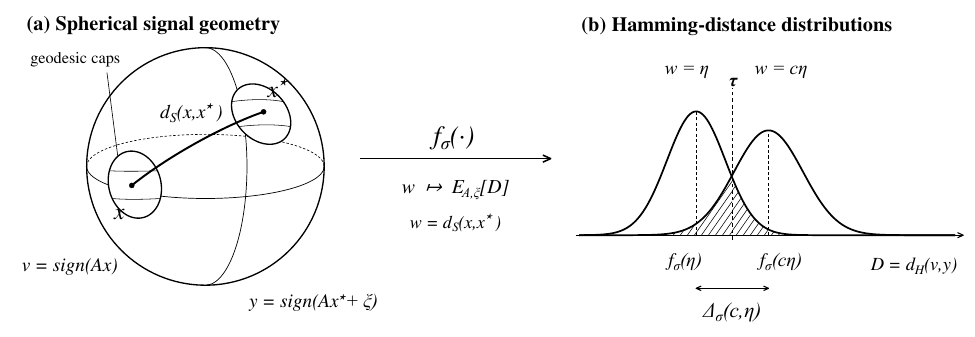}
\end{figure}

We now extend the near--far test to a general prior by covering its high probability region with finitely many \(\eta\)-caps. 
Applying the same far region control to each cap and taking a union bound over the covering centers yields the \(\log \mathrm{Cov}_{\eta,\delta}\) term in the sample complexity.

\begin{lemma}[Overall recovery upper bound for one-bit posterior sampling under a covering partition]
\label{lemma:orup}
Let \(Q\) be a probability distribution on \(\mathbb S^{n-1}\). Assume that there exist centers \(x_1,\ldots,x_N\in\mathbb S^{n-1}\) such that, with $ \mathcal C_j:=\{x\in\mathbb S^{n-1}: \mathrm{d}_{\mathrm{S}}(x,x_j)\le \eta\}, G:=\bigcup_{j=1}^N\mathcal C_j. $
Fix \(\alpha\in(0,\frac{1}{2})\), and assume that
\begin{equation}
    Q=(1-\alpha)Q_g+\alpha Q_b \qquad Q_g(G)=1, 
\end{equation}

Here \(Q_g\) denotes a covered good component of \(Q\), of mixture weight
\(1-\alpha\), contained in the high-probability region \(G\). Let $x^*\sim Q_g$, and let
$\widehat x\sim Q(\cdot| y,A)$
be the posterior sampling estimator based on the same distribution \(Q\). Suppose \(0<\eta<\frac{1}{2}\), and \(c\in(1,\frac{1}{\eta}-1)\). Then we have:
\begin{equation}
\Pr_{x^*\sim Q_g,A,\xi,\widehat{x}}\left(\mathrm{d}_{\mathrm{S}}(x^*,\widehat x)\ge(c+1)\eta\right)
\le
\frac{1}{1-\alpha}N\exp\left(-\frac{m}{2}\Delta_\sigma(c,\eta)^2\right).
\end{equation}
\end{lemma}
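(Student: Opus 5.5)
The plan follows the posterior-sampling argument of \citep{jalal2021instance}, with the near--far test of \Cref{lem:near_far_test} playing the role of the residual test. First, reduce from $Q_g$ to $Q$. Since $Q\ge(1-\alpha)Q_g$ as measures, for any event $E$ depending on $(x^*,A,\xi,\widehat x)$ we have
\begin{equation}
\Pr_{x^*\sim Q_g}(E)\le\frac{1}{1-\alpha}\Pr_{x^*\sim Q}\bigl(E\cap\{x^*\in G\}\bigr).
\end{equation}
This produces the prefactor $\frac{1}{1-\alpha}$. Under $x^*\sim Q$, the pair $(x^*,\widehat x)$ is exchangeable given $(y,A)$: both are independent draws from $Q(\cdot\mid y,A)$. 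This symmetry is what lets us bound the bad-sample probability by the type-II error of the test.

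Next, localize using the cover. If $x^*\in G$, pick $j$ with $x^*\in\mathcal C_j$. If $\mathrm{d}_{\mathrm S}(x^*,\widehat x)\ge(c+1)\eta$, the triangle inequality for $\mathrm{d}_{\mathrm S}$ gives $\mathrm{d}_{\mathrm S}(\widehat x,x_j)\ge c\eta$. Hence
\begin{equation}
E\cap\{x^*\in G\}\subseteq\bigcup_{j=1}^N\bigl\{x^*\in\mathcal C_j,\ \widehat x\in \mathcal F_j\bigr\},\qquad \mathcal F_j:=\{x:\mathrm{d}_{\mathrm S}(x,x_j)\ge c\eta\}.
\end{equation}
The condition $c<1/\eta-1$ ensures $(c+1)\eta<1$, so the event is nontrivial. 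For each $j$, write $\phi_j=\phi_{x_j,\eta}$. Split according to the test:
\begin{equation}
\Pr(x^*\in\mathcal C_j,\widehat x\in\mathcal F_j)\le \Pr(x^*\in\mathcal C_j,\phi_j=1)+\mathbb E\bigl[\mathbf 1\{\phi_j=0\}\,Q(\mathcal F_j\mid y,A)\bigr].
\end{equation}
The first term is at most $Q(\mathcal C_j)\,e^{-m\Delta^2/2}$ by the first bound of \Cref{lem:near_far_test}. For the second term, by exchangeability (Bayes' rule), $\mathbb E[\mathbf 1\{\phi_j=0\}Q(\mathcal F_j\mid y,A)]=\Pr_{x^*\sim Q}(x^*\in\mathcal F_j,\phi_j=0)$. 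The second bound of the lemma then gives at most $Q(\mathcal F_j)\,e^{-m\Delta^2/2}$. So each $j$ contributes at most $(Q(\mathcal C_j)+Q(\mathcal F_j))e^{-m\Delta^2/2}$.

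The main obstacle is the constant: this naive sum yields $\sum_j(Q(\mathcal C_j)+Q(\mathcal F_j))\le 1+N$, not $N$. To obtain exactly $N$, I would avoid splitting on the test. Instead I would bound directly
\begin{equation}
\Pr(x^*\in\mathcal C_j,\widehat x\in\mathcal F_j)=\mathbb E\bigl[Q(\mathcal C_j\mid y,A)\,Q(\mathcal F_j\mid y,A)\bigr]\le\mathbb E\bigl[\min\{Q(\mathcal C_j\mid y,A),Q(\mathcal F_j\mid y,A)\}\bigr].
\end{equation}
Then use $\min\{a,b\}\le a\,\mathbf 1\{\phi_j=1\}+b\,\mathbf 1\{\phi_j=0\}$. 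This gives the bound $\Pr(x^*\in\mathcal C_j,\phi_j=1)+\Pr(x^*\in\mathcal F_j,\phi_j=0)\le(Q(\mathcal C_j)+Q(\mathcal F_j))e^{-m\Delta^2/2}$. Since $\mathcal C_j\cap\mathcal F_j=\emptyset$ (because $c>1$), we have $Q(\mathcal C_j)+Q(\mathcal F_j)\le1$. Summing over $j$ gives $N e^{-m\Delta_\sigma(c,\eta)^2/2}$, and the reduction step multiplies this by $\frac{1}{1-\alpha}$, which is the claimed bound. The points to verify carefully are the product-of-posteriors identity and that the test depends only on $(y,A)$.
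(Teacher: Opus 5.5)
Your proof is correct, and it reaches the bound by a different route from the paper.

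\textbf{Your route.} You transfer from $Q_g$ to $Q$ once, globally, via $Q_g\le\frac{1}{1-\alpha}Q(\cdot\cap G)$. You then use that, under the matched prior, $x^*$ and $\widehat x$ are conditionally i.i.d.\ given $(y,A)$. This turns the far-region term into a type-II error $\Pr_{x^*\sim Q}(x^*\in\mathcal F_j,\phi_j=0)$ of the near--far test, evaluated on the prior itself. You also work directly with the overlapping caps $\mathcal C_j$ and a plain union bound.

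\textbf{The paper's route.} It keeps $x^*\sim Q_g$ throughout and replaces the caps by disjoint blocks $S_j=\mathcal C_j\setminus\bigcup_{i<j}\mathcal C_i$. It writes the posterior mass of $\mathcal F_j(c)$ through the marginal likelihoods $h_j^{\mathrm{in}},h_j^{\mathrm{far}},h_j^{\mathrm{mid}}$, and drops the middle region to obtain the two-group ratio $\Gamma_j^{\mathrm{far}}$. The mixture domination enters locally, as $\pi_j^{\mathrm{in}}h_j^{\mathrm{in}}\ge(1-\alpha)Q_g(S_j)h_{g,j}^{\mathrm{in}}$.

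\textbf{Comparison.} The core inequality is the same. With $a=\pi_j^{\mathrm{in}}h_j^{\mathrm{in}}$ and $b=\pi_j^{\mathrm{far}}h_j^{\mathrm{far}}$, the paper bounds $a\,\Gamma_j^{\mathrm{far}}=\frac{ab}{a+b}$ by $a\phi_j+b(1-\phi_j)$. That is your $\min$ trick in unnormalized form.
\begin{itemize}
\item Your exchangeability formulation is shorter and needs neither the partition nor the middle region, and it gets $N$ at once from $\mathcal C_j\cap\mathcal F_j=\varnothing$.
\item The paper's version makes the Bayes computation explicit. It then needs the pointwise counting bound $\mathbf 1_G+\sum_j\mathbf 1_{\mathcal F_j(c)}\le N$ to turn $Q(G)+\sum_jQ(\mathcal F_j(c))$ into $N$.
\end{itemize}

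\textbf{A correction to your commentary.} The naive split on $\phi_j$ is not lossy. It already gives the per-cap bound $(Q(\mathcal C_j)+Q(\mathcal F_j))e^{-m\Delta_\sigma(c,\eta)^2/2}\le e^{-m\Delta_\sigma(c,\eta)^2/2}$ by the same disjointness, hence $N$ after summing. The $1+N$ count would require bounding $\sum_jQ(\mathcal C_j)$ by $1$, which does not hold in general for overlapping caps. The detour through $\mathbb E[\min\{Q(\mathcal C_j\mid y,A),Q(\mathcal F_j\mid y,A)\}]$ is valid but yields exactly the same bound.
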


The previous lemma handles the matched prior finite cover case. To extend it to the general setting with distribution mismatch, the following lemma controls the probability loss under $\mathcal W_{\infty,\mathrm{geo}}$ mismatch through a R\'enyi type stability argument~\citep{van2014renyi}.

\begin{lemma}[R\'enyi stability under spherical $W_{\infty,\mathrm{geo}}$ mismatch]
\label{lem:renyi_onebit_mismatch_explicit}
Let \(R_0\) and \(S_0\) be two probability distributions defined on the unit sphere \(\mathbb S^{n-1}\). Assume that \(R_0\) and \(S_0\) satisfy the spherical Wasserstein-\(\infty\) mismatch condition $\mathcal W_{\infty,\mathrm{geo}}(R_0,S_0)\le \epsilon,$ where \(0<\epsilon\le 1\). Let \(D\) be a probability distribution on \(\mathbb S^{n-1}\). Let \(x^*\sim R_0, z^*\sim S_0\). Let \(\Pi\) be a coupling of \(R_0\) and \(S_0\) such that, for
\((x^*,z^*)\sim\Pi\), \(\mathrm{d}_{\mathrm{S}}(x^*,z^*)\le \epsilon\) almost surely. Let \(\widehat x\) and \(\widehat z\) be posterior samples with respect to
the model prior \(D\), conditioned on \((A,y)\) and \((A,u)\). Consider the noisy one-bit measurement model
\begin{equation}
    y=\operatorname{sign}(Ax^*+\xi),
    \qquad
    u=\operatorname{sign}(Az^*+\xi'),
\end{equation}
where \(A_{ij}\sim\mathcal N(0,1)\), \(\xi,\xi'\sim \mathcal N(0,\sigma^2 I_m)\), and \(A\), \(\xi\), and \(\xi'\)
are mutually independent.
Let $r_\epsilon=2\sin\left(\frac{\pi\epsilon}{2}\right)$.
Then, for any $d>0$ and any parameter $
0<\lambda\le
\min\left\{
1,\,
\frac{\sigma^2}{3r_{\epsilon}^2}
\right\}
$, we have
\begin{equation}
\label{eq:explicit_renyi_mismatch_bound}
    \Pr\left(
    \mathrm{d}_{\mathrm{S}}(x^*,\widehat x)\ge d+\epsilon
    \right)
    \le
    \left(
    1-\frac{\lambda(1+\lambda)r_\epsilon^2}{\sigma^2}
    \right)^{-\frac{m}{2(1+\lambda)}}
    \Pr\left(
    \mathrm{d}_{\mathrm{S}}(z^*,\widehat z)\ge d
    \right)^{\frac{\lambda}{1+\lambda}}.
\end{equation}
\end{lemma}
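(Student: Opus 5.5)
The plan is a change-of-measure argument. I compare two joint laws on the same space, one generated by the true pair $(x^*,y)$ and one by the surrogate pair $(z^*,u)$. Hölder's inequality then transfers the far-event probability from the surrogate to the truth, at the cost of a Rényi moment. That moment I bound by data processing through the sign map followed by a Gaussian integral over $A$.

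\textbf{Step 1 (reduce to a common event).} Draw $(x^*,z^*)\sim\Pi$, so $\mathrm{d}_{\mathrm{S}}(x^*,z^*)\le\epsilon$ almost surely. Define two joint laws of $(x^*,z^*,A,w,\widehat w)$:
\begin{itemize}
\item under $\mathbb P$, $w=y=\operatorname{sign}(Ax^*+\xi)$ and $\widehat w\sim D(\cdot\mid A,y)$;
\item under $\mathbb Q$, $w=u=\operatorname{sign}(Az^*+\xi')$ and $\widehat w\sim D(\cdot\mid A,u)$.
\end{itemize}
Both laws use the same prior on $(x^*,z^*)$, the same law of $A$, and the same posterior kernel $D(\cdot\mid A,\cdot)$. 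They differ only in the conditional law of $w$ given $(x^*,z^*,A)$. By the triangle inequality for $\mathrm{d}_{\mathrm{S}}$,
\[
\{\mathrm{d}_{\mathrm{S}}(x^*,\widehat w)\ge d+\epsilon\}\subseteq E:=\{\mathrm{d}_{\mathrm{S}}(z^*,\widehat w)\ge d\}.
\]
Hence the left-hand side of \eqref{eq:explicit_renyi_mismatch_bound} is at most $\mathbb P(E)$, and $\mathbb Q(E)=\Pr(\mathrm{d}_{\mathrm{S}}(z^*,\widehat z)\ge d)$.

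\textbf{Step 2 (Hölder).} The likelihood ratio is $L=\frac{d\mathbb P}{d\mathbb Q}=\prod_{i}\Pr(w_i\mid x^*,a_i)/\Pr(w_i\mid z^*,a_i)$, since the kernel factors cancel. This ratio is finite because the probit likelihood is positive. With exponents $1+\lambda$ and $(1+\lambda)/\lambda$,
\[
\mathbb P(E)=\mathbb E_{\mathbb Q}[L\mathbf 1_E]\le \big(\mathbb E_{\mathbb Q}L^{1+\lambda}\big)^{\frac{1}{1+\lambda}}\mathbb Q(E)^{\frac{\lambda}{1+\lambda}}.
\]
Hölder is applied on the full joint space including $A$, which is why the bound is annealed.

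\textbf{Step 3 (Rényi moment).} Conditionally on $(x^*,z^*,A)$, the moment factorizes over rows. Each row contributes $\sum_{w_i=\pm1}p_i^{1+\lambda}q_i^{-\lambda}=\exp(\lambda D_{1+\lambda}(p_i\|q_i))$, where $p_i,q_i$ are the sign laws induced by the two rows. By the data-processing inequality for Rényi divergence under the sign map, this is at most the corresponding quantity for $\mathcal N(a_i^\top x^*,\sigma^2)$ versus $\mathcal N(a_i^\top z^*,\sigma^2)$, namely
\[
\exp\!\Big(\tfrac{\lambda(1+\lambda)}{2\sigma^2}\big(a_i^\top(x^*-z^*)\big)^2\Big).
\]
Since $A$ is independent of $(x^*,z^*)$ with i.i.d.\ rows, $a_i^\top(x^*-z^*)\sim\mathcal N(0,s^2)$, where $s=\|x^*-z^*\|_2=2\sin(\pi\,\mathrm{d}_{\mathrm{S}}(x^*,z^*)/2)\le r_\epsilon$.

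The Gaussian identity $\mathbb E e^{tg^2}=(1-2ts^2)^{-1/2}$ then gives, per row, $(1-\lambda(1+\lambda)s^2/\sigma^2)^{-1/2}$, which is at most $(1-\lambda(1+\lambda)r_\epsilon^2/\sigma^2)^{-1/2}$. This requires $\lambda(1+\lambda)r_\epsilon^2/\sigma^2<1$. The assumption $\lambda\le\min\{1,\sigma^2/(3r_\epsilon^2)\}$ guarantees it, since then $\lambda(1+\lambda)r_\epsilon^2/\sigma^2\le 2/3$.

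Multiplying over the $m$ rows and averaging over $\Pi$ gives
\[
\mathbb E_{\mathbb Q}L^{1+\lambda}\le(1-\lambda(1+\lambda)r_\epsilon^2/\sigma^2)^{-m/2}.
\]
Raising this to the power $1/(1+\lambda)$ and combining with Steps 1 and 2 yields \eqref{eq:explicit_renyi_mismatch_bound}.

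\textbf{Main obstacle.} I expect the main care point to be Step 3: bounding the probit Rényi moment cleanly. Direct bounds on $\Phi$-ratios are messy, so I would first try the data-processing route, lifting to the pre-sign Gaussian observations. I also need to verify that taking the expectation over $A$ inside, with $A$ independent of the coupling, legitimately produces the factor $(1-2ts^2)^{-1/2}$ row by row.
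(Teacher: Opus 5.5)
Your proof is correct and follows the paper's argument. Both use the triangle-inequality reduction to the event $\{\mathrm{d}_{\mathrm{S}}(z^*,\widehat w)\ge d\}$, H\"older with exponents $1+\lambda$ and $(1+\lambda)/\lambda$, R\'enyi data processing through the sign map to the pre-quantization Gaussians, and the $\chi^2$ moment generating function with $\|x^*-z^*\|_2\le r_\epsilon$. The one difference is that you apply H\"older once on the full joint space including the coupled pair $(x^*,z^*)$. The paper instead conditions on each fixed pair $(x,z)$ and then needs Jensen's inequality (concavity of $t\mapsto t^{\lambda/(1+\lambda)}$) to average over $\Pi$; your organization makes that final step unnecessary.
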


The proof of \Cref{cor:main} is obtained by combining
\Cref{lemma:orup,lem:cov_lemma,lem:renyi_onebit_mismatch_explicit}.
First, \Cref{lem:cov_lemma} reduces the general mismatched priors
\((R,P)\) to their high probability components \((R',P')\), losing only
\(2\delta\) probability mass. Since \(R'\) and \(P'\) are coupled within
geodesic distance
\(\varepsilon_\delta=\rho/\delta^{1/p}\), the covering radius is enlarged
from \(\eta\) to
\(\bar\eta=\eta+\varepsilon_\delta\).
Then \Cref{lemma:orup} gives the posterior sampling recovery bound for the
covered good component under the learned prior distribution \(P\).
Finally, \Cref{lem:renyi_onebit_mismatch_explicit} transfers this bound from
\(P'\) to \(R'\), adding the R\'enyi stability factor and an extra
\(\varepsilon_\delta\) in the recovery radius.
At this point, the error probability is controlled in expectation over the
random matrix \(A\). Hence, for any \(\beta\in(0,1)\), Markov's inequality
implies that the corresponding conditional error probability, given \(A\), is at
most \(1/\beta\) times this expectation with probability at least \(1-\beta\)
over \(A\).
Combining these terms and simplifying yields Lemma~\ref{lem:main}. Theorem~\ref{cor:main} then follows by specializing the auxiliary
parameters in Lemma~\ref{lem:main} and simplifying the resulting expression under the small-mismatch condition.

For the specific proof methods and processes, please refer to \Cref{upper_bound_proofs} in Appendix.
\subsection{Lower bound}
\label{lower_bound}

The one-bit observation model induces a stronger information bottleneck than the linear sensing model. In the Gaussian linear model, each real valued measurement may carry an amount of information that scales as a logarithmic signal-to-noise ratio. In contrast, under one-bit quantization, the observation $y\in\{-1,+1\}^m$ is a binary string, and therefore the total information transmitted by the sensing process is intrinsically limited by the entropy of $m\log 2$ nats. Consequently, the mutual information argument used for continuous compressed sensing must be replaced by a channel bound adapted to the one-bit model. This section formalizes this replacement and combines it with a spherical Fano argument to obtain a lower bound on the number of measurements. Throughout this section, logarithms are natural logarithms.

\begin{theorem}[Information lower bound for one-bit compressed sensing]%{theorem}{onebitspherethm}
\label{thm:one_bit_sphere_lower_bound}
Let $R$ be a prior distribution defined on the high dimensional unit Euclidean sphere $\mathbb{S}^{n-1}$, where $x^* \sim R$. Define the recovery accuracy \(\eta\in(0,\frac{1}{3})\), suppose that $\delta<0.1$ and that there exists a recovery algorithm capable of outputting $\widehat{x} \in \mathbb{S}^{n-1}$ based solely on the observation $y$ and the matrix $A$, such that:
\begin{equation}
    \Pr\left(d_{S}(\widehat{x}, x^*) \le \eta\right) \ge 1-\delta.
\end{equation}
If $A$ is a deterministic matrix, then the number of measurements $m$ must satisfy:
\begin{equation}
    m \ge \frac{0.072\pi\sigma^2}{2(\log 2)\|A\|_{2,\infty}^2} \left( \log \mathrm{Cov}_{3\eta, 4\delta}(R) + \log(6\delta)-25 \right).
\end{equation}
If $A_{ij}\sim \mathcal N(0,1)$, then the number of measurements $m$ must satisfy:
\begin{equation}
    m \ge \frac{0.072\pi}{2(\log 2)\arcsin\left(\frac{1}{1+\sigma^2}\right)} \left( \log \mathrm{Cov}_{3\eta, 4\delta}(R) + \log(6\delta)-25\right).
\end{equation}
\end{theorem}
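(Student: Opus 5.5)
We follow the Fano-type strategy of \citep{jalal2021instance}, adapted to the sphere and to the one-bit channel. The argument has three parts. First, from the prior $R$ we extract a large, well-separated set of points that carries most of the mass, and use it to define a discrete ``codebook'' signal $X$ that a successful estimator must identify. Second, we upper bound $I(X;y\mid A)$ through the noisy one-bit channel. Third, Fano's inequality turns the success probability $1-\delta$ into a lower bound on that mutual information, hence on $m$.

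\emph{Packing step.} Let $k:=\mathrm{Cov}_{3\eta,4\delta}(R)$. We greedily select a maximal $2\eta$-separated set of centers $z_1,\dots,z_M$ inside a high-probability region. By maximality, the $3\eta$-caps (or $2\eta$-caps) around the $z_j$ cover that region, so $M$ is comparable to $k$. We refine this by a mass-peeling argument in the style of Jalal et al.:
\begin{itemize}
\item we repeatedly extract caps of radius $\eta$ carrying mass at least $\sim\delta/k$;
\item this keeps $\log M\ge \log k+\log(c\delta)$, up to additive constants, while retaining mass at least $1-4\delta$ on their union.
\end{itemize}
The additive constants $\log(6\delta)-25$ are the residue of this bookkeeping and of the separation factor $3$. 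Let $J$ be the index of the cap containing $x^*$. Because the centers are $2\eta$-separated in $\mathrm{d}_{\mathrm S}$ and the recovery guarantee has radius $\eta$ in $\mathrm{d}_{\mathrm S}$, the triangle inequality lets the nearest center to $\widehat x$ decode $J$ correctly with probability at least $1-O(\delta)$. Since the caps are not uniform in mass, we apply Fano with $H(J)$ in place of $\log M$, and lower bound $H(J)$ by $\log M$ minus constants by discarding caps of tiny mass.

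\emph{Channel step.} By data processing, $I(J;y\mid A)\le I(x^*;y\mid A)\le\sum_{i=1}^m I(x^*;y_i\mid A)$, since the $y_i$ are conditionally independent given $(x^*,A)$. Each term is a binary-output probit channel. We bound it by the chi-square/KL distance to a reference distribution: with $p_i(x)=\Phi(a_i^\top x/\sigma)$ and $q_i=\tfrac12$,
$$I(x^*;y_i)\le \mathbb E\,\mathrm{KL}\big(\mathrm{Ber}(p_i(x^*))\,\|\,\mathrm{Ber}(\tfrac12)\big)\le 2\,\mathbb E(p_i-\tfrac12)^2/\log 2 .$$
For deterministic $A$, the bound $|\Phi(t)-\tfrac12|\le |t|/\sqrt{2\pi}$ together with $|a_i^\top x|\le\|A\|_{2,\infty}$ gives a per-measurement term of order $\|A\|_{2,\infty}^2/(\pi\sigma^2)$. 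We then combine it with the trivial $\log 2$ cap to get a bound proportional to $2(\log 2)\|A\|_{2,\infty}^2/(0.072\pi\sigma^2)$ after tracking constants. For Gaussian $A$, we instead average over $a_i$ for fixed $x^*$:
\begin{itemize}
\item $a_i^\top x^*\sim\mathcal N(0,1)$;
\item Sheppard's formula gives $\mathbb E[\mathrm{sign}(g+\xi)\,\mathrm{sign}(g'+\xi')]=\tfrac{2}{\pi}\arcsin\!\big(\tfrac{1}{1+\sigma^2}\big)$ for two independent noise copies;
\item hence $\mathbb E(2p_i-1)^2=\tfrac{2}{\pi}\arcsin\!\big(\tfrac1{1+\sigma^2}\big)$, which produces exactly the $\arcsin$ factor in the statement.
\end{itemize}

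\emph{Conclusion and main obstacle.} Fano gives $I(J;y\mid A)\ge (1-O(\delta))H(J)-\log 2$. Combining this with $m\cdot(\text{per-bit bound})$ and rearranging yields both displayed inequalities. The conditioning on $A$ is kept throughout; in the random case $A$ is independent of $x^*$, so $I(J;y,A)=I(J;y\mid A)$. The main obstacle is the packing step. We must convert $\mathrm{Cov}_{3\eta,4\delta}$ into a well-separated set that an $\eta$-accurate estimator decodes, with $H(J)\gtrsim\log\mathrm{Cov}_{3\eta,4\delta}+\log\delta$, while handling nonuniform cap masses on the sphere. We would first try the iterative construction from \citep{jalal2021instance}: repeatedly remove the mass of $3\eta$-caps, and show that if fewer than roughly $\delta k$ disjoint heavy caps existed, a cover with fewer than $k$ caps would reach mass $1-4\delta$, contradicting minimality. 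Only the metric changes, to $\mathrm{d}_{\mathrm S}$, and the triangle inequality still holds there.
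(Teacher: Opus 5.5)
\textbf{Genuine gap in the packing/Fano step.} The problem is structural, not bookkeeping. You run Fano and the channel bound under the full prior $R$, and claim that after discarding light caps $H(J)\ge\log\mathrm{Cov}_{3\eta,4\delta}(R)+\log(c\delta)-O(1)$. This is false, and no lower bound of that shape on the full-prior information can hold. Take $R$ with a point mass $1-5\delta$ at $x_0$ and mass $5\delta$ spread uniformly over $x_1,\dots,x_{M'}$, with all points more than $6\eta$ apart (small $\eta$, large $n$). Then $\mathrm{Cov}_{3\eta,4\delta}(R)=1+\lceil M'/5\rceil$. Yet $I(J;y\mid A)\le I(x^*;y\mid A)\le H(x^*)=h(5\delta)+5\delta\log M'$, where $h$ is the binary entropy. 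For $\delta<0.0144$ and $M'$ large this is below $0.072\log\mathrm{Cov}_{3\eta,4\delta}(R)-O(1)$. It is the heavy cap, not the light ones, that destroys $H(J)$. The statement survives in this example only because the capacity bound of Lemma~\ref{lemma:oneupper} holds for \emph{every} prior. It can therefore be applied to the conditional law of the spread part, where the estimator must succeed with probability at least $0.8$.

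That is the missing idea: Fano and the channel bound must both be run under a conditional prior $R''$ with three properties. It has mass $>10\delta$, so the conditional error is $<0.1$. It avoids the dominant mass. Its own covering number is still $\gtrsim\delta\,\mathrm{Cov}_{3\eta,4\delta}(R)$. The paper follows this route. It conditions $R$ on the complement of all but the lightest ball of a minimal $(3\eta,10\delta)$-cover. It then applies Lemma~\ref{lemma:spherefano_new} with $\tau=0.1$, giving $I\ge 0.072\log\mathrm{Cov}_{3\eta,0.4}(R'')-1.8$; this is where $0.072=0.9\cdot0.1\cdot0.8$ and $25=1.8/0.072$ come from. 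A counting argument supplies the factor $6\delta$, i.e.\ the $\log(6\delta)$. If you adopt this, test the choice of component on the example above. There $N(R,10\delta)$ is a single ball, so that recipe gives $S=\emptyset$ and $R''=R$, and the step $|N(R'',0.4)|+|N(R,10\delta)|\ge|N(R,4\delta)|$ fails as written. The component must be chosen so that it really excludes the dominant mass (here, the spread part itself).

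\textbf{Further problems with the codebook, and two channel fixes.} With $2\eta$-separated centers and $\eta$-caps you only know $\mathrm{d}_{\mathrm{S}}(\widehat x,z_J)\le 2\eta$, and another center may lie at distance $2\eta$ from $z_J$. Nearest-center decoding therefore needs separation $>4\eta$. A maximal set at that separation is only comparable to $\mathrm{Cov}_{4\eta,\cdot}\le\mathrm{Cov}_{3\eta,\cdot}$, which is the wrong direction. List decoding instead costs $\log L$, which can be of order $n$.

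The peeling step is also impossible for spread priors. Let $R$ be uniform on $\mathbb S^{n-1}$ and let $\mu(r)$ be the normalized area of an $r$-cap. Then $\mathrm{Cov}_{3\eta,4\delta}(R)\le 1/\mu(1.5\eta)$, while $\mu(\eta)/\mu(1.5\eta)\to 0$ exponentially in $n$. So no $\eta$-cap has mass $\gtrsim\delta/\mathrm{Cov}_{3\eta,4\delta}(R)$, and nothing is extracted. The paper avoids codebooks entirely: Lemma~\ref{lemma:spherefano_new} discretizes with a $\gamma$-net, quantizes the probabilities, and invokes Lemma~\ref{lemma:B3_metric}, which bounds $I(x;\widehat x)$ directly by an approximate covering number.

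Your channel step does match Lemma~\ref{lemma:oneupper}, up to two fixes. First, $\log 2-h(p)\le\frac{2}{\log 2}(p-\frac12)^2$ holds in nats but is weaker than the sharp $\log 2-h(p)\le 4\log 2\,(p-\frac12)^2$ (equality at $p\in\{0,1\}$), which is what yields the constant $\frac{2m\log 2}{\pi}$ in the statement. Second, in Sheppard's formula both factors must use the same $g=a_i^\top x^*$; with independent $g,g'$ the expectation is $0$.
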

\begin{remark}[Comparison between the upper and lower bounds]
We now compare the upper bound in \Cref{cor:main} with the lower bound in
\Cref{thm:one_bit_sphere_lower_bound}. Then \Cref{cor:main} shows that when
$m\gtrsim\frac{\log \mathrm{Cov}_{\eta,\delta}(R)}{\Delta_\sigma(c,\eta)^2}$, posterior sampling achieves accurate recovery with high probability. In contrast, \Cref{thm:one_bit_sphere_lower_bound} shows that any estimator
that achieves \(O(\eta)\) accurate recovery with constant success probability
must satisfy
$m\gtrsim\log \mathrm{Cov}_{3\eta,4\delta}(R)$,
where the constants may depend on the one-bit channel and the noise level.

Therefore, the two bounds nearly match in the leading term $\log \mathrm{Cov}_{\eta,\delta}(R)$ depending on the prior $R$.
This shows that the approximate covering number is the correct distributional
complexity measure for one-bit compressed sensing with arbitrary data priors.
In particular, the complexity term appearing in the posterior sampling upper
bound generally cannot be removed or substantially improved.

The remaining difference is mainly the factor
\(\Delta_\sigma(c,\eta)^{-2}\) in the upper bound. This factor comes from
the one-bit observation process. Since the measurements only retain the signs,
two signals separated on the sphere can be distinguished only through the gap
between their expected Hamming distances. This gap is exactly measured by
\(\Delta_\sigma(c,\eta)\). If $c$ is chosen as a fixed numerical constant, for instance $c=2$, then in the small-noise regime, we have $\Delta_\sigma(c,\eta) \asymp \eta$. Therefore, the upper bound requires sufficiently many measurements for this Hamming-distance gap to concentrate.
Thus, the upper and lower bounds are nearly matched in terms of \(\log \mathrm{Cov}_{\eta,\delta}(R)\). 
\end{remark}

\begin{remark}
    For the lower bound, we explicitly consider arbitrary fixed deterministic sensing matrices and i.i.d. Gaussian random matrices, which already cover many commonly used sensing settings. In particular, since the deterministic result holds for any fixed \(A\), it also allows \(A\) to be optimized using knowledge of the prior distribution \(R\), provided that \(A\) is independent of the realized signal \(x^\ast\). Our current analysis does not establish a universal lower bound for more general randomized or adaptive sensing strategies. In such settings, the sensing matrix may depend on additional randomness or side information correlated with the realized signal, in which case the independence argument used in ~\Cref{lemma:4.3} no longer applies directly and a separate information-theoretic analysis is required.
\end{remark}
Following the method in~\citep{jalal2021instance}, we establish an upper bound on the mutual information conveyed by the one-bit observation channel. We derive our first lemma by starting from the mutual information conveyed between the observation $y$ and the ground truth $x^*$.

\begin{lemma}[Mutual Information Bound]%{lemma}{oneupper}
\label{lemma:oneupper}
Assume that the measurement process is $y = \operatorname{sign}(Ax^*+\xi)$, where $\xi\sim\mathcal N(0,\sigma^2I_m)$. If $A$ is a deterministic matrix, we have
\begin{equation}
    I(y; x^*) \le \frac{2m\log 2}{\pi \sigma^2} \|A\|_{2,\infty}^2.
\end{equation}
If $A_{ij}\sim\mathcal N(0,1)$, we have
\begin{equation}
   I(y; x^* | A) \le \frac{2m \log 2}{\pi} \arcsin\left(\frac{1}{1+\sigma^2}\right).
\end{equation}
\end{lemma}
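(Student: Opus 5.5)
The plan is to bound the mutual information measurement by measurement, using that the one-bit channel is a product of binary channels once $x^*$ (and $A$) is fixed. Conditionally on $x^*$ and $A$, the coordinates $y_i=\operatorname{sign}(a_i^\top x^*+\xi_i)$ are independent, with $\Pr(y_i=+1\mid x^*,A)=p_i(x^*):=\Phi(a_i^\top x^*/\sigma)$. Hence, for fixed $A$,
\[
I(y;x^*)=H(y)-\sum_{i=1}^m H(y_i\mid x^*)\le \sum_{i=1}^m \bigl(H(y_i)-H(y_i\mid x^*)\bigr)=\sum_{i=1}^m I(y_i;x^*).
\]
I will then use the variational (golden-formula) bound $I(y_i;x^*)\le \mathbb E_{x^*}\,\mathrm{KL}\bigl(\mathrm{Ber}(p_i(x^*))\,\|\,\mathrm{Ber}(1/2)\bigr)$, with reference output distribution uniform on $\{\pm1\}$. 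This KL divergence equals $\log 2-h(p_i)$, where $h$ is the binary entropy in nats.

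The key scalar inequality, which I expect to be the only real obstacle, is
\[
\log 2-h(p)\le (\log 2)\,(1-2p)^2 .
\]
I would prove it by writing $p=(1+t)/2$ and expanding:
\[
\log 2-h(p)=\tfrac12\bigl[(1+t)\log(1+t)+(1-t)\log(1-t)\bigr]=\sum_{k\ge1}\frac{t^{2k}}{2k(2k-1)}.
\]
Since $|t|\le1$, this is at most $t^2\sum_{k\ge1}\frac{1}{2k(2k-1)}=t^2\log 2$, because $\sum_{k\ge1}\bigl(\tfrac{1}{2k-1}-\tfrac{1}{2k}\bigr)=\log 2$. Applied to each measurement, this gives
\[
I(y_i;x^*)\le (\log 2)\,\mathbb E\bigl[(2\Phi(a_i^\top x^*/\sigma)-1)^2\bigr].
\]

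For deterministic $A$, I use $|2\Phi(s)-1|\le \sqrt{2/\pi}\,|s|$, which holds because the Gaussian density is at most $1/\sqrt{2\pi}$. Together with Cauchy--Schwarz and $\|x^*\|_2=1$, this yields $(2\Phi(a_i^\top x^*/\sigma)-1)^2\le \frac{2}{\pi\sigma^2}\|a_i\|_2^2\le \frac{2}{\pi\sigma^2}\|A\|_{2,\infty}^2$. Summing over $i$ gives the first claim.

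For Gaussian $A$ independent of $x^*$, I apply the same argument conditionally on $A$ and average, so that $I(y;x^*\mid A)\le(\log 2)\sum_i \mathbb E\,(2\Phi(g/\sigma)-1)^2$. Here $g=a_i^\top x^*\sim\mathcal N(0,1)$ for every unit vector $x^*$, by rotational invariance. To evaluate this expectation, write $2\Phi(g/\sigma)-1=\mathbb E_z[\operatorname{sign}(g+\sigma z)\mid g]$ with $z\sim\mathcal N(0,1)$. Squaring with an independent copy $z'$ gives $\mathbb E[\operatorname{sign}(g+\sigma z)\operatorname{sign}(g+\sigma z')]$. The pair $(g+\sigma z,\,g+\sigma z')$ is centered Gaussian with correlation $1/(1+\sigma^2)$, so Sheppard's formula gives the value $\frac{2}{\pi}\arcsin\!\bigl(\frac{1}{1+\sigma^2}\bigr)$. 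Multiplying by $m\log 2$ gives the second claim.
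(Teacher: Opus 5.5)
Your proposal is correct and follows the paper's proof essentially step for step: subadditivity over the conditionally independent measurements, the bound $\log 2 - h(p)\le 4\log 2\,(p-\tfrac12)^2$, the mean-value bound $|\Phi(t)-\tfrac12|\le |t|/\sqrt{2\pi}$ for deterministic $A$, and an auxiliary-Gaussian computation giving $\mathbb E[(\Phi(Z/\sigma)-\tfrac12)^2]=\frac{1}{2\pi}\arcsin\bigl(\frac{1}{1+\sigma^2}\bigr)$ for Gaussian $A$. The only differences are that you prove the binary-entropy inequality via its power series, where the paper simply asserts it as basic, and that you evaluate the last expectation through Sheppard's sign-correlation formula rather than the paper's bivariate orthant probability; these two evaluations are equivalent.
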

Next, utilizing the properties of Markov chains, we express the mutual information between $x^*$ and $\widehat{x}$. Here, we directly refer to the conclusion in~\citep{jalal2021instance} and obtain that if $A$ is a Gaussian matrix, we have $I(x^*; \widehat{x}) \le I(y; x^*|A)$; if $A$ is a fixed matrix, we have $I(x^*; \widehat{x}) \le I(y; x^*)$ (see Lemma~\ref{lemma:4.3} for details).

We next establish a spherical Fano type inequality, which converts the high probability recovery guarantee into a lower bound on the mutual information between the signal and its estimate.

\begin{lemma}[Variant of spherical Fano inequality]%{lemma}{spherefano}
\label{lemma:spherefano_new}
Let $(x, \widehat{x})$ be random variables with a joint distribution on the unit Euclidean sphere $\mathbb{S}^{n-1} \times \mathbb{S}^{n-1}$, where $x \sim R$, and suppose that the estimator satisfies
\begin{equation}
    \Pr\left(d_{S}(x, \widehat{x}) \le \eta\right) \ge 1-\delta.
\end{equation}
Assuming $\delta$ is sufficiently small, then for any $\tau \in (0, 1-2\delta)$, the following holds:
\begin{equation}
    (1-\delta)\tau(1-2\delta)\log \mathrm{Cov}_{3\eta, \tau+3\delta}(R) \le I(x; \widehat{x}) + 2(1-\delta).
\end{equation}
\end{lemma}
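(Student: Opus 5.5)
The plan is to prove a ``small-ball'' version of Fano's inequality. We split the possible estimates $\widehat x$ into \emph{heavy} points, whose $\eta$-cap carries a lot of $R$-mass, and \emph{light} points. Success at a heavy point is charged to the covering number. Success at a light point is charged to the mutual information through a change of measure from the joint law to the product law.

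\textbf{Heavy points.} Write $M:=\mathrm{Cov}_{3\eta,\tau+3\delta}(R)$ and assume $M\ge 3$; otherwise the claim is trivial, since its right-hand side is at least $2(1-\delta)$ and $\log M$ is small. Set the threshold $q:=1/(M-1)$ and call $z\in\mathbb S^{n-1}$ heavy if $R(\mathcal B(z,\eta))\ge q$. Take a maximal family of heavy points $z_1,\dots,z_k$ whose $\eta$-caps are pairwise disjoint. Disjointness gives $kq\le 1$, so $k\le M-1<M$.

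By maximality, every heavy $z$ has $\mathcal B(z,\eta)$ meeting some $\mathcal B(z_i,\eta)$. The triangle inequality for $\mathrm d_{\mathrm S}$ then gives $\mathcal B(z,\eta)\subset\mathcal B(z_i,3\eta)$. Since $k<M$, the definition of the approximate covering number forces
\[
R\Bigl(\textstyle\bigcup_{i=1}^k\mathcal B(z_i,3\eta)\Bigr)<1-\tau-3\delta .
\]

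\textbf{Splitting the success event.} Let $E:=\{\mathrm d_{\mathrm S}(x,\widehat x)\le\eta\}$. On $E\cap\{\widehat x\text{ heavy}\}$ we have $x\in\mathcal B(\widehat x,\eta)\subset\bigcup_i\mathcal B(z_i,3\eta)$, so $\Pr(E,\widehat x\text{ heavy})<1-\tau-3\delta$. Combined with $\Pr(E)\ge1-\delta$, this gives
\[
\Pr(E,\ \widehat x\text{ light})>\tau+2\delta .
\]

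\textbf{Change of measure.} Let $F:=E\cap\{\widehat x\text{ light}\}$. Under the product measure $R\otimes P_{\widehat x}$, with $x$ independent of $\widehat x$, we have $\Pr(F)\le\sup_{z\text{ light}}R(\mathcal B(z,\eta))<q$. Data processing for KL divergence, applied to the indicator of $F$, gives
\[
I(x;\widehat x)=D\bigl(P_{x,\widehat x}\,\big\|\,R\otimes P_{\widehat x}\bigr)\ge d_{\mathrm{KL}}(p\,\|\,q')\ge p\log\frac1{q'}-\log 2 ,
\]
where $p=P_{x,\widehat x}(F)$ and $q'=(R\otimes P_{\widehat x})(F)$. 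Hence
\[
I(x;\widehat x)\ge(\tau+2\delta)\log(M-1)-\log 2 .
\]
Using $\log(M-1)\ge\log M-\log 2$ and $(\tau+2\delta)\ge(1-\delta)\tau(1-2\delta)$, the stated inequality follows, since $(\tau+2\delta+1)\log 2\le 2(1-\delta)$ for small $\delta$ and $\tau<1$.

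\textbf{Main obstacle.} The delicate step is measure-theoretic. We must check that the light/heavy sets and the event $F$ are measurable, and that data processing for $I(x;\widehat x)$ holds for general, possibly non-discrete, joint laws on the sphere. Measurability follows because $z\mapsto R(\mathcal B(z,\eta))$ is upper semicontinuous, since the caps are closed. Data processing follows from the Donsker--Varadhan or partition definition of mutual information. The covering-count step is elementary once the $3\eta$ enlargement is in place. Note also that the bound in the final step needs $\tau+2\delta\ge(1-\delta)\tau(1-2\delta)$, which always holds, so our version is slightly stronger than the stated inequality.
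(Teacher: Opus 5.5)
Your argument is correct, and it takes a genuinely different route from the paper.

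The paper never works with the continuous joint law directly. It proceeds in four steps:
\begin{enumerate}
\item It snaps $x$ and $\widehat x$ to a $\gamma$-net, using data processing.
\item It rounds the resulting probabilities down to multiples of a step $\alpha$, so that the discrete Fano-type lemma adapted from Jalal et al.\ (Lemma~\ref{lemma:B3_metric}) applies.
\item It pays for the rounding twice: a factor $(1-\delta)$ via concavity of mutual information in the input law, and an extra $\delta$ of failure probability.
\item It gets a cover of radius $2(\eta+2\gamma)$ for the discretized law, returns to $R$ at radius $2\eta+5\gamma$, and sets $\gamma=\eta/5$ to land on $3\eta$.
\end{enumerate}

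You instead prove a continuum small-ball Fano inequality in one pass. The threshold $q=1/(M-1)$ and a maximal disjoint family of heavy caps (a Vitali-type packing) give the $3\eta$ enlargement directly from $\eta+\eta+\eta$, with no net. Feeding the single binary event $F$ into KL data processing replaces the whole discretize--quantize--concavity chain.

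Your route buys two things. First, it is self-contained: it needs no black-box discrete lemma, no countability of the range of $\widehat x$, and no care in applying concavity to the sub-probability $Q'$ the paper builds. Second, it yields a strictly stronger conclusion, $(\tau+2\delta)\log(M-1)\le I(x;\widehat x)+\log 2$. Running the same argument with $\mathrm{Cov}_{3\eta,\tau+\delta}(R)$ gives
\[
\tau\log\bigl(\mathrm{Cov}_{3\eta,\tau+\delta}(R)-1\bigr)\le I(x;\widehat x)+\log 2 ,
\]
because the extra $2\delta$ in the paper's $\tau+3\delta$ exists only to absorb the rounding and transfer losses. The paper's route mainly buys structural parallelism with the linear-measurement analysis of Jalal et al., whose discrete lemma it reuses.

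One small slip: $\sup_{z\ \text{light}}R(\mathcal B(z,\eta))$ is only guaranteed to be $\le q$, not $<q$. This is harmless, since your bound only uses $q'\le q$.
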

Now, based on Lemma~\ref{lemma:spherefano_new}, we obtain the requirement of the mutual information on the effective geometric complexity, that is, $(1-\delta)\tau(1-2\delta)\log \mathrm{Cov}_{3\eta, \tau+3\delta}(R) \le I(x; \widehat{x}) + 2(1-\delta)$. On the other hand, according to Lemma~\ref{lemma:oneupper} and Lemma~\ref{lemma:4.3} in Appendix, we derive the upper bound of the mutual information: When $A$ is a fixed matrix, we have $ I(\widehat{x}; x^*) \le \frac{2m\log 2}{\pi \sigma^2} \|A\|_{2,\infty}^2$ and when $A$ is a Gaussian matrix, we have $I(\widehat{x}; x^* | A) \le \frac{2m \log 2}{\pi} \arcsin\left(\frac{1}{1+\sigma^2}\right)$. Therefore, substituting the corresponding mutual information upper bound into the Fano inequality requirement and solving for \(m\) yields the desired lower bounds in Theorem~\ref{thm:one_bit_sphere_lower_bound}.

For the specific proof methods and processes, please refer to \Cref{section_proof_B} in Appendix.
\section{Algorithm and experiments}
\label{PnP}
\subsection{Plug-and-play approach for one-bit measurements}

% In one-bit quantized signal recovery, given the measurements $y$ and the measurement matrix $A$, our objective is to reconstruct the unknown signal $x^*$ in Eq.~\eqref{eq: compressedSensingModel} by sampling from the posterior distribution. The posterior distribution can be formulated as:
While our theory establishes guarantees for ideal posterior sampling, exact inference in high dimensions is intractable. To provide a practical counterpart, we introduce PnP-OneBit\footnote{The corresponding code is available at \href{https://github.com/Dugehude/pnp-onebit}{https://github.com/Dugehude/pnp-onebit}.}, which approximates this sampling process using diffusion models as the learned prior. Specifically, in one-bit quantized signal recovery, given measurements $y$ and the forward matrix $A$, our objective is to reconstruct $x^*$ in Eq.~\eqref{eq: compressedSensingModel} by sampling from the target posterior:
\begin{equation}
    p(x|y, A) \propto p(y|x, A) p(x) = \exp\left(-\mathcal{L}(x;y) - \mathcal{P}(x)\right),
\end{equation} 
where $\mathcal{P}(x) := -\log p_{\text{data}}(x)$ is the negative log-prior function, and $\mathcal{L}(x;y) := -\log p(y|x, A)$ is the negative log-likelihood function. In one-bit signal recovery, we can exactly express $\mathcal{L}(x;y)= -\sum_{i=1}^m \log \Phi\left(\frac{y_i a_i^\top x}{\sigma}\right)$ based on the cumulative distribution function of the standard normal distribution $\Phi(\cdot)$.

While existing one-bit recovery methods like Diff-OneBit~\cite{chen2026diffusion} typically seek a single optimal estimate via half quadratic splitting (HQS)~\cite{geman1995nonlinear}, we propose PnP-OneBit for more robust signal reconstruction. Earlier works have explored sampling-based plug-and-play approaches for inverse problems. In particular, the work~\citep{coeurdoux2024plug} incorporated diffusion-based generative priors into a split Gibbs sampling framework, while work~\citep{bouman2023generative} developed a related generative plug-and-play framework for posterior sampling. Building on these developments, we adapt the PnP-DM framework~\citep{xu2024provably} and the split Gibbs sampler (SGS) ~\citep{vono2019split} to the nonlinear one-bit likelihood. This alternating sampling strategy aligns with the algorithmic structure of \citep{xu2024provably}, which has been shown to be provably robust for image recovery. By introducing an auxiliary variable $z \in \mathbb{R}^n$ and a coupling parameter $\varrho > 0$, we construct a joint distribution
\begin{equation}
    \Pi(x,z) \propto \exp\left(-\mathcal{L}(z;y) - \mathcal{P}(x) - \frac{1}{2\varrho^2}\|x-z\|_2^2\right).
\end{equation}
Starting from an initialization $x^{(0)}$, SGS alternates between sampling from the following two conditional distributions at each iteration $k=1,2,\dots,K$:
\begin{equation}
    \label{eq:sample_z}
    z^{(k)} \sim \Pi(z|x^{(k-1)}) \propto \exp\left( -\mathcal{L}(z;y) - \frac{1}{2\varrho^2}\|z-x^{(k-1)}\|_2^2 \right),
\end{equation}
\begin{equation}
    \label{eq:sample_x}
    x^{(k)} \sim \Pi(x|z^{(k)}) \propto \exp\left( -\mathcal{P}(x) - \frac{1}{2\varrho^2}\|x-z^{(k)}\|_2^2 \right).
\end{equation}

\begin{algorithm}[H]
\caption{PnP-OneBit}
\label{pnp_onebit}
\begin{algorithmic}[1]
\Require iteration number $K$
\State Sample $x^{(0)} \sim \mathcal{N}({0}, I_n)$
\For{$k=0,1,\dots,K-1$}
    \State Sample $z^{(k)} \sim \Pi(z|x^{(k)})$ via Langevin Monte Carlo
    \State Sample $x^{(k+1)} \sim \Pi(x|z^{(k)})$ via DDIM denoising
\EndFor
\State \Return $x^{(K)}$

\end{algorithmic}
\end{algorithm}

Sampling from Eq.~\eqref{eq:sample_z} can be achieved via Langevin Monte Carlo due to the differentiability of $\mathcal{L}(z;y)$ in the one-bit setting. However, due to $\mathcal{P}(x)$, directly sampling from Eq.~\eqref{eq:sample_x} is intractable. The decoupling between data fidelity and prior allows us to utilize a pretrained diffusion model~\cite{song2020ddim,ho2020denoising} as a plug-and-play prior. The forward process of a diffusion model perturbs the data distribution to a noise distribution over continuous time $t \in [0, T]$, characterized by the conditional distribution $q(x_t|x_0)$ as:
\begin{equation}
\label{eq:forward}
    x_t|x_0\sim\mathcal{N}(x_t; \alpha_t x_0, \sigma^2_t I_n),
\end{equation}
where $x_0 \sim p_{\text{data}}$ denotes the clean data, and $\alpha_t$ and $\sigma_t$ dictate the noise schedule. This forward process can be rewritten as a stochastic differential equation (SDE)~\cite{songscore}:
\begin{equation}
\label{eq:forwardSDE}
    d x_t = f(t) x_t d t + g(t) d w_t,
\end{equation}
where $f(t)x_t$ and $g(t)$ are the drift and diffusion coefficients, and $w_t$ is the standard Wiener process. To match the marginal distribution of Eq.~\eqref{eq:forward} and Eq.~\eqref{eq:forwardSDE}, the drift and diffusion terms should satisfy~\citep{lu2022dpmsolver}:

\begin{equation}
    f(t) = \frac{d\log \alpha_t}{dt},\quad g^2(t) = \frac{d\sigma_t^2}{dt} - 2f(t)\sigma_t^2.
\end{equation}
To simulate the reverse generative process, a neural network $s_{\theta}(x_t, t)$ is pretrained via score matching~\cite{songscore} to approximate the intractable score function $s_\theta(x_t,t)\approx\nabla_{x_t} \log p_t(x_t)$ where $x_t\sim p_t$. The reverse process is then governed by SDE~\cite{songscore} as:
\begin{equation}
\label{eq:inverSDEMain}
    dx_t = \left(f(t)x_t - g^2(t)\nabla_{x_t} \log p_t(x_t)\right)dt + g(t)d\overline w_t,
\end{equation}
where $\overline w_t$ is the inverse-time standard Wiener process. Detailed derivations of the diffusion model are provided in \Cref{appendix: Preliminary4diffusion} in Appendix.

We establish that the prior sampling step in Eq.~\eqref{eq:sample_x} mathematically aligns with the diffusion posterior $p(x_0|x_t)$, when the diffusion time $t$ matches the condition $\varrho = \frac{\sigma_t}{\alpha_t}$ and $z^{(k)}=x_t/\alpha_t$ (the corresponding derivation is deferred to \Cref{appendix: AlignBetweenLikelihoodNPrior} in Appendix). Consequently, we can obtain an approximate sample $x^{(k)}$ by initializing the reverse SDE in Eq.~\eqref{eq:inverSDEMain} at $x_t = \alpha_t z^{(k)}$ and integrating it back to $t=0$. By alternating the likelihood sampling in Eq.~\eqref{eq:sample_z} and the diffusion-based prior sampling in Eq.~\eqref{eq:sample_x} over $K$ iterations, we ultimately obtain the reconstructed signal $x^{(K)}$, which approximates a sample from the one-bit posterior distribution.~\Cref{pnp_onebit} presents our algorithm in a summarized manner, The complete sampling algorithm is summarized in \Cref{appendix: DetailedAlgorithm} in Appendix.

\subsection{Quantitative results}
Our experiments are conducted on the FFHQ~\cite{karras2019style} and ImageNet~\cite{russakovsky2015imagenet} datasets at a spatial resolution of $256 \times 256$. Following standard practices in one-bit compressed sensing~\citep{mengquantized,tang2025learning,chen2026diffusion}, the elements of the measurement matrix $A$ are sampled from $\mathcal{N}(0, 1/m)$. We compare the proposed PnP-OneBit against several models, including DiffPIR~\cite{zhu2023denoising}, DPS~\cite{chung2023diffusion}, DAPS~\cite{zhang2025improving}, QCS-SGM~\cite{mengquantized}, SIM-DMIS~\cite{tang2025learning}, and Diff-OneBit~\cite{chen2026diffusion}. For a fair comparison, all evaluated algorithms utilize the same pretrained diffusion models and operate on identical measurements $y$. For each dataset, we randomly select 100 test images. We evaluate each method five times per image and report the mean $\pm$ standard deviation for peak signal-to-noise ratio (PSNR), structural similarity index measure (SSIM), and learned perceptual image patch similarity (LPIPS)~\cite{zhang2018unreasonable}. Further implementation details are provided in \Cref{appendix: ImplementationDetail} in Appendix.

\Cref{tab: QuantitativeResultsFFHQ} summarizes the quantitative results on the FFHQ dataset under different compression ratios $n/m$. Additional evaluations on the ImageNet dataset, experiments under varying noise levels, and comprehensive further studies are deferred to \Cref{appendix: ExperimentalResult} in Appendix. Across both datasets, the proposed PnP-OneBit consistently achieves the great reconstruction performance on all evaluated metrics. Notably, PnP-OneBit maintains strong performance even at severe compression ratios ($n/m=32$). 

Furthermore, visual comparisons of the reconstructed images are provided in \Cref{fig: QualitativeResults}. While methods seeking a single point estimate, such as Diff-OneBit, may oversmooth high-frequency information under the information bottleneck of one-bit quantization, PnP-OneBit mitigates this issue. By balancing the data likelihood and the diffusion prior via posterior sampling, it effectively retains high-frequency textures.
\begin{table}[t]
\centering
\caption{Quantitative results for one-bit recovery on the FFHQ dataset under different compression ratios \(n/m\) with additive Gaussian noise \(\sigma=0.5\). Best results are shown in \textbf{bold}.}
\label{tab: QuantitativeResultsFFHQ}
\setlength{\tabcolsep}{3.5pt}
\begin{tabular}{lcccccc}
\toprule
\multirow{2}{*}{Methods}  & \multicolumn{3}{c}{$n/m=16$}                & \multicolumn{3}{c}{$n/m=32$}                \\ 
\cmidrule(lr){2-4} \cmidrule(lr){5-7}
                         & PSNR$\uparrow$ & SSIM$\uparrow$ & LPIPS$\downarrow$ & PSNR$\uparrow$ & SSIM$\uparrow$ & LPIPS$\downarrow$  \\ 
\hline
DiffPIR                      & 11.71$\pm$1.26          & 0.30$\pm$.05          & 0.73$\pm$.06           & 11.48$\pm$0.63          &  0.24$\pm$.04         &  0.76$\pm$.05          \\
DPS                          & 14.88$\pm$1.21          & 0.52$\pm$.06          & 0.50$\pm$.07           & 13.81$\pm$0.78          &  0.50$\pm$.06         &  0.56$\pm$.05          \\
DAPS                         & 14.84$\pm$1.61          & 0.43$\pm$.06          & 0.59$\pm$.06           & 12.02$\pm$0.82          &  0.31$\pm$.06         &  0.61$\pm$.06          \\
QCS-SGM                      & 18.11$\pm$2.43          & 0.52$\pm$.10          & 0.52$\pm$.07           & 17.12$\pm$1.84          &  0.49$\pm$.09         &  0.59$\pm$.06          \\
SIM-DMIS                     & 18.73$\pm$1.94          & 0.52$\pm$.08          & 0.50$\pm$.07           & 17.36$\pm$1.34          &  0.48$\pm$.08         &  0.55$\pm$.08          \\
Diff-OneBit                  & 21.95$\pm$1.23          & 0.53$\pm$.08          & 0.47$\pm$.06           & 18.68$\pm$0.98          &  0.51$\pm$.06         &  0.53$\pm$.06          \\
\rowcolor{gray!20}PnP-OneBit & \textbf{22.96}$\pm$1.86 & \textbf{0.69}$\pm$.09 & \textbf{0.33}$\pm$.07  & \textbf{19.64}$\pm$1.23 & \textbf{0.58}$\pm$.07 & \textbf{0.49}$\pm$.07  \\
\bottomrule
\end{tabular}
\end{table}
\begin{figure}[t]
    \centering
    \includegraphics[width=1.0\linewidth]{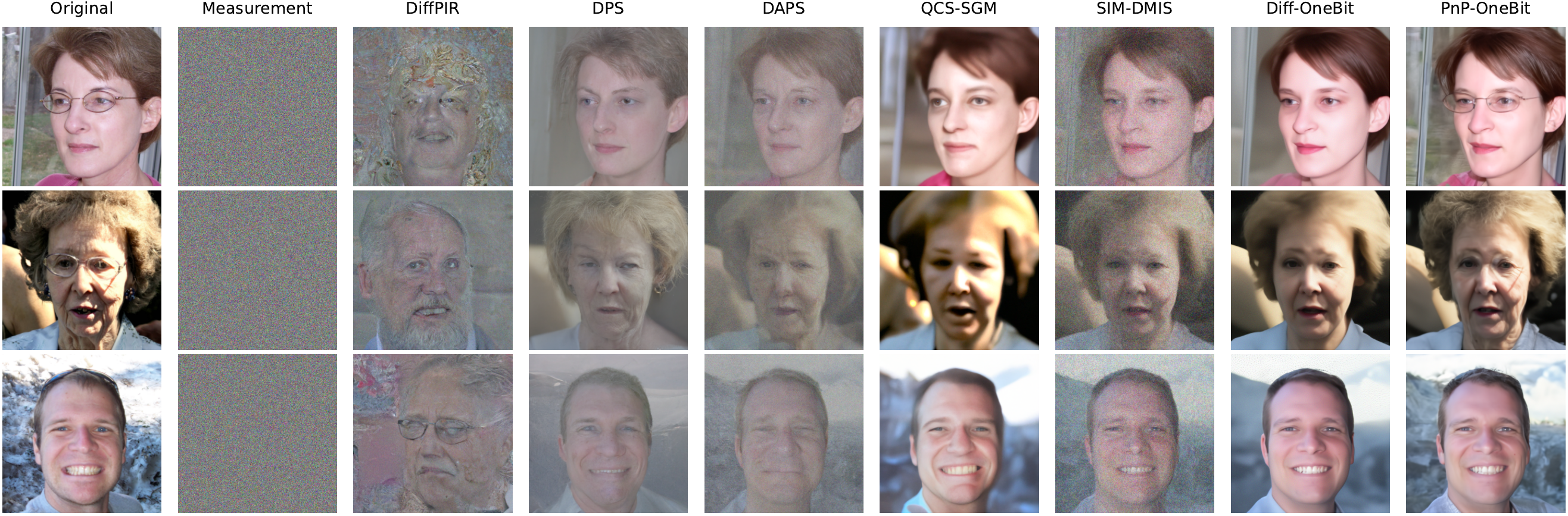}
    \caption{Qualitative results on FFHQ dataset with $\sigma=0.5$ and $n/m=16$.}
    \label{fig: QualitativeResults}
\end{figure}

\section{Conclusion}
We studied noisy one-bit compressed sensing for signals drawn from arbitrary data prior distributions on the sphere. We introduced an approximate covering number on the sphere and established a posterior sampling upper bound related to the logarithm of the approximate covering number and the recovery accuracy under noisy one-bit measurements, a robustness guarantee under learned prior mismatch, and an information theoretic lower bound for any reliable estimator. To complement these theoretical findings, we presented a one-bit plug-and-play reconstruction method and demonstrated its effectiveness on high dimensional image recovery tasks. Overall, our results show that reliable recovery under severe nonlinear quantization can be precisely characterized by the high probability geometric complexity of the signal prior. We focus on the noisy one-bit model with additive Gaussian noise before quantization. An open problem is whether similar robust posterior sampling bounds can be established when the model noise is adversarial.
\newpage
\bibliographystyle{plainnat}
\bibliography{refs}
\newpage
\appendix
\section{Upper bound proofs}
\label{upper_bound_proofs}

\paragraph{Proof roadmap.}
The proof consists of four steps. First, using the rotational invariance of Gaussian measurements, we show that the expected Hamming distance between one-bit observations is determined only by the geodesic distance between the underlying signals. Second, based on this relation, we establish a one-bit near--far test to quantify the separability between nearby and faraway signals in the one-bit observation space, and prove that its testing error decays exponentially with the number of measurements $m$. Third, we construct an approximate geodesic cover for the high probability mass of the signal distribution, and apply the above test together with a union bound over all covering centers, thereby obtaining the posterior sampling recovery guarantee in the matched-prior setting. Fourth, to handle model mismatch between the true distribution and the learned distribution, we use a geodesic Wasserstein truncation argument to decompose the two distributions into high probability regular components and small residual components, the regular components are close in $W_{\infty,\mathrm{geo}}$ and can be controlled by a discrete distribution whose support size is at most $e^k$. Finally, combining this with a R\'enyi stability lemma transfers the matched-prior recovery guarantee to the mismatched prior setting, while the residual components contribute only an additional failure probability of $2\delta$. %This proves \Cref{cor:main}.

\subsection{Proof of Lemma~\ref{hammingdistance}}
Define the pre-quantization continuous latent variables for the $i$-th measurement as $z_{1,i}=a_i^T x_1 + \xi_{i}$ and $z_{2,i}=a_i^T x_2$. According to Sheppard's formula~\citep{o2014analysis}, for jointly zero-mean Gaussian variables $z_{1,i}, z_{2,i}$, defining $\rho$ as the correlation coefficient between them, we have:
\begin{equation}
\Pr\left(\operatorname{sign}(z_{1,i}) \neq \operatorname{sign}(z_{2,i})\right) = \frac{1}{\pi} \arccos(\rho).
\end{equation}

Next, we consider calculating the correlation coefficient $\rho$ between $z_{1,i}, z_{2,i}$.

First, we calculate the variances of $z_{1,i}, z_{2,i}$. Since both $a_i^Tx_1$ and $\xi_{i}$ can be regarded as zero-mean Gaussian distributions and are mutually independent, by the superposition property of Gaussian distributions, we obtain:
\begin{align}
\label{eq:fangcha}
\operatorname{Var}(z_{1,i}) &= \operatorname{Var}(a_i^Tx_1)+\operatorname{Var}(\xi_{i}) , \\
&\ = \|x_1\|^2 +\sigma^2 , \\
&\ = 1+\sigma^2.
\end{align}
Similarly, since the second observation is noiseless, we obtain:
\begin{equation}
\operatorname{Var}(z_{2,i})=\|x_2\|_2^2=1.
\end{equation}

Next, we compute the covariance between the pre-quantization variables $z_{1,i}, z_{2,i}$. Since the noise $\xi_i$ is independent of $a_i$ and has mean zero:
\begin{align}
\label{eq:xiefangcha}
\operatorname{Cov}(z_{1,i}, z_{2,i})
&= \mathbb{E}[(a_i^Tx_1 + \xi_i)(a_i^Tx_2)], \\
&= \mathbb{E}[x_1^T a_i a_i^Tx_2] + \mathbb{E}[\xi_i a_i^T x_2], \\
&= \mathbb{E}[x_1^T a_i a_i^Tx_2], \\
&= x_1^Tx_2.
\end{align}

Utilizing the variance  Eq.~\eqref{eq:fangcha} and the covariance  Eq.~\eqref{eq:xiefangcha}, we get:
\begin{equation}
\rho= \frac{x_1^T x_2}{\sqrt{1+\sigma^2}}.
\end{equation}

Therefore, the normalized Hamming distance between the observation vectors $u,v$ is ultimately expressed as:
\begin{align}
\mathbb{E}[\mathrm{d}_{\mathrm{H}}(u,v)] &=  \frac{1}{m}\sum_{i=1}^m \Pr\left(\operatorname{sign}(z_{1,i}) \neq \operatorname{sign}(z_{2,i})\right) , \\
&=  \frac{1}{\pi} \arccos\left(\frac{\langle x_1, x_2 \rangle}{\sqrt{1+\sigma^2}}\right),
\\ &=\frac{1}{\pi} \arccos\left(\frac{\cos (\pi  \mathrm{d}_{\mathrm{S}}(x_1,x_2))}{\sqrt{1+\sigma^2}}\right).
\end{align}

\subsection{Proof of Lemma~\ref{lem:near_far_test}}
Fix an arbitrary \(x\in\mathbb S^{n-1}\). When the true signal is \(x\), write
\begin{equation}
y=\operatorname{sign}(Ax+\xi).
\end{equation}
For each \(i=1,\ldots,m\), define
\begin{equation}
Y_i
:=
\mathbf 1\left\{\operatorname{sign}(a_i^\top x+\xi_i)
\neq
\operatorname{sign}(a_i^\top x_0)\right\}.
\end{equation}
Then
\begin{equation}
\mathrm{d}_{\mathrm{H}}\left(y,\operatorname{sign}(Ax_0)\right)
=
\frac{1}{m}\sum_{i=1}^mY_i.
\end{equation}
Under the joint randomness of \((A,\xi)\) conditional on \(x\),
\((a_i,\xi_i)\) are independent and identically distributed. Therefore,
\(Y_1,\ldots,Y_m\) are independent and identically distributed Bernoulli random variables.

By Lemma~\ref{hammingdistance}, we have
\begin{equation}
\mathbb E_{a_i,\xi_i| x}[Y_i]
=
f_\sigma(\mathrm{d}_{\mathrm{S}}(x,x_0)).
\end{equation}
Note that \(f_\sigma(\cdot)\) is nondecreasing on \([0,1]\).

We first consider the near region. If \(\mathrm{d}_{\mathrm{S}}(x,x_0)\le \eta\), then
\begin{equation}
\mathbb E_{a_i,\xi_i| x}[Y_i]
\le
f_\sigma(\eta).
\end{equation}
Thus, by Hoeffding's inequality~\citep{bentkus2004hoeffding},
\begin{align}
\Pr_{A,\xi| x}
\left(\phi_{x_0,\eta}(y;A)=1\right)
&=
\Pr_{A,\xi| x}
\left(\mathrm{d}_{\mathrm{H}}\left(y,\operatorname{sign}(Ax_0)\right)\ge\tau_{\eta,c,\sigma}\right),  \\
&\le
\exp\left(-2m\left(\tau_{\eta,c,\sigma}-f_\sigma(\eta)\right)^2\right), \\
&=
\exp\left(-\frac{m}{2}\left(f_\sigma(c\eta)-f_\sigma(\eta)\right)^2\right), \\
&=
\exp\left(-\frac{m}{2}\Delta_\sigma(c,\eta)^2\right).
\end{align}
Taking the supremum over all \(x\) satisfying \(\mathrm{d}_{\mathrm{S}}(x,x_0)\le\eta\) yields the first claim.

We next consider the far region. If \(\mathrm{d}_{\mathrm{S}}(x,x_0)\ge c\eta\), then
\begin{equation}
\mathbb E_{a_i,\xi_i| x}[Y_i]
\ge
f_\sigma(c\eta).
\end{equation}
By Hoeffding's inequality~\citep{bentkus2004hoeffding},
\begin{align}
\Pr_{A,\xi| x}\left(\phi_{x_0,\eta}(y;A)=0\right)
&=
\Pr_{A,\xi| x}\left(\mathrm{d}_{\mathrm{H}}\left(y,\operatorname{sign}(Ax_0)\right)\le\tau_{\eta,c,\sigma}\right), \\
&\le
\exp\left(-2m\left(f_\sigma(c\eta)-\tau_{\eta,c,\sigma}\right)^2\right), \\
&=
\exp\left(-\frac{m}{2}\left(f_\sigma(c\eta)-f_\sigma(\eta)\right)^2\right), \\
&=
\exp\left(-\frac{m}{2}\Delta_\sigma(c,\eta)^2\right).
\end{align}
Taking the supremum over all \(x\) satisfying \(\mathrm{d}_{\mathrm{S}}(x,x_0)\ge c\eta\) yields the second claim.

\subsection{Proof of Lemma~\ref{lemma:orup}}
The proof proceeds in three steps. In the first step, we convert a possibly overlapping covering by spherical caps into a pairwise disjoint measurable partition. In the second step, for each partition block \(S_j\), we reduce the error event that the posterior sample falls into the far region to a binary posterior testing problem between the near region and the far region. In the third step, we sum over all \(j\), and use a counting argument to improve the final coefficient from the crude \(N+1\) to \(N\).

\label{thm:global_cover_bound_main}
Through \(x_1,\ldots,x_N\in\mathbb S^{n-1}\), we
define the far region
\begin{equation}
\mathcal F_j(c)
:=
\left\{
x\in\mathbb S^{n-1}: \mathrm{d}_{\mathrm{S}}(x,x_j)\ge c\eta
\right\}.
\end{equation}

Since these spherical caps $\mathcal C_j$ for $j=1,\ldots,N$ cover the whole mass of \(Q_g\), namely
\begin{equation}
Q_g\left[
\bigcup_{j=1}^N \mathcal C_j
\right]
=1 .
\end{equation}
Denote the covered region by
\begin{equation}
G
:=
\bigcup_{j=1}^N \mathcal C_j .
\end{equation}

Since the spherical caps \(\mathcal C_1,\ldots,\mathcal C_N\) may overlap, we define the following disjoint partition. Let
\begin{equation}
S_1:=\mathcal C_1,
\end{equation}
and for \(j=2,\ldots,N\), let
\begin{equation}
S_j
:=
\mathcal C_j
\setminus
\bigcup_{i=1}^{j-1}\mathcal C_i .
\end{equation}
By construction, \(S_1,\ldots,S_N\) are pairwise disjoint, and satisfy
\begin{equation}
S_j\subseteq \mathcal C_j,
\qquad
G=\bigsqcup_{j=1}^N S_j .
\end{equation}

Fix an arbitrary \(j\in\{1,\ldots,N\}\). If \(x^*\in S_j\), then since \(S_j\subseteq \mathcal C_j\), we have
\begin{equation}
\mathrm{d}_{\mathrm{S}}(x^*,x_j)\le \eta .
\end{equation}
On the other hand, if the recovery error satisfies
$\mathrm{d}_{\mathrm{S}}(x^*,\widehat x)\ge(c+1)\eta$,
then by the triangle inequality we obtain
\begin{align}
\mathrm{d}_{\mathrm{S}}(x_j,\widehat x)\ge
\mathrm{d}_{\mathrm{S}}(x^*,\widehat x)-\mathrm{d}_{\mathrm{S}}(x^*,x_j) \ge
(c+1)\eta-\eta =c\eta .
\end{align}
Therefore, we can conclude that when $\mathrm{d}_{\mathrm{S}}(x^*,\widehat{x})\ge (c+1)\eta$, we have $\widehat x\in \mathcal F_j(c)$.
Thus we obtain the event inclusion
\begin{equation}
\left\{
x^*\in S_j,\
\mathrm{d}_{\mathrm{S}}(x^*,\widehat x)\ge(c+1)\eta
\right\}
\subseteq
\left\{
x^*\in S_j,\
\widehat x\in \mathcal F_j(c)
\right\}.
\end{equation}

Since \(G=\bigsqcup_{j=1}^N S_j\), we have
\begin{equation}
\left\{
\mathrm{d}_{\mathrm{S}}(x^*,\widehat x)\ge(c+1)\eta
\right\}
\subseteq
\{x^*\notin G\}
\cup
\bigcup_{j=1}^N
\left\{
x^*\in S_j,\
\widehat x\in \mathcal F_j(c)
\right\}.
\end{equation}
Using the covering assumption \(Q_g(G)=1\), we obtain
\begin{align}
\label{eq.66}
\Pr
\left(
\mathrm{d}_{\mathrm{S}}(x^*,\widehat x)\ge(c+1)\eta
\right)
&\le
\Pr\left(x^*\notin G\right)
+
\sum_{j=1}^N
\Pr
\left(
x^*\in S_j,\
\widehat x\in \mathcal F_j(c)
\right), \\
&\le
\sum_{j=1}^N
\Pr
\left(
x^*\in S_j,\
\widehat x\in \mathcal F_j(c)
\right).
\end{align}
It remains to bound each term
$\Pr
\left(
x^*\in S_j,\
\widehat x\in \mathcal F_j(c)
\right)$.

Fix \(j\). If $Q_g(S_j)=0$
or
$Q(\mathcal F_j(c))=0$,
then
\begin{equation}
\Pr
\left(
x^*\in S_j,\
\widehat x\in \mathcal F_j(c)
\right)
=0,
\end{equation}
the claim is immediate. Hence in what follows we only consider the case $Q_g(S_j)>0,Q(\mathcal F_j(c))>0$.

Define
\begin{equation}
\pi_j^{\mathrm{in}}
:=
Q(S_j),
\qquad
\pi_j^{\mathrm{far}}
:=
Q(\mathcal F_j(c)).
\end{equation}
Since \(S_j\subseteq\mathcal C_j\) and \(c>1\), every \(x\in S_j\) satisfies $\mathrm{d}_{\mathrm{S}}(x,x_j)\le \eta<c\eta$.
Therefore
\begin{equation}
S_j\cap \mathcal F_j(c)=\varnothing .
\end{equation}
Define the middle region
\begin{equation}
M_j(c)
:=
\mathbb S^{n-1}
\setminus
\left(
S_j\cup \mathcal F_j(c)
\right),
\end{equation}
and let
\begin{equation}
\pi_j^{\mathrm{mid}}
:=
Q(M_j(c)).
\end{equation}
Then
\begin{equation}
\pi_j^{\mathrm{in}}
+
\pi_j^{\mathrm{far}}
+
\pi_j^{\mathrm{mid}}
=
1 .
\end{equation}

Let \(Q_j^{\mathrm{in}}\), \(Q_{j,c}^{\mathrm{far}}\), and \(Q_j^{\mathrm{mid}}\) denote the normalized restrictions of \(Q\) to \(S_j\), \(\mathcal F_j(c)\), and \(M_j(c)\), respectively. That is, for any Borel set \(B\subseteq\mathbb S^{n-1}\), define
\begin{equation}
Q_j^{\mathrm{in}}(B)
:=
\frac{Q(B\cap S_j)}{\pi_j^{\mathrm{in}}},
\end{equation}
and
\begin{equation}
Q_{j,c}^{\mathrm{far}}(B)
:=
\frac{Q(B\cap \mathcal F_j(c))}{\pi_j^{\mathrm{far}}}.
\end{equation}
When \(\pi_j^{\mathrm{mid}}>0\), further define
\begin{equation}
Q_j^{\mathrm{mid}}(B)
:=
\frac{Q(B\cap M_j(c))}{\pi_j^{\mathrm{mid}}}.
\end{equation}

For fixed \(A\), write
\begin{equation}
p_{A,x}(y)
:=
\Pr\left(y| A,x\right).
\end{equation}
Accordingly, define the marginal likelihoods induced by the near region, the far region, and the middle region:
\begin{equation}
h_j^{\mathrm{in}}(y;A)
:=
\int p_{A,x}(y)\,dQ_j^{\mathrm{in}}(x),
\end{equation}
\begin{equation}
h_j^{\mathrm{far}}(y;A)
:=
\int p_{A,x}(y)\,dQ_{j,c}^{\mathrm{far}}(x),
\end{equation}
and, when \(\pi_j^{\mathrm{mid}}>0\),
\begin{equation}
h_j^{\mathrm{mid}}(y;A)
:=
\int p_{A,x}(y)\,dQ_j^{\mathrm{mid}}(x).
\end{equation}
Let \(Q_{g,j}^{\mathrm{in}}\) denote the normalized restriction of \(Q_g\) to \(S_j\), and let \(h_{g,j}^{\mathrm{in}}(y;A):=\int p_{A,x}(y)\,dQ_{g,j}^{\mathrm{in}}(x)\).

Given \(y\) and \(A\), the posterior probability assigned to the far region \(\mathcal F_j(c)\) is
\begin{equation}
Q(\mathcal F_j(c)| y,A)
=
\frac{
\pi_j^{\mathrm{far}}h_j^{\mathrm{far}}(y;A)
}{
\pi_j^{\mathrm{in}}h_j^{\mathrm{in}}(y;A)
+
\pi_j^{\mathrm{far}}h_j^{\mathrm{far}}(y;A)
+
\pi_j^{\mathrm{mid}}h_j^{\mathrm{mid}}(y;A)
}.
\end{equation}
Since the middle term satisfies
\begin{equation}
\pi_j^{\mathrm{mid}}h_j^{\mathrm{mid}}(y;A)\ge 0,
\end{equation}
Dropping this nonnegative term from the denominator can only increase the ratio. Therefore
\begin{equation}
Q(\mathcal F_j(c)| y,A)
\le
\frac{
\pi_j^{\mathrm{far}}h_j^{\mathrm{far}}(y;A)
}{
\pi_j^{\mathrm{in}}h_j^{\mathrm{in}}(y;A)
+
\pi_j^{\mathrm{far}}h_j^{\mathrm{far}}(y;A)
}.
\end{equation}

Define the far-region posterior probability under the reduced two-group model by
\begin{equation}
\Gamma_j^{\mathrm{far}}(y,A)
:=
\frac{
\pi_j^{\mathrm{far}}h_j^{\mathrm{far}}(y;A)
}{
\pi_j^{\mathrm{in}}h_j^{\mathrm{in}}(y;A)
+
\pi_j^{\mathrm{far}}h_j^{\mathrm{far}}(y;A)
}.
\end{equation}
Thus
\begin{align}
\Pr
\left(
x^*\in S_j,\
\widehat x\in \mathcal F_j(c)
\right)
&=
\mathbb E_A
\sum_{y\in\{-1,+1\}^m}
Q_g(S_j)
h_{g,j}^{\mathrm{in}}(y;A)
Q(\mathcal F_j(c)| y,A), \\
&\le
Q_g(S_j)
\mathbb E_A
\sum_{y\in\{-1,+1\}^m}
h_{g,j}^{\mathrm{in}}(y;A)
\Gamma_j^{\mathrm{far}}(y,A).
\end{align}

Now let $\phi:\{-1,+1\}^m\times\mathbb R^{m\times n}\to[0,1]$ be any measurable test function. Since \(Q=(1-\alpha)Q_g+\alpha Q_b\), we have
\begin{equation}
\pi_j^{\mathrm{in}}h_j^{\mathrm{in}}(y;A)
=
\int_{S_j}p_{A,x}(y)\,dQ(x)
\ge
(1-\alpha)
\int_{S_j}p_{A,x}(y)\,dQ_g(x)
=
(1-\alpha)Q_g(S_j)h_{g,j}^{\mathrm{in}}(y;A).   
\end{equation}

For any fixed \((y,A)\), we have
\begin{equation}
Q_g(S_j)
h_{g,j}^{\mathrm{in}}(y;A)
\Gamma_j^{\mathrm{far}}(y,A)
\le
\frac{1}{1-\alpha}
\left(
\pi_j^{\mathrm{in}}h_j^{\mathrm{in}}(y;A)\phi(y;A)
+
\pi_j^{\mathrm{far}}h_j^{\mathrm{far}}(y;A)
\left(
1-\phi(y;A)
\right)
\right).
\end{equation}
This is because on the region selected by \(\phi\), we use the trivial upper bound
\begin{equation}
\Gamma_j^{\mathrm{far}}(y,A)\le 1;
\end{equation}
whereas on the complementary region, we use
\begin{equation}
Q_g(S_j)
h_{g,j}^{\mathrm{in}}(y;A)
\Gamma_j^{\mathrm{far}}(y,A)
\le
\frac{1}{1-\alpha}
\pi_j^{\mathrm{far}}
h_j^{\mathrm{far}}(y;A).
\end{equation}

Therefore
\begin{equation}
\Pr
\left(
x^*\in S_j,\
\widehat x\in \mathcal F_j(c)
\right)
\le
\frac{1}{1-\alpha}
\left(
\pi_j^{\mathrm{in}}\alpha_j(\phi)
+
\pi_j^{\mathrm{far}}\beta_j(\phi)
\right),
\end{equation}
where
\begin{equation}
\alpha_j(\phi)
:=
\mathbb E_A
\sum_{y\in\{-1,+1\}^m}
h_j^{\mathrm{in}}(y;A)\phi(y;A),
\end{equation}
and
\begin{equation}
\beta_j(\phi)
:=
\mathbb E_A
\sum_{y\in\{-1,+1\}^m}
h_j^{\mathrm{far}}(y;A)
\left(
1-\phi(y;A)
\right).
\end{equation}

Next, choose the near-far Hamming test centered at \(x_j\). Define
\begin{equation}
H_j(y;A)
:=
\mathrm{d}_{\mathrm{H}}
\left(
y,\operatorname{sign}(Ax_j)
\right),
\end{equation}
and let
\begin{equation}
\phi_j(y;A)
:=
\mathbf 1
\left\{
H_j(y;A)
\ge
\frac{
f_\sigma(\eta)+f_\sigma(c\eta)
}{2}
\right\}.
\end{equation}

Since \(Q_j^{\mathrm{in}}\) is supported on \(S_j\subseteq\mathcal C_j\), for any
\(x\in\operatorname{supp}(Q_j^{\mathrm{in}})\), we have
\begin{equation}
\mathrm{d}_{\mathrm{S}}(x,x_j)\le \eta.
\end{equation}
By Lemma~\ref{lem:near_far_test}, we obtain
\begin{equation}
\alpha_j(\phi_j)
\le
\exp
\left(
-\frac{m}{2}\Delta_\sigma(c,\eta)^2
\right).
\end{equation}
On the other hand, \(Q_{j,c}^{\mathrm{far}}\) is supported on \(\mathcal F_j(c)\), so for any
\(x\in\operatorname{supp}(Q_{j,c}^{\mathrm{far}})\), we have
\begin{equation}
\mathrm{d}_{\mathrm{S}}(x,x_j)\ge c\eta.
\end{equation}
Again by Lemma~\ref{lem:near_far_test}, we obtain
\begin{equation}
\beta_j(\phi_j)
\le
\exp
\left(
-\frac{m}{2}\Delta_\sigma(c,\eta)^2
\right).
\end{equation}
Therefore
\begin{align}
\label{eq.101}
\Pr
\left(
x^*\in S_j,\
\widehat x\in \mathcal F_j(c)
\right)
&\le
\frac{1}{1-\alpha}
\left(
\pi_j^{\mathrm{in}}
+
\pi_j^{\mathrm{far}}
\right)
\exp
\left(
-\frac{m}{2}\Delta_\sigma(c,\eta)^2
\right) \\
&=
\frac{1}{1-\alpha}
\left(
Q(S_j)
+
Q(\mathcal F_j(c))
\right)
\exp
\left(
-\frac{m}{2}\Delta_\sigma(c,\eta)^2
\right).
\end{align}

Substituting the result of~\eqref{eq.101} into the event decomposition in~\eqref{eq.66} gives
\begin{equation}
\Pr_{x^*,A,\xi,\widehat{x}}
\left(
\mathrm{d}_{\mathrm{S}}(x^*,\widehat x)\ge(c+1)\eta
\right)
\le
\frac{1}{1-\alpha}
\sum_{j=1}^N
\left(
Q(S_j)+Q(\mathcal F_j(c))
\right)
\exp
\left(
-\frac{m}{2}\Delta_\sigma(c,\eta)^2
\right).
\end{equation}
Since \(S_1,\ldots,S_N\) form a disjoint partition of \(G\), we have
\begin{equation}
\sum_{j=1}^N Q(S_j)=Q(G).
\end{equation}
Thus
\begin{equation}
\Pr_{x^*,A,\xi,\widehat{x}}
\left(
\mathrm{d}_{\mathrm{S}}(x^*,\widehat x)\ge(c+1)\eta
\right)
\le
\frac{1}{1-\alpha}
\left(
Q(G)+\sum_{j=1}^N Q(\mathcal F_j(c))
\right)
\exp
\left(
-\frac{m}{2}\Delta_\sigma(c,\eta)^2
\right).
\end{equation}

We next prove that $Q(G)+\sum_{j=1}^N Q(\mathcal F_j(c))
\le N$.
Observe that
\begin{align}
Q(G)+\sum_{j=1}^N Q(\mathcal F_j(c))
&=
\int_{\mathbb S^{n-1}}
\left(
\mathbf 1_G(x)
+
\sum_{j=1}^N
\mathbf 1_{\mathcal F_j(c)}(x)
\right)
dQ(x).
\end{align}
For any \(x\in G\), there exists at least one \(k\in\{1,\ldots,N\}\) such that \(x\in\mathcal C_k\). Therefore
\begin{equation}
\mathrm{d}_{\mathrm{S}}(x,x_k)\le \eta<c\eta,
\end{equation}
where the last step follows from \(c>1\). Hence
\begin{equation}
x\notin \mathcal F_k(c).
\end{equation}
Therefore, when \(x\in G\), the point \(x\) belongs to at most \(N-1\) far regions
\(\mathcal F_1(c),\ldots,\mathcal F_N(c)\). Thus
\begin{equation}
\mathbf 1_G(x)
+
\sum_{j=1}^N
\mathbf 1_{\mathcal F_j(c)}(x)
\le
1+(N-1)
=
N.
\end{equation}
When \(x\notin G\), it is clear that
\begin{equation}
\mathbf 1_G(x)
+
\sum_{j=1}^N
\mathbf 1_{\mathcal F_j(c)}(x)
=
\sum_{j=1}^N
\mathbf 1_{\mathcal F_j(c)}(x)
\le N.
\end{equation}
Therefore, for any \(x\in\mathbb S^{n-1}\), we have
\begin{equation}
\mathbf 1_G(x)
+
\sum_{j=1}^N
\mathbf 1_{\mathcal F_j(c)}(x)
\le N.
\end{equation}
Integrating the above inequality with respect to \(Q\), we obtain
\begin{equation}
Q(G)+\sum_{j=1}^N Q(\mathcal F_j(c))
\le N.
\end{equation}
Therefore
\begin{equation}
\Pr_{x^*,A,\xi,\widehat{x}}
\left(
\mathrm{d}_{\mathrm{S}}(x^*,\widehat x)\ge(c+1)\eta
\right)
\le
\frac{N}{1-\alpha}
\exp
\left(
-\frac{m}{2}\Delta_\sigma(c,\eta)^2
\right).
\end{equation}
The proof is complete.

\begin{lemma}(Adopted from~\citep[Lemma A.1]{jalal2021instance}) 
\label{lem:cov_lemma}
Let $R, P$ be arbitrary distributions on $\mathbb{S}^{n-1}$. Let $p \ge 1$ and $\eta, \rho, \delta > 0$ be parameters.
If $\mathcal{W}_{p,geo}(R, P) \le \rho$ and $\min\{\log \mathrm{Cov}_{\eta, \delta}(P), \log \mathrm{Cov}_{\eta, \delta}(R)\} \le k$, then there exist distributions $R', R'', P', P''$ and a finite discrete distribution $Q$ with a support size $|supp(Q)| \le e^k$, satisfying the following conditions:

\begin{enumerate}
\item $\min \{\mathcal{W}_{\infty,geo}(P', Q), \mathcal{W}_{\infty,geo}(R', Q)\} \le \eta$.
\item $\mathcal{W}_{\infty,geo}(R', P') \le \frac{\rho}{\delta^{1/p}}$.
\item $P = (1-2\delta)P' + (2\delta)P''$ and $R = (1-2\delta)R' + (2\delta)R''$.
\end{enumerate}
\end{lemma}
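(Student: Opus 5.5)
We follow the argument of \citep[Lemma A.1]{jalal2021instance} and transplant it to the sphere with geodesic distance; nothing in it uses linear structure. The roles of \(R\) and \(P\) are symmetric, so we assume without loss of generality that \(\log \mathrm{Cov}_{\eta,\delta}(R)\le k\). Set \(N:=\mathrm{Cov}_{\eta,\delta}(R)\le e^k\) and pick centers \(x_1,\ldots,x_N\in\mathbb S^{n-1}\) with \(R(G)\ge 1-\delta\), where \(G:=\bigcup_{i}\mathcal B(x_i,\eta)\). If \(2\delta\ge 1\), the statement is degenerate (take \(P''=P\), \(R''=R\), and anything for the primed measures), so we assume \(2\delta<1\).

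\emph{Step 1 (coupling and Markov).} Take a coupling \(\gamma\in\Pi(R,P)\) attaining, or nearly attaining, \(\int \mathrm{d}_{\mathrm{S}}(x,z)^p\,d\gamma\le\rho^p\). The sphere is compact, so an optimal coupling exists. Let \(\varepsilon:=\rho/\delta^{1/p}\). By Markov's inequality applied to \(\mathrm{d}_{\mathrm{S}}^p\),
\begin{equation}
\gamma\bigl(\mathrm{d}_{\mathrm{S}}(x,z)>\varepsilon\bigr)\le \rho^p/\varepsilon^p=\delta .
\end{equation}
Let \(E:=\{(x,z): x\in G,\ \mathrm{d}_{\mathrm{S}}(x,z)\le\varepsilon\}\). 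A union bound gives \(\gamma(E)\ge 1-2\delta\).

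\emph{Step 2 (trimming to exact mass \(1-2\delta\)).} Define the sub-probability measure \(\gamma_1:=\frac{1-2\delta}{\gamma(E)}\,\gamma|_E\). It has total mass exactly \(1-2\delta\) and satisfies \(\gamma_1\le\gamma\). This thinning by a constant factor avoids the atom-splitting issue that a subevent of exact mass would raise. Set \(\gamma':=\gamma_1/(1-2\delta)\) and \(\gamma'':=(\gamma-\gamma_1)/(2\delta)\); both are probability measures. Let \(R',P'\) be the first and second marginals of \(\gamma'\), and \(R'',P''\) those of \(\gamma''\). Since \(\gamma=(1-2\delta)\gamma'+2\delta\gamma''\), taking marginals gives item 3. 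The measure \(\gamma'\) is a coupling of \(R'\) and \(P'\) supported on \(\{\mathrm{d}_{\mathrm{S}}\le\varepsilon\}\), so \(\mathcal W_{\infty,\mathrm{geo}}(R',P')\le\varepsilon\), which is item 2.

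\emph{Step 3 (discretization).} Define the measurable map \(T(x):=x_{j(x)}\), where \(j(x)\) is the smallest index with \(x\in\mathcal B(x_j,\eta)\). This is well defined on \(G\), and \(R'\) is supported in \(G\). Let \(Q:=T_\#R'\). Its support is contained in \(\{x_1,\ldots,x_N\}\), so \(|\mathrm{supp}(Q)|\le N\le e^k\). The coupling \((x,T(x))\) with \(x\sim R'\) satisfies \(\mathrm{d}_{\mathrm{S}}(x,T(x))\le\eta\) almost surely. Hence \(\mathcal W_{\infty,\mathrm{geo}}(R',Q)\le\eta\), which gives item 1. In the symmetric case, where the cover is for \(P\), the same construction with \(G\) defined via \(P\) gives the \(P'\) bound instead.

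\emph{Main difficulty.} The only delicate point is producing the exact mixture weight \(1-2\delta\) while keeping \(\gamma_1\le\gamma\), so that the residual \(\gamma''\) is a genuine nonnegative probability measure. The constant-factor thinning in Step 2 handles this. We also have to make sure the marginals of \(\gamma'\) are supported where they should be: \(R'\) in \(G\), and \(\gamma'\) on the \(\varepsilon\)-diagonal. This holds automatically because \(\gamma_1\) is a restriction of \(\gamma\) to \(E\).
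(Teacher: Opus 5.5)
Your argument is correct and is essentially the standard proof of Lemma A.1 of Jalal et al., which the paper imports by citation without reproducing. The steps match that source: an optimal $\mathcal W_{p,\mathrm{geo}}$ coupling with Markov's inequality on $\mathrm{d}_{\mathrm{S}}^p$, a union bound against the $(\eta,\delta)$-cover, a split of the coupling into a mass-$(1-2\delta)$ part supported on the covered $\varepsilon$-diagonal with $\varepsilon=\rho/\delta^{1/p}$, and the push-forward of the good marginal onto the cover centers. Your constant-factor thinning is a clean way to get the exact weight $1-2\delta$ while keeping the residual nonnegative.

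The only slip is in your degenerate-case remark. For $2\delta>1$, taking $P''=P$ and $R''=R$ forces $P'=P$ and $R'=R$, so the primed measures cannot be "anything". Instead, let $R'=P'=Q$ be a common point mass and set $P''=(P+(2\delta-1)P')/(2\delta)$, and define $R''$ the same way. This regime does not matter here, since $\mathrm{Cov}_{\eta,\delta}$ requires $\delta<1$ and the paper only uses $\delta<1/4$.
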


%\renyi*

\subsection{Proof of Lemma~\ref{lem:renyi_onebit_mismatch_explicit}}
The central idea of the proof is to relate the true signal $x^*$ to the auxiliary signal $z^*$ through an optimal coupling associated with
$\mathcal W_{\infty,\mathrm{geo}}(R_0,S_0)$.
Since $x^*$ and $z^*$ differ by at most $\epsilon$ in spherical geodesic distance, if the posterior-generated signal $\widehat x$ is at distance at least $d+\epsilon$ from the true signal $x^*$, then it is at distance at least $d$ from the auxiliary signal $z^*$. Therefore, the error event under the true model can first be reduced to a tail event centered at the auxiliary signal $z^*$.
We then use a R\'enyi/H\"older change of measure to convert the posterior sampling error under the true observation $y$ into the corresponding posterior sampling error under the auxiliary observation $u$.

First, by the choice of the coupling $\Pi$, we have
\begin{equation}
    \mathrm{d}_{\mathrm{S}}(x^*,z^*)\le \epsilon
    \qquad \Pi\text{-almost surely}.
\end{equation}
Therefore, under this coupling, the spherical triangle inequality implies that if
\begin{equation}
    \mathrm{d}_{\mathrm{S}}(x^*,\widehat x)\ge d+\epsilon,
\end{equation}
then
\begin{equation}
    \mathrm{d}_{\mathrm{S}}(z^*,\widehat x)
    \ge
    \mathrm{d}_{\mathrm{S}}(x^*,\widehat x)-\mathrm{d}_{\mathrm{S}}(x^*,z^*)
    \ge d.
\end{equation}
Thus, we have the event inclusion
\begin{equation}
    \left\{
    \mathrm{d}_{\mathrm{S}}(x^*,\widehat x)\ge d+\epsilon
    \right\}
    \subseteq
    \left\{
    \mathrm{d}_{\mathrm{S}}(z^*,\widehat x)\ge d
    \right\}.
\end{equation}
Consequently,
\begin{equation}
\label{eq:event_reduction}
    \Pr\left(
    \mathrm{d}_{\mathrm{S}}(x^*,\widehat x)\ge d+\epsilon
    \right)
    \le
    \Pr\left(
    \mathrm{d}_{\mathrm{S}}(z^*,\widehat x)\ge d
    \right).
\end{equation}

Next, fix a realization $z\in\mathbb S^{n-1}$ of the auxiliary signal and define the tail set centered at $z$ by
\begin{equation}
    E_z
    :=
    \left\{
    w\in\mathbb S^{n-1}:
    \mathrm{d}_{\mathrm{S}}(z,w)\ge d
    \right\}.
\end{equation}
Let $K_A(s,\cdot)$ denote the posterior kernel induced by the prior $D$ and the one-bit likelihood, that is, for any measurable set
$B\subseteq\mathbb S^{n-1}$,
\begin{equation}
   K_A(s,B)
    :=
    \Pr_{w\sim D}\!\left(
        w\in B
        \,\middle|\,
        \operatorname{sign}(Aw+\xi)=s,\ A
    \right).
\end{equation}
More explicitly, if $\mu_{w,A}$ denotes the distribution of the one-bit observation
$\operatorname{sign}(Aw+\xi)$ conditional on $w$ and $A$, then
\begin{equation}
    K_A(s,B)
    =
    \frac{
    \int_B \mu_{w,A}(s)\,dD(w)
    }{
    \int_{\mathbb S^{n-1}}\mu_{v,A}(s)\,dD(v)
    }.
\end{equation}
It follows that $\widehat{x},\widehat{z}$ satisfy
\begin{equation}
    \widehat x\sim K_A(y,\cdot),
    \qquad
    \widehat z\sim K_A(u,\cdot).
\end{equation}
After conditioning on the random variables $x^*=x$ and $z^*=z$, the set $E_z$ is fixed, while the remaining randomness comes from the measurement matrix $A$, the noise $\xi$, and the posterior sample $\widehat x$ induced by the observation $y$.
Let $\mathsf M_x^{(m)}$ denote the distribution of the joint random variable
$(A,y)=\left(A,\operatorname{sign}(Ax+\xi)\right)$ when the true signal is fixed at $x$. Then
\begin{equation}
\label{eq:conditional_integral_representation}
    \Pr\left(
    \mathrm{d}_{\mathrm{S}}(z,\widehat x)\ge d
    | x^*=x,z^*=z
    \right)
    =
    \int K_A(y,E_z)\,d\mathsf M_x^{(m)}(A,y).
\end{equation}
This is because, conditional on $(A,y)$, the posterior-generated signal satisfies $\widehat x\sim K_A(y,\cdot)$, and hence
\begin{equation}
    \Pr\left(
    \widehat x\in E_z
    | A,y
    \right)
    =
    K_A(y,E_z).
\end{equation}

Similarly, let $\mathsf M_z^{(m)}$ denote the distribution of the joint random variable
$(A,u)=\left(A,\operatorname{sign}(Az+\xi')\right)$
when the auxiliary signal is fixed at $z$.

We now perform a change of measure for fixed $x,z\in\mathbb S^{n-1}$. Since the Gaussian noise assigns positive probability to every one-bit pattern, we have
\begin{equation}
    \mathsf M_x^{(m)}
    \ll
    \mathsf M_z^{(m)}.
\end{equation}
Denote the corresponding likelihood ratio by
\begin{equation}
    L_{x,z}(A,s)
    :=
    \frac{d\mathsf M_x^{(m)}}{d\mathsf M_z^{(m)}}(A,s),
    \qquad
    s\in\{-1,+1\}^m.
\end{equation}
Thus, by Eq.~\eqref{eq:conditional_integral_representation}, we obtain
\begin{align}
    \int K_A(y,E_z)\,d\mathsf M_x^{(m)}(A,y)
    &=
    \int K_A(s,E_z)
    L_{x,z}(A,s)
    \,d\mathsf M_z^{(m)}(A,s).
\end{align}

Let
\begin{equation}
    q:=1+\lambda,
    \qquad
    \theta:=\frac{q-1}{q}
    =
    \frac{\lambda}{1+\lambda}.
\end{equation}
Applying H\"older's inequality to the preceding display gives
\begin{align}
    \int K_A(s,E_z)\,d\mathsf M_x^{(m)}(A,s)
    &\le
    \left(
    \int L_{x,z}(A,s)^q
    \,d\mathsf M_z^{(m)}(A,s)
    \right)^{1/q}
    \left(
    \int K_A(s,E_z)^{q/(q-1)}
    \,d\mathsf M_z^{(m)}(A,s)
    \right)^{(q-1)/q}.
\end{align}
Since$0\le K_A(s,E_z)\le 1$ and $q/(q-1)>1$, we have
\begin{equation}
    K_A(s,E_z)^{q/(q-1)}
    \le
    K_A(s,E_z).
\end{equation}
Therefore,
\begin{align}
\label{eq:holder_joint_measure}
    \int K_A(s,E_z)\,d\mathsf M_x^{(m)}(A,s)
    &\le
    \left(
    \int L_{x,z}(A,s)^q
    \,d\mathsf M_z^{(m)}(A,s)
    \right)^{1/q}
    p_z^\theta,
\end{align}
where
\begin{equation}
    p_z
    :=
    \int K_A(s,E_z)\,d\mathsf M_z^{(m)}(A,s).
\end{equation}
Here, $p_z$ denotes the probability that the posterior-generated signal $\widehat z$ falls into the tail set $E_z$ when the auxiliary signal is fixed at $z$ and the observation is
$u=\operatorname{sign}(Az+\xi')$.

We next control the R\'enyi-type moment in Eq.~\eqref{eq:holder_joint_measure}. To this end, introduce the unquantized joint Gaussian observation laws.
Let $\widetilde{\mathsf M}_x^{(m)}$ denote the distribution of the joint random variable
$(A,V_x)=(A,Ax+\xi)$,
and let $\widetilde{\mathsf M}_z^{(m)}$ denote the distribution of the joint random variable
$(A,V_z)=(A,Az+\xi')$.
Since the one-bit observation is obtained through the mapping
\begin{equation}
    (A,v)\mapsto (A,\operatorname{sign}(v)),
\end{equation}
By the data processing inequality for classical R\'enyi divergence of order q>1, applied to the measurable map $(A,v)\to(A,sign(v))$,we have
\begin{equation}
    D_q\left(
    \mathsf M_x^{(m)}
    \Vert
    \mathsf M_z^{(m)}
    \right)
    \le
    D_q\left(
    \widetilde{\mathsf M}_x^{(m)}
    \Vert
    \widetilde{\mathsf M}_z^{(m)}
    \right).
\end{equation}
Equivalently,
\begin{align}
\label{eq:data_processing_lq}
    \int L_{x,z}(A,s)^q
    \,d\mathsf M_z^{(m)}(A,s)
    &\le
    \int
    \left(
    \frac{
    d\widetilde{\mathsf M}_x^{(m)}
    }{
    d\widetilde{\mathsf M}_z^{(m)}
    }(A,v)
    \right)^q
    d\widetilde{\mathsf M}_z^{(m)}(A,v).
\end{align}

We now compute the right-hand side. Since $A$ has the same marginal distribution under both laws, and conditional on $A$,
\begin{equation}
    V_x| A
    \sim
    \mathcal N(Ax,\sigma^2I_m),
    \qquad
    V_z| A
    \sim
    \mathcal N(Az,\sigma^2I_m),
\end{equation}
the R\'enyi moment formula for Gaussian distributions with the same covariance gives
\begin{align}
&\int
    \left(
    \frac{
    d\widetilde{\mathsf M}_x^{(m)}
    }{
    d\widetilde{\mathsf M}_z^{(m)}
    }(A,v)
    \right)^q
    d\widetilde{\mathsf M}_z^{(m)}(A,v)
    \nonumber\\
&\qquad
=
    \mathbb E_A
    \exp\left(
    \frac{q(q-1)}{2\sigma^2}
    \|A(x-z)\|_2^2
    \right).
\end{align}
Let
\begin{equation}
    h:=x-z.
\end{equation}
If $h\neq0$, since the rows of $A$ are independent standard Gaussian vectors, we have
\begin{equation}
    \frac{\|Ah\|_2^2}{\|h\|_2^2}
    \sim
    \chi_m^2.
\end{equation}
Hence, as long as$\frac{q(q-1)\|h\|_2^2}{\sigma^2}<1$,
we have
\begin{align}
    \mathbb E_A
    \exp\left(
    \frac{q(q-1)}{2\sigma^2}
    \|Ah\|_2^2
    \right)
    &=
    \left(
    1-
    \frac{q(q-1)\|h\|_2^2}{\sigma^2}
    \right)^{-m/2}.
\end{align}
If $h=0$, then the above exponential moment equals $1$, and the formula can be understood by continuous extension.

We now control the Euclidean distance by the spherical mismatch radius. Since $x,z\in\mathbb S^{n-1}$, we have
\begin{equation}
    \|x-z\|_2
    =
    2\sin\left(
    \frac{\pi \mathrm{d}_{\mathrm{S}}(x,z)}{2}
    \right).
\end{equation}
Moreover, under the coupling $\Pi$, we have $\mathrm{d}_{\mathrm{S}}(x,z)\le\epsilon$, and since $ 0<\epsilon\le1 $,
\begin{equation}
    \|x-z\|_2
    \le
    2\sin\left(
    \frac{\pi\epsilon}{2}
    \right)
    =
    r_\epsilon.
\end{equation}
Combining this with $q=1+\lambda$ and the condition$\lambda(1+\lambda)r_\epsilon^2<\sigma^2$,we obtain
\begin{equation}
    \frac{q(q-1)\|x-z\|_2^2}{\sigma^2}
    \le
    \frac{\lambda(1+\lambda)r_\epsilon^2}{\sigma^2}
    <
    1.
\end{equation}
Therefore,
\begin{align}
    \left(
    \int L_{x,z}(A,s)^q
    \,d\mathsf M_z^{(m)}(A,s)
    \right)^{1/q}
    &\le
    \left(
    1-
    \frac{\lambda(1+\lambda)r_\epsilon^2}{\sigma^2}
    \right)^{-\frac{m}{2(1+\lambda)}}.
\end{align}
Substituting this into Eq.~\eqref{eq:holder_joint_measure}, we obtain, for any fixed signal pair $x,z$ satisfying $\mathrm{d}_{\mathrm{S}}(x,z)\le\epsilon$,
\begin{align}
\label{eq:fixed_pair_bound}
    \int K_A(s,E_z)\,d\mathsf M_x^{(m)}(A,s)
    &\le
    \left(
    1-\frac{\lambda(1+\lambda)r_\epsilon^2}{\sigma^2}
    \right)^{-\frac{m}{2(1+\lambda)}}
    p_z^{\frac{\lambda}{1+\lambda}}.
\end{align}

Finally, we take expectation with respect to the coupling $(x^*,z^*)\sim\Pi$. By
Eq.~\eqref{eq:event_reduction} and Eq.~\eqref{eq:fixed_pair_bound}, we get
\begin{align}
    \Pr\left(
    \mathrm{d}_{\mathrm{S}}(x^*,\widehat x)\ge d+\epsilon
    \right)
    &\le
    \left(
    1-\frac{\lambda(1+\lambda)r_\epsilon^2}{\sigma^2}
    \right)^{-\frac{m}{2(1+\lambda)}}
    \mathbb E_{z^*}
    \left[
    p_{z^*}^{\frac{\lambda}{1+\lambda}}
    \right].
\end{align}
Since
\begin{equation}
    0<\frac{\lambda}{1+\lambda}<1,
\end{equation}
the function $t\mapsto t^{\lambda/(1+\lambda)}$ is concave. Therefore, by Jensen's inequality,
\begin{equation}
    \mathbb E_{z^*}
    \left[
    p_{z^*}^{\frac{\lambda}{1+\lambda}}
    \right]
    \le
    \left(
    \mathbb E_{z^*}[p_{z^*}]
    \right)^{\frac{\lambda}{1+\lambda}}.
\end{equation}
By the definition of $p_z$,
\begin{equation}
    \mathbb E_{z^*}[p_{z^*}]
    =
    \Pr\left(
    \mathrm{d}_{\mathrm{S}}(z^*,\widehat z)\ge d
    \right),
\end{equation}
where
\begin{equation}
    u=\operatorname{sign}(Az^*+\xi'),
    \qquad
    \widehat z\sim K_A(u,\cdot).
\end{equation}
Hence,
\begin{equation}
    \Pr\left(
    \mathrm{d}_{\mathrm{S}}(x^*,\widehat x)\ge d+\epsilon
    \right)
    \le
    \left(
    1-\frac{\lambda(1+\lambda)r_\epsilon^2}{\sigma^2}
    \right)^{-\frac{m}{2(1+\lambda)}}
    \Pr\left(
    \mathrm{d}_{\mathrm{S}}(z^*,\widehat z)\ge d
    \right)^{\frac{\lambda}{1+\lambda}}.
\end{equation}
This completes the proof.

\subsection{Proof of Theorem~\ref{cor:main}}

Based on the above lemmas, we derive the following lemma, which is closely related to Theorem~\ref{cor:main}.
\begin{lemma}
\label{lem:main}
Let \(R\) and \(P\) be two probability distributions on the unit sphere \(\mathbb S^{n-1}\). Let \(p\ge 1\), \(0<\eta<1\), \(0<\delta<1/4\), and assume that
$\mathcal W_{p,\mathrm{geo}}(R,P)\le \rho $.

Define the effective mismatch level
$\varepsilon_\delta:=\rho/\delta^{1/p}$,
and define the parameter $
\bar\eta
:=
\eta+\varepsilon_\delta $.
We take $c\in\left(1,\frac{1-\varepsilon_\delta}{\bar\eta}-1\right)$, assume further that \(2\eta+3\varepsilon_\delta<1\), which guarantees that
$\left(1,\frac{1-\varepsilon_\delta}{\bar\eta}-1\right)$
is a nonempty interval. Given \((y,A)\),
$\widehat x
\sim
P(\cdot| y,A)$
is the posterior sampling estimator based on \(P\).
Furthermore, define
\(r_{\varepsilon_\delta}
:=
2\sin\left(
\frac{\pi\varepsilon_\delta}{2}
\right)\), for any $
0<\lambda\le
\min\left\{
1,\,
\frac{\sigma^2}{3r_{\varepsilon_\delta}^2}
\right\}$
(if $\rho=0$ ,we take the $\lambda=1$), and any \(\beta\in(0,1)\), with probability at least \(1-\beta\) over the draw of \(A\), we have
\begin{align}
\label{eq:main_wp_truncated_bound_cn}
&
\Pr_{x^*\sim R,\xi,\widehat{x}| A}\left(
\mathrm{d}_{\mathrm{S}}(x^*,\widehat x)
\ge
(c+1)\bar\eta+\varepsilon_\delta
\right)
\nonumber\\
&\qquad\le
2\delta
+
\frac{1}{\beta}
\left(
1-
\frac{\lambda(1+\lambda)r_{\varepsilon_\delta}^2}{\sigma^2}
\right)^{-\frac{m}{2(1+\lambda)}}
\exp\!\left(
\frac{\lambda}{1+\lambda}
\left(
\log \mathrm{Cov}_{\eta,\delta}(R)+\log{\frac{1}{1-2\delta}}-\frac m2\Delta_\sigma(c,\bar\eta)^2
\right)
\right).
\end{align}
In particular, when \(R=P\), we have \(\rho=0\). Taking \(\lambda=1\), we obtain
\begin{equation}
\Pr_{x^*\sim R,\xi,\widehat{x}| A}\left(
\mathrm{d}_{\mathrm{S}}(x^*,\widehat x)
\ge
(c+1)\eta
\right)\le2\delta+\frac{1}{\beta}
\exp\!\left(
\frac{1}{2}
\left(
\log \mathrm{Cov}_{\eta,\delta}(R)+\log{\frac{1}{1-2\delta}}-\frac m2\Delta_\sigma(c,\eta)^2
\right)
\right).
\end{equation}
\end{lemma}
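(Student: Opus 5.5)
The plan is to combine \Cref{lem:cov_lemma}, \Cref{lemma:orup} and \Cref{lem:renyi_onebit_mismatch_explicit}, working throughout with the annealed probability (averaged over $A$) and passing to a conditional-in-$A$ statement by Markov's inequality only at the end.

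First, set $k=\log\mathrm{Cov}_{\eta,\delta}(R)$ and apply \Cref{lem:cov_lemma}. This gives the decompositions $R=(1-2\delta)R'+2\delta R''$ and $P=(1-2\delta)P'+2\delta P''$, with $\mathcal W_{\infty,\mathrm{geo}}(R',P')\le\varepsilon_\delta$, and a discrete $Q$ with $|\mathrm{supp}(Q)|\le e^k$ that is within $\eta$ of $R'$ or of $P'$ in $\mathcal W_{\infty,\mathrm{geo}}$.

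In either case, $P'$ is concentrated on the union $G$ of at most $e^k$ caps of radius $\bar\eta=\eta+\varepsilon_\delta$ around the support points of $Q$. If $Q$ is close to $R'$, compose the two $\mathcal W_\infty$ couplings and use the triangle inequality. One subtlety is that the lemma only gives $\mathrm{Cov}$ of $R$, so the caps may need to be centred via $R'$ and enlarged by $\varepsilon_\delta$; this is exactly why the radius is $\bar\eta$.

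Next, apply \Cref{lemma:orup} with $Q\leftarrow P$, $Q_g=P'$, $Q_b=P''$, $\alpha=2\delta<1/2$, $N\le e^k$ and radius $\bar\eta$. The hypothesis $c<1/\bar\eta-1$ follows from $c<(1-\varepsilon_\delta)/\bar\eta-1$, and $\bar\eta<1/2$ follows from $2\eta+3\varepsilon_\delta<1$. For $z^*\sim P'$ and $\widehat z\sim P(\cdot\mid u,A)$ this yields
\[
\Pr\bigl(\mathrm{d}_{\mathrm S}(z^*,\widehat z)\ge(c+1)\bar\eta\bigr)\le \frac{e^{k}}{1-2\delta}\exp\Bigl(-\frac m2\Delta_\sigma(c,\bar\eta)^2\Bigr).
\]

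Then apply \Cref{lem:renyi_onebit_mismatch_explicit} with $R_0=R'$, $S_0=P'$, $D=P$, $\epsilon=\varepsilon_\delta$ and $d=(c+1)\bar\eta$. The condition $\epsilon\le1$ holds, and the posterior in that lemma is the posterior under the full $P$ for both samples, which is exactly what we need. This gives, for $x^*\sim R'$,
\[
\Pr\bigl(\mathrm{d}_{\mathrm S}(x^*,\widehat x)\ge(c+1)\bar\eta+\varepsilon_\delta\bigr)\le \Bigl(1-\tfrac{\lambda(1+\lambda)r_{\varepsilon_\delta}^2}{\sigma^2}\Bigr)^{-\frac{m}{2(1+\lambda)}}\exp\Bigl(\tfrac{\lambda}{1+\lambda}\bigl(k+\log\tfrac1{1-2\delta}-\tfrac m2\Delta_\sigma(c,\bar\eta)^2\bigr)\Bigr).
\]
When $\rho=0$, we have $R'=P'$ with a trivial coupling, so we may take $\lambda=1$ and the prefactor is $1$.

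For $x^*\sim R$, the event $x^*\sim R''$ has weight $2\delta$. Writing the annealed failure probability as $\mathbb E_A[F(A)]$ with $F(A)\le 2\delta+(1-2\delta)F'(A)$, apply Markov's inequality to $F'(A)$: with probability at least $1-\beta$ over $A$, $F'(A)\le \mathbb E[F'(A)]/\beta$. This gives \eqref{eq:main_wp_truncated_bound_cn}.

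The main obstacle is the covering step: certifying that $P'$ is covered by $e^k$ caps of radius $\bar\eta$ whichever branch of condition 1 in \Cref{lem:cov_lemma} holds. The other steps are direct plug-ins.
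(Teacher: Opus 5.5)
Your proposal is correct and follows the paper's proof essentially step for step: \Cref{lem:cov_lemma} with $k=\log\mathrm{Cov}_{\eta,\delta}(R)$, then the two-case argument that $P'$ is covered by $|\mathrm{supp}(Q)|$ caps of radius $\bar\eta$, then \Cref{lemma:orup} with $\alpha=2\delta$, then the R\'enyi transfer with $R_0=R'$, $S_0=P'$, $D=P$, and finally Markov's inequality over $A$ plus the $2\delta$ residual mass. Your explicit checks that $\bar\eta<1/2$, $c<1/\bar\eta-1$ and $\varepsilon_\delta\le 1$ are small but useful additions that the paper leaves implicit, as is your separate handling of $\rho=0$, where $\epsilon=0$ lies outside the R\'enyi lemma's stated range.
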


\begin{proof}

By assumption, \(\mathcal W_{p,\mathrm{geo}}(R,P)\le \rho\),
and we also have
\begin{equation}
\min\{
\log \mathrm{Cov}_{\eta,\delta}(P),
\log \mathrm{Cov}_{\eta,\delta}(R)
\}
\le \log \mathrm{Cov}_{\eta,\delta}(R).
\end{equation}
By \(\Cref{lem:cov_lemma}\), there exist probability distributions
\(R'\), \(R''\), \(P'\), \(P''\), and a finite discrete distribution \(Q\), such that
\begin{equation}
|\operatorname{supp}(Q)|\le \mathrm{Cov}_{\eta,\delta}(R),
\end{equation}
and
\begin{equation}
\label{eq:good_winfty_mismatch_cn}
\mathcal W_{\infty,\mathrm{geo}}(R',P')
\le
\frac{\rho}{\delta^{1/p}}
=
\varepsilon_\delta .
\end{equation}
Moreover, \(R\) and \(P\) can be decomposed as
\begin{equation}
\label{eq:mixture_decomp_cn}
R=(1-2\delta)R' + 2\delta R'',
\qquad
P=(1-2\delta)P' + 2\delta P'' .
\end{equation}
In addition, the lemma also guarantees that
\begin{equation}
\label{eq:min_q_close_cn}
\min\{
\mathcal W_{\infty,\mathrm{geo}}(P',Q),
\mathcal W_{\infty,\mathrm{geo}}(R',Q)
\}
\le \eta .
\end{equation}

This step converts the global \(W_p\) mismatch into a
\(W_\infty\) mismatch on the good parts. The residual components \(R''\) and \(P''\)
may be arbitrary distributions, but their weights in the original distributions are at most \(2\delta\).

We next prove that \(P'\) can be fully covered by at most \(\mathrm{Cov}_{\eta,\delta}(R)\) spherical caps of radius \(\bar\eta\).

By Eq.~\eqref{eq:min_q_close_cn}, there are two cases.

The first case is \(\mathcal W_{\infty,\mathrm{geo}}(P',Q)\le \eta\).

By the definition of the \(W_\infty\) distance, there exists a coupling
\(\gamma\in\Pi(P',Q)\) such that
\begin{equation}
    \mathrm{d}_{\mathrm{S}}(x,q)\le \eta
    \qquad \gamma\text{-a.s.}
\end{equation}
Since \(Q\) is a finite discrete distribution and the second marginal of \(\gamma\) is \(Q\),
we have
\begin{equation}
    \gamma\left(
    \left\{(x,q): q\in \operatorname{supp}(Q),
    \ \mathrm{d}_{\mathrm{S}}(x,q)\le \eta \right\}
    \right)=1 .
\end{equation}
Therefore,
\begin{equation}
    P'\left(
    \bigcup_{q\in\operatorname{supp}(Q)}
    \mathcal B(q,\eta)
    \right)=1 .
\end{equation}
By the definition of the \((\eta,0)\)-covering number, it follows that
\begin{equation}
    \operatorname{Cov}_{\eta,0}(P')
    \le |\operatorname{supp}(Q)|
    \le \mathrm{Cov}_{\eta,\delta}(R) .
\end{equation}
Moreover, since \(\bar\eta\ge \eta\), we obtain
\begin{equation}
    \operatorname{Cov}_{\bar\eta,0}(P')
    \le
    \operatorname{Cov}_{\eta,0}(P')
    \le \mathrm{Cov}_{\eta,\delta}(R) .
\end{equation}

The second case is \(\mathcal W_{\infty,\mathrm{geo}}(R',Q)\le \eta\).

Combining this with Eq.~\eqref{eq:good_winfty_mismatch_cn} and using the triangle inequality for
the \(W_\infty\) distance, we have
\begin{equation}
    \mathcal W_{\infty,\mathrm{geo}}(P',Q)
    \le
    \mathcal W_{\infty,\mathrm{geo}}(P',R')
    +
    \mathcal W_{\infty,\mathrm{geo}}(R',Q)
    \le
    \varepsilon_\delta+\eta
    =
    \bar\eta .
\end{equation}
Thus, there exists a coupling
\(\tilde\gamma\in\Pi(P',Q)\) such that
\begin{equation}
    \mathrm{d}_{\mathrm{S}}(x,q)\le \bar\eta
    \qquad \tilde\gamma\text{-a.s.}
\end{equation}
Similarly, we obtain
\begin{equation}
    P'\left(
    \bigcup_{q\in\operatorname{supp}(Q)}
    \mathcal B(q,\bar\eta)
    \right)=1 .
\end{equation}
Hence,
\begin{equation}
    \operatorname{Cov}_{\bar\eta,0}(P')
    \le |\operatorname{supp}(Q)|
    \le \mathrm{Cov}_{\eta,\delta}(R) .
\end{equation}

Combining the two cases, we have
\begin{equation}
\label{cov_pandR}
    \operatorname{Cov}_{\bar\eta,0}(P')\le \mathrm{Cov}_{\eta,\delta}(R) .
\end{equation}
Moreover, defining
\begin{equation}
    G_P:=
    \bigcup_{q\in\operatorname{supp}(Q)}
    \mathcal B(q,\bar\eta),
\end{equation}
the previous argument shows that
\begin{equation}
    P'(G_P)=1.
\end{equation}
Since \(P=(1-2\delta)P'+2\delta P''\), it follows that
\begin{equation}
    P(G_P)\ge 1-2\delta .
\end{equation}
Consequently,
\begin{equation}
    \operatorname{Cov}_{\bar\eta,2\delta}(P)\le \mathrm{Cov}_{\eta,\delta}(R) .
\end{equation}

Consider the following matched good-component experiment:
\begin{equation}
z^*\sim P',
\qquad
u=\operatorname{sign}(Az^*+\xi'),
\end{equation}
where \(\xi'\sim\mathcal N(0,\sigma^2I_m)\), and \(\xi'\) is independent of all other random variables.
Let
\(
\widehat z
\sim
P(\cdot| u,A)
\)
be the posterior sampling estimator based on the full learned prior distribution\(P\).

By Eq.~\eqref{cov_pandR}, \(P'\) can be fully covered by at most \(\mathrm{Cov}_{\eta,\delta}(R)\)
spherical caps of radius \(\bar\eta\). Moreover, by Eq.~\eqref{eq:mixture_decomp_cn},
\[
P=(1-2\delta)P' + 2\delta P'' .
\]
Therefore, applying \(\Cref{lemma:orup}\) with full prior \(P\), good component \(P'\),
bad component \(P''\), mixture parameter \(\alpha=2\delta\), covering radius
\(\bar\eta\), we get \(N=|\operatorname{supp}(Q)|\le \mathrm{Cov}_{\eta,\delta}(R)\).Then we obtain
\begin{align}
\label{eq:matched_pprime_bound_cn}
\Pr_{z^*\sim P',A,\xi',\widehat{z}}\left(
\mathrm{d}_{\mathrm{S}}(z^*,\widehat z)
\ge
(c+1)\bar\eta
\right)
&\le
\frac{1}{1-2\delta}
\mathrm{Cov}_{\bar\eta,2\delta}(P)
\exp\!\left(
-\frac m2\Delta_\sigma(c,\bar\eta)^2
\right)
\nonumber,\\
&\le
\frac{1}{1-2\delta}
\exp\!\left(
\log \mathrm{Cov}_{\eta,\delta}(R)-\frac m2\Delta_\sigma(c,\bar\eta)^2
\right)
\nonumber,\\
&=
\exp\!\left(
\log \mathrm{Cov}_{\eta,\delta}(R)+\log\frac{1}{1-2\delta}
-\frac m2\Delta_\sigma(c,\bar\eta)^2
\right).
\end{align}

By Eq.~\eqref{eq:good_winfty_mismatch_cn}, we have
\begin{equation}
\mathcal W_{\infty,\mathrm{geo}}(R',P')
\le
\varepsilon_\delta .
\end{equation}
Therefore, there exists a coupling \(\Pi\in\Pi(R',P')\) such that, if \((x^*,z^*)\sim \Pi\),
then almost surely
\begin{equation}
\mathrm{d}_{\mathrm{S}}(x^*,z^*)\le \varepsilon_\delta .
\end{equation}

Now consider two one-bit observation models:
\begin{equation}
y=\operatorname{sign}(Ax^*+\xi),
\qquad
u=\operatorname{sign}(Az^*+\xi').
\end{equation}
Here \(x^*\sim R'\) and \(z^*\sim P'\). The posterior samples are both taken
with respect to the full learned prior distribution\(P\), namely
\begin{equation}
\widehat x\sim P(\cdot| y,A),
\qquad
\widehat z\sim P(\cdot| u,A).
\end{equation}
Applying
\Cref{lem:renyi_onebit_mismatch_explicit} with
\[
R_0=R',
\qquad
S_0=P',
\qquad
D=P,
\qquad
\epsilon=\varepsilon_\delta,
\qquad
d=(c+1)\bar\eta,
\]
we obtain
\begin{align}
\label{eq:renyi_transfer_step_cn}
&
\Pr_{x^*\sim R',A,\xi,\widehat{x}}\left(
\mathrm{d}_{\mathrm{S}}(x^*,\widehat x)
\ge
(c+1)\bar\eta+\varepsilon_\delta
\right)
\nonumber\\
&\qquad\le
\left(
1-
\frac{\lambda(1+\lambda)r_{\varepsilon_\delta}^2}{\sigma^2}
\right)^{-\frac{m}{2(1+\lambda)}}
\Pr_{z^*\sim P',A,\xi',\widehat{z}}\left(
\mathrm{d}_{\mathrm{S}}(z^*,\widehat z)
\ge
(c+1)\bar\eta
\right)^{\frac{\lambda}{1+\lambda}} .
\end{align}

Substituting Eq.~\eqref{eq:matched_pprime_bound_cn} into
Eq.~\eqref{eq:renyi_transfer_step_cn}, we get
\begin{align}
\label{eq:good_part_final_bound_cn}
&
\Pr_{x^*\sim R',A,\xi,\widehat{x}}\left(
\mathrm{d}_{\mathrm{S}}(x^*,\widehat x)
\ge
(c+1)\bar\eta+\varepsilon_\delta
\right)
\nonumber\\
&\qquad\le
\left(
1-
\frac{\lambda(1+\lambda)r_{\varepsilon_\delta}^2}{\sigma^2}
\right)^{-\frac{m}{2(1+\lambda)}}
\exp\!\left(
\frac{\lambda}{1+\lambda}
\left(
\log \mathrm{Cov}_{\eta,\delta}(R)+\log\frac{1}{1-2\delta}
-\frac m2\Delta_\sigma(c,\bar\eta)^2
\right)
\right).
\end{align}
For each realization of \(A\), define
\begin{align}
G(A)
:=
\Pr_{x^*\sim R',\xi,\widehat{x}| A}
\left(
\mathrm{d}_{\mathrm{S}}(x^*,\widehat x)
\ge
(c+1)\bar\eta+\varepsilon_\delta
\right).
\end{align}
Then Eq.~\eqref{eq:good_part_final_bound_cn} gives
\begin{align}
\mathbb E_A\left[G(A)\right]
\le
\left(
1-
\frac{\lambda(1+\lambda)r_{\varepsilon_\delta}^2}{\sigma^2}
\right)^{-\frac{m}{2(1+\lambda)}}
\exp\!\left(
\frac{\lambda}{1+\lambda}
\left(
\log \mathrm{Cov}_{\eta,\delta}(R)+\log\frac{1}{1-2\delta}
-\frac m2\Delta_\sigma(c,\bar\eta)^2
\right)
\right).
\end{align}
By Markov's inequality, for every \(\beta\in(0,1)\), we obtain that, with probability at least \(1-\beta\) over \(A\),
\begin{align}
\label{eq.173}
&
\Pr_{x^*\sim R',\xi,\widehat{x}| A}
\left(
\mathrm{d}_{\mathrm{S}}(x^*,\widehat x)
\ge
(c+1)\bar\eta+\varepsilon_\delta
\right)
\nonumber\\
&\qquad\le
\frac{1}{\beta}
\left(
1-
\frac{\lambda(1+\lambda)r_{\varepsilon_\delta}^2}{\sigma^2}
\right)^{-\frac{m}{2(1+\lambda)}}
\exp\!\left(
\frac{\lambda}{1+\lambda}
\left(
\log \mathrm{Cov}_{\eta,\delta}(R)+\log\frac{1}{1-2\delta}
-\frac m2\Delta_\sigma(c,\bar\eta)^2
\right)
\right).
\end{align}

By the decomposition Eq.~\eqref{eq:mixture_decomp_cn}, we have
\begin{equation}
R=(1-2\delta)R' + 2\delta R'' .
\end{equation}
Therefore, for every fixed realization of \(A\), the probability that the true signal \(x^*\) comes from the residual component \(R''\) is
\(2\delta\). For any event \(\mathcal E\), we have
\begin{equation}
\Pr_{x^*\sim R,\xi,\widehat{x}| A}(\mathcal E)
\le
2\delta+
\Pr_{x^*\sim R',\xi,\widehat{x}| A}(\mathcal E).
\end{equation}
Let
\begin{equation}
\mathcal E
=
\left\{
\mathrm{d}_{\mathrm{S}}(x^*,\widehat x)
\ge
(c+1)\bar\eta+\varepsilon_\delta
\right\}.
\end{equation}
Combining this with Eq.~\eqref{eq.173}, we obtain that, with probability at least \(1-\beta\) over \(A\),
\begin{align}
&
\Pr_{x^*\sim R,\xi,\widehat{x}| A}\left(
\mathrm{d}_{\mathrm{S}}(x^*,\widehat x)
\ge
(c+1)\bar\eta+\varepsilon_\delta
\right)
\nonumber\\
&\qquad\le
2\delta
+
\frac{1}{\beta}\left(
1-
\frac{\lambda(1+\lambda)r_{\varepsilon_\delta}^2}{\sigma^2}
\right)^{-\frac{m}{2(1+\lambda)}}
\exp\!\left(
\frac{\lambda}{1+\lambda}
\left(
\log \mathrm{Cov}_{\eta,\delta}(R)+\log\frac{1}{1-2\delta}
-\frac m2\Delta_\sigma(c,\bar\eta)^2
\right)
\right).
\end{align}

This completes the proof.
\end{proof}

%\uppercorollary*

\begin{proof}[Proof of Theorem~\ref{cor:main}]
Let $\Delta:=\Delta_\sigma(c,\bar\eta),r:=r_{\varepsilon_\delta}$.
By the definition of the one-bit separation gap, we have
\(\Delta_\sigma(c,\bar\eta)=f_\sigma(c\bar\eta)-f_\sigma(\bar\eta)\).
Since \(c>1\), and \(f_\sigma\) is monotonically increasing on the corresponding interval, it follows that \(\Delta>0\).
Moreover, since the range of \(f_\sigma\) is contained in \([0,1]\), we have
\[
    0<\Delta\le 1.
\]

Set
\begin{equation}
    \beta
    :=
    \exp\left(
    -\frac{m}{48}\Delta^2
    \right).
\end{equation}
By Lemma~\ref{lem:main}, for any parameter
\(0<\lambda\le
\min\left\{
1,\,
\frac{\sigma^2}{3r^2}
\right\}\),
with probability at least \(1-\beta\) over the draw of \(A\), we have
\begin{align}
&
\Pr_{x^*\sim R,\xi,\widehat{x}| A}
\left(
\mathrm{d}_{\mathrm{S}}(x^*,\widehat x)
\ge
(c+1)\bar\eta+\varepsilon_\delta
\right)
\nonumber,\\
&\qquad\le
2\delta
+
\frac{1}{\beta}
\left(
1-
\frac{\lambda(1+\lambda)r^2}{\sigma^2}
\right)^{-\frac{m}{2(1+\lambda)}}
\exp\left(
\frac{\lambda}{1+\lambda}
\left(
\log \mathrm{Cov}_{\eta,\delta}(R)+\log{\frac{1}{1-2\delta}}-\frac m2\Delta^2
\right)
\right)
\nonumber,\\
&\qquad\le
2\delta
+
\frac{1}{\beta}
\left(
1-
\frac{\lambda(1+\lambda)r^2}{\sigma^2}
\right)^{-\frac{m}{2(1+\lambda)}}
\exp\left(
\frac{\lambda}{1+\lambda}
\left(
\log \mathrm{Cov}_{\eta,\delta}(R)+\log2-\frac m2\Delta^2
\right)
\right),
\label{eq:cor_proof_start_cn}
\end{align}
where the last step uses \(\delta<1/4\), and hence
\(\log\frac{1}{1-2\delta}\le \log2\).

We now take \(\lambda=1\).
First, we verify that this choice is admissible. By the small model-mismatch assumption in the corollary, we have
\begin{equation}
    \frac{r^2}{\sigma^2}
    \le
    \frac{1}{6}\Delta^2
    \le
    \frac16.
\end{equation}
Thus
\begin{equation}
    1
    \le
    \min\left\{
    1,\,
    \frac{\sigma^2}{3r^2}
    \right\}.
\end{equation}
Hence \(\lambda=1\) lies in the range allowed by the main theorem. Substituting \(\lambda=1\) into \eqref{eq:cor_proof_start_cn} yields
\begin{align}
&
\Pr_{x^*\sim R,\xi,\widehat{x}| A}
\left(
\mathrm{d}_{\mathrm{S}}(x^*,\widehat x)
\ge
(c+1)\bar\eta+\varepsilon_\delta
\right)
\nonumber\\
&\qquad\le
2\delta
+
\frac{1}{\beta}
\left(
1-\frac{2r^2}{\sigma^2}
\right)^{-m/4}
\exp\left(
\frac12 (\log \mathrm{Cov}_{\eta,\delta}(R)+\log2)-\frac m4\Delta^2
\right).
\label{eq:lambda_one_bound_cn}
\end{align}

Next, we control the multiplicative factor induced by the model mismatch. Let
\(u:=\frac{2r^2}{\sigma^2}\).
By the assumption,
\begin{equation}
    0\le u\le \frac{\Delta^2}{3}\le \frac13.
\end{equation}
Using the elementary inequality
\begin{equation}
    -\log(1-u)\le \frac{u}{1-u}\le \frac32 u,
    \qquad 0\le u\le \frac13,
\end{equation}
we obtain
\begin{equation}
    -\log\left(1-\frac{2r^2}{\sigma^2}\right)
    \le
    3\frac{r^2}{\sigma^2}.
\end{equation}
Therefore,
\begin{align}
\label{eq:prefactor_control_cn}
\left(
1-\frac{2r^2}{\sigma^2}
\right)^{-m/4}
&=
\exp\left(
-\frac m4
\log\left(
1-\frac{2r^2}{\sigma^2}
\right)
\right)
\le
\exp\left(
\frac{3m}{4}\frac{r^2}{\sigma^2}
\right)\le
\exp\left(
\frac{m}{8}\Delta^2
\right),
\end{align}
where the last step uses
\(\frac{r^2}{\sigma^2}\le
\frac{1}{6}\Delta^2\).

Substituting \eqref{eq:prefactor_control_cn} into \eqref{eq:lambda_one_bound_cn}, we get
\begin{align}
\label{eq.242}
&
\Pr_{x^*\sim R,\xi,\widehat{x}| A}
\left(
\mathrm{d}_{\mathrm{S}}(x^*,\widehat x)
\ge
(c+1)\bar\eta+\varepsilon_\delta
\right)
\\
&\qquad\le
2\delta
+
\frac{1}{\beta}
\exp\left(
\frac12 (\log \mathrm{Cov}_{\eta,\delta}(R)+\log2)-\frac m4\Delta^2+\frac m8\Delta^2
\right)
\\
&\qquad=
2\delta
+
\frac{1}{\beta}
\exp\left(
\frac12 (\log \mathrm{Cov}_{\eta,\delta}(R)+\log2)-\frac m8\Delta^2
\right).
\end{align}

If, furthermore, \(m\ge \frac{8(\log \mathrm{Cov}_{\eta,\delta}(R)+\log2)}{\Delta^2}\),
then
\begin{equation}
    \frac12 (\log \mathrm{Cov}_{\eta,\delta}(R)+\log2)-\frac m8\Delta^2
    \le
    -\frac{m}{16}\Delta^2.
\end{equation}
Together with the choice
\(\beta=\exp\left(-\frac{m}{48}\Delta^2\right)\), this gives
\begin{equation}
    \frac{1}{\beta}
    \exp\left(
    \frac12 (\log \mathrm{Cov}_{\eta,\delta}(R)+\log2)-\frac m8\Delta^2
    \right)
    \le
    \exp\left(
    \frac{m}{48}\Delta^2-\frac{m}{16}\Delta^2
    \right)
    =
    \exp\left(
    -\frac{m}{24}\Delta^2
    \right).
\end{equation}
It follows from Eq.~\eqref{eq.242} that, with probability at least
\(1-\exp\left(-\frac{m}{48}\Delta_\sigma(c,\bar\eta)^2\right)\)
over the draw of \(A\),
\begin{equation}
    \Pr_{x^*\sim R,\xi,\widehat{x}| A}
    \left(
    \mathrm{d}_{\mathrm{S}}(x^*,\widehat x)
    \ge
    (c+1)\bar\eta+\varepsilon_\delta
    \right)
    \le
    2\delta
    +
    \exp\left(
    -\frac{m}{24}\Delta_\sigma(c,\bar\eta)^2
    \right).
\end{equation}
This proves the first probability upper bound.

We next prove the order-wise sample complexity statement. To ensure that the second term on the right-hand side and the exceptional probability over \(A\) are no larger than the prescribed
\(\zeta\in(0,1)\), it suffices that
\begin{equation}
    \exp\left(
    -\frac{m}{24}\Delta_\sigma(c,\bar\eta)^2
    \right)
    \le
    \zeta.
\end{equation}
Equivalently, it suffices that
\(m\ge24\frac{\log(1/\zeta)}{\Delta_\sigma(c,\bar\eta)^2}\).
On the other hand, to use the exponential bound proved above, we also need
\(m\ge8\frac{\log \mathrm{Cov}_{\eta,\delta}(R)+\log2}{\Delta_\sigma(c,\bar\eta)^2}\).
Therefore, it suffices that
\begin{equation}
    m
    \ge
    C
    \frac{\log \mathrm{Cov}_{\eta,\delta}(R)+\log(1/\zeta)+\log2}
    {\Delta_\sigma(c,\bar\eta)^2}
\end{equation}
where \(C>0\) is an absolute constant. For example, one may take \(C=48\). Hence, with probability at least \(1-\zeta\) over the draw of \(A\),
\begin{equation}
    \Pr_{x^*\sim R,\xi,\widehat{x}| A}
    \left(
    \mathrm{d}_{\mathrm{S}}(x^*,\widehat x)
    \ge
    (c+1)\bar\eta+\varepsilon_\delta
    \right)
    \le
    2\delta+\zeta.
\end{equation}

Finally, we analyze the order of \(\Delta_\sigma(c,\bar\eta)\) under different noise levels.
From
\begin{equation}
    f_\sigma(d)
    =
    \frac1\pi
    \arccos\left(
    \frac{\cos(\pi d)}{\sqrt{1+\sigma^2}}
    \right)
\end{equation}
direct differentiation gives
\begin{equation}
    f_\sigma'(d)
    =
    \frac{\sin(\pi d)}
    {\sqrt{\sigma^2+\sin^2(\pi d)}}.
\end{equation}
Thus
\begin{equation}
    \Delta_\sigma(c,\bar\eta)
    =
    \int_{\bar\eta}^{c\bar\eta}
    \frac{\sin(\pi t)}
    {\sqrt{\sigma^2+\sin^2(\pi t)}}\,dt.
\end{equation}
Since \(c>1\) is fixed and \(\bar\eta=\Theta(\eta)\), the integral is over an interval of length \(\Theta(\eta)\), and \(t=\Theta(\eta)\) throughout the interval. Therefore,
\[
    \sin(\pi t)\asymp t\asymp \eta.
\]
If \(\sigma\lesssim\eta\), then
\begin{equation}
    \frac{\sin(\pi t)}
    {\sqrt{\sigma^2+\sin^2(\pi t)}}
    \asymp
    1,
\end{equation}
and hence
\begin{equation}
    \Delta_\sigma(c,\bar\eta)
    \asymp
    \eta.
\end{equation}
If \(\sigma\gtrsim\eta\), then
\begin{equation}
    \frac{\sin(\pi t)}
    {\sqrt{\sigma^2+\sin^2(\pi t)}}
    \asymp
    \frac{\eta}{\sigma},
\end{equation}
and hence
\begin{equation}
    \Delta_\sigma(c,\bar\eta)
    \asymp
    \frac{\eta^2}{\sigma}.
\end{equation}
Substituting these estimates into
\(m\ge C\frac{\log \mathrm{Cov}_{\eta,\delta}(R)+\log2+\log(1/\zeta)}
{\Delta_\sigma(c,\bar\eta)^2}\)
gives
\begin{equation}
    m=
\begin{cases}
O\!\left((\log \mathrm{Cov}_{\eta,\delta}(R)+\log(1/\zeta))\,\eta^{-2}\right),
& \sigma\lesssim \eta,\\[2mm]
O\!\left((\log \mathrm{Cov}_{\eta,\delta}(R)+\log(1/\zeta))\,\sigma^2\eta^{-4}\right),
& \sigma\gtrsim \eta .
\end{cases}
\end{equation}
This completes the proof.
\end{proof}

\section{Lower bound proofs}
\label{section_proof_B}

%\oneupper*
\subsection{Proof of Lemma~\ref{lemma:oneupper}}
We prove the two claims separately, according to whether the measurement matrix
is random Gaussian or deterministic.

Case 1: When $A_{ij}\sim \mathcal N(0,1)$, we prove an upper bound on the conditional mutual information
\(I(y;x^\ast| A)\). By definition,
\begin{equation}
    I(y;x^\ast| A)
=\mathbb E_{A_0}\big[I(y;x^\ast| A=A_0)\big].
\end{equation}

Therefore, in the inner mutual information, the measurement matrix \(A\) is fixed;
the expectation over \(A\) is only taken at the very end in the outer layer.

For a fixed realization of \(A\), the observations \(y_1,\ldots,y_m\) are conditionally independent given
\(x^\ast\). Thus,
\begin{align}
I(y;x^\ast| A)
&=H(y| A)-H(y| x^\ast,A)\\
&\le\sum_{i=1}^m H(y_i| A)
-\sum_{i=1}^m H(y_i| x^\ast,A)\\
&=\sum_{i=1}^m I(y_i;x^\ast| A).
\end{align}
For each \(i\), conditioned on \(A\) and \(x^\ast\), we have
\begin{equation}
    \Pr\left(y_i=1| x^\ast,A\right)
=\Phi\left(\frac{a_i^\top x^\ast}{\sigma}\right).
\end{equation}

Denote
\begin{equation}
p_i(A,x^\ast)
:=
\Phi\left(\frac{a_i^\top x^\ast}{\sigma}\right).
\end{equation}
Since \(y_i\in\{-1,+1\}\), for a fixed \(A\), we have
\begin{equation}
I(y_i;x^\ast| A)
=
H(y_i| A)-H(y_i| x^\ast,A)
\le
\log 2-\mathbb E_{x^\ast}\big[h(p_i(A,x^\ast))| A\big],
\end{equation}
where $h(p)=-p\log p-(1-p)\log(1-p)$
is the binary entropy function.

Using the basic inequality
\begin{equation}
\log 2-h(p)\le 4\log 2\left(p-\frac12\right)^2,
\qquad p\in[0,1],
\end{equation}
we obtain
\begin{align}
I(y_i;x^\ast| A)
&\le
4\log 2\,
\mathbb E_{x^\ast}\left[
\left(
\Phi\left(\frac{a_i^\top x^\ast}{\sigma}\right)
-\frac12
\right)^2
\middle| A
\right].
\end{align}
Therefore, taking the outer expectation of the above inequality with respect to $A$ and summing over $i=1,\ldots,m$, we obtain
\begin{align}
\label{eq.226}
I(y;x^\ast| A)
&\le
4\log 2
\sum_{i=1}^m
\mathbb E_{A,x^\ast}\left[
\left(
\Phi\left(\frac{a_i^\top x^\ast}{\sigma}\right)
-\frac12
\right)^2
\right].
\end{align}
Now, the expectation is the joint expectation over both \(A\) and \(x^\ast\).
Since \(A\) is independent of \(x^\ast\), and \(\|x^\ast\|_2=1\),
for any fixed \(x^\ast\), we have
$a_i^\top x^\ast \sim \mathcal N(0,1)$.Therefore, for \(Z\sim\mathcal N(0,1)\), we have

\begin{equation}
    \mathbb E_{A,x^\ast}\left[\left(\Phi\left(\frac{a_i^\top x^\ast}{\sigma}\right)-\frac12\right)^2\right]=
\mathbb E_Z\left[\left(\Phi\left(\frac{Z}{\sigma}\right)-\frac12\right)^2\right].
\end{equation}

By Claim~\ref{claim421}, taking \(c=1/\sigma\), we obtain
\begin{equation}
\mathbb E_Z\left[\left(\Phi\left(\frac{Z}{\sigma}\right)-\frac12\right)^2\right]
=\frac{1}{2\pi}\arcsin\left(\frac{1}{1+\sigma^2}\right).
\end{equation}

Combining the above results, we obtain
\begin{equation}
    I(y;x^\ast| A)\le4\log 2
\sum_{i=1}^m\frac{1}{2\pi}\arcsin\left(\frac{1}{1+\sigma^2}\right).
\end{equation}

Thus,
\begin{equation}
I(y;x^\ast| A)
\le\frac{2m\log 2}{\pi}\arcsin\left(\frac{1}{1+\sigma^2}\right).
\end{equation}
Case 2: When $A$ is a deterministic matrix.
Since, conditional on \(x^\ast\), the observations
\(y_1,\ldots,y_m\) are independent, we have
\begin{align}
    I(y;x^\ast)
    &=
    H(y)-H(y| x^\ast) \\
    &\le
    \sum_{i=1}^m H(y_i)
    -
    \sum_{i=1}^m H(y_i| x^\ast) \\
    &=
    \sum_{i=1}^m I(y_i;x^\ast).
\end{align}
For each \(i\), we have
\begin{equation}
    \Pr\left(y_i=1| x^\ast\right)
    =
    \Phi\left(
    \frac{a_i^\top x^\ast}{\sigma}
    \right).
\end{equation}
Therefore, using the Eq.~\eqref{eq.226}, we obtain
\begin{equation}
    I(y_i;x^\ast)
    \le
    4\log 2\,
    \mathbb E_{x^\ast}
    \left[
    \left(
    \Phi\left(
    \frac{a_i^\top x^\ast}{\sigma}
    \right)
    -
    \frac12
    \right)^2
    \right].
\end{equation}
According to the Lagrange mean value theorem, we have
\begin{align}
    \left|\Phi(t) - \Phi(0)\right| &= \left|\Phi'(\xi) \cdot (t - 0)\right|,\\
    &\le   \frac{\left|t\right|}{\sqrt{2\pi}}.
\end{align}
Taking $t=\frac{a_i^\top x^\ast}{\sigma}$,we obtain
\begin{equation}
    \left(
    \Phi\left(
    \frac{a_i^\top x^\ast}{\sigma}
    \right)
    -
    \frac12
    \right)^2
    \le
    \frac{(a_i^\top x^\ast)^2}{2\pi\sigma^2}.
\end{equation}
Therefore,
\begin{equation}
    I(y_i;x^\ast)
    \le
    \frac{2\log 2}{\pi\sigma^2}
    \mathbb E_{x^\ast}
    \left[
    (a_i^\top x^\ast)^2
    \right].
\end{equation}
Summing over \(i=1,\ldots,m\), we get
\begin{align}
    I(y;x^\ast)
    &\le
    \frac{2\log 2}{\pi\sigma^2}
    \sum_{i=1}^m
    \mathbb E_{x^\ast}
    \left[
    (a_i^\top x^\ast)^2
    \right], \\
    &\le
    \frac{2\log 2}{\pi\sigma^2}
    \sum_{i=1}^m
    \|a_i\|_2^2, \\
    &\le
    \frac{2m\log 2}{\pi\sigma^2}
    \|A\|_{2,\infty}^2.
\end{align}
\begin{claim}
\label{claim421}
    For a standard normal random variable $Z \sim \mathcal{N}(0,1)$, where $\Phi(\cdot)$ is the standard normal cumulative distribution function, we have:
\begin{equation}
    \mathbb{E}_{Z}\left[\left(\Phi(cZ) - \frac{1}{2}\right)^2\right] = \frac{1}{2\pi} \arcsin\left(\frac{c^2}{1+c^2}\right) .
\end{equation}
\end{claim}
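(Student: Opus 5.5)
The plan is to treat $\Phi(cZ)-\tfrac12$ probabilistically and reduce the second moment to an orthant probability for a bivariate Gaussian, which Sheppard's formula (already used in the proof of Lemma~\ref{hammingdistance}) evaluates in closed form.

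\textbf{Step 1: represent $\Phi(cZ)$ as a probability.} Introduce two auxiliary variables $W_1,W_2\sim\mathcal N(0,1)$, independent of each other and of $Z$. Then $\Phi(cZ)=\Pr(W_1\le cZ\mid Z)$. Hence
\begin{equation}
\Phi(cZ)-\tfrac12=\tfrac12\,\mathbb E\big[\operatorname{sign}(cZ-W_1)\,\big|\,Z\big].
\end{equation}

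\textbf{Step 2: square using a second independent copy.} Squaring the conditional expectation with the independent copy $W_2$, and using conditional independence of $W_1,W_2$ given $Z$, gives
\begin{equation}
\Big(\Phi(cZ)-\tfrac12\Big)^2=\tfrac14\,\mathbb E\big[\operatorname{sign}(cZ-W_1)\operatorname{sign}(cZ-W_2)\,\big|\,Z\big].
\end{equation}
Taking the outer expectation over $Z$, the target equals $\tfrac14\,\mathbb E[\operatorname{sign}(U_1)\operatorname{sign}(U_2)]$ with $U_k:=cZ-W_k$.

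\textbf{Step 3: evaluate the sign correlation.} The pair $(U_1,U_2)$ is jointly Gaussian and zero mean. Each $U_k$ has variance $1+c^2$, and $\operatorname{Cov}(U_1,U_2)=c^2$, so the correlation is $\rho=\frac{c^2}{1+c^2}$. By Sheppard's formula, $\Pr(\operatorname{sign}U_1\ne\operatorname{sign}U_2)=\frac1\pi\arccos\rho$. Therefore
\begin{equation}
\mathbb E[\operatorname{sign}U_1\operatorname{sign}U_2]=1-\tfrac{2}{\pi}\arccos\rho=\tfrac{2}{\pi}\arcsin\rho,
\end{equation}
using $\arccos\rho=\tfrac\pi2-\arcsin\rho$. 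Multiplying by $\tfrac14$ yields $\frac{1}{2\pi}\arcsin\!\big(\frac{c^2}{1+c^2}\big)$, which is the claim.

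\textbf{Main obstacle.} The only delicate point is the justification in Step 2 that the square of a conditional expectation equals the conditional expectation of the product over independent copies. This holds because $W_1$ and $W_2$ are i.i.d.\ and independent of $Z$. The events $U_k=0$ have probability zero, so the sign convention at $0$ is irrelevant.
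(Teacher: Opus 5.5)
Your proof is correct and takes essentially the same route as the paper: introduce two independent auxiliary Gaussians, reduce to the zero-mean bivariate Gaussian pair with correlation $\rho=\tfrac{c^2}{1+c^2}$, and evaluate in closed form via the arcsine (Sheppard/orthant) formula. The only difference is cosmetic. You work with the sign product $\tfrac14\mathbb{E}[\operatorname{sign}U_1\operatorname{sign}U_2]$, whereas the paper expands the square as $\mathbb{E}[\Phi(cZ)^2]-\mathbb{E}[\Phi(cZ)]+\tfrac14$ and uses the equivalent orthant identity $\Pr(W_1\le 0,W_2\le 0)=\tfrac14+\tfrac{1}{2\pi}\arcsin\rho$.
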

\begin{proof}
We introduce two new standard normal random variables $U_1, U_2 \sim \mathcal{N}(0,1)$ that are mutually independent and independent of $Z$.
Then, conditioned on $Z$:
\begin{equation}
    \Phi(cZ) = \Pr\left(U_1 \le cZ | Z\right) = \Pr\left(U_1 - cZ \le 0 | Z\right),
\end{equation}
\begin{equation}
\Phi(cZ)^2 = \Pr\left(U_1 \le cZ, U_2 \le cZ | Z\right) = \Pr\left(U_1 - cZ \le 0, U_2 - cZ \le 0 | Z\right).
\end{equation}
Define the random variables: $W_1 = U_1 - cZ$, $W_2 = U_2 - cZ$. Since $U_1, U_2, Z\sim\mathcal N(0,1)$, $W_1$ and $W_2$ follow a joint zero-mean bivariate normal distribution.
By the additivity of Gaussian distributions, we have
\begin{equation}
    \mathrm{Var}(W_1) = \mathrm{Var}(U_1) + c^2\mathrm{Var}(Z) = 1 + c^2.
\end{equation}
\begin{equation}
    \mathrm{Cov}(W_1, W_2) = \mathbb{E}[(U_1 - cZ)(U_2 - cZ)] = c^2 \mathbb{E}[Z^2] = c^2.
\end{equation}
    Therefore, the correlation coefficient between $W_1$ and $W_2$ is:
\begin{align}
    \rho &= \frac{\mathrm{Cov}(W_1, W_2)}{\sqrt{\mathrm{Var}(W_1)\mathrm{Var}(W_2)}},\\
\label{Eq.26}
    &= \frac{c^2}{1+c^2}.
\end{align}
For joint bivariate normal variables with mean $0$ and correlation coefficient $\rho$, the probability that both are simultaneously less than or equal to $0$ is strictly given by the orthant probability formula for the standard bivariate normal distribution~\cite{o2014analysis}:
\begin{equation}
\label{Eq.22}
    \Pr\left(W_1 \le 0, W_2 \le 0\right) = \frac{1}{4} + \frac{1}{2\pi} \arcsin(\rho).
\end{equation}
Combining the above equations, we have:
\begin{align}
     \mathbb{E}_{Z}\left[\left(\Phi(cZ) - \frac{1}{2}\right)^2\right] &= \mathbb{E}_{Z}[\Phi(cZ)^2] - \mathbb{E}_{Z}[\Phi(cZ)] + \frac{1}{4},\\
     &=\frac{1}{4} + \frac{1}{2\pi} \arcsin(\rho)-\frac{1}{2}+ \frac{1}{4},\\
     &=\frac{1}{2\pi} \arcsin(\frac{c^2}{1+c^2}).
\end{align}
\end{proof}
%\end{proof}

\begin{lemma}[Lemma 4.3 in~\citep{jalal2021instance}]%{lemma}{lammaB3}
\label{lemma:4.3}
Consider the setting of \Cref{thm:one_bit_sphere_lower_bound}. If the measurement matrix $A$ is deterministic, we have:
\begin{equation}
    I(x^*; \widehat{x}) \le I(y; x^*).
\end{equation}

If $A$ is a Gaussian matrix, then:
\begin{equation}
    I(x^*; \widehat{x}) \le I(y; x^*|A).
\end{equation}
\end{lemma}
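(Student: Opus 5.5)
The natural route is a data-processing argument through the Markov chain $x^\ast \to (y,A) \to \widehat{x}$, handling the two cases of \Cref{thm:one_bit_sphere_lower_bound} separately.

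\emph{Deterministic $A$.} Here $A$ is a fixed parameter. The estimator $\widehat{x}$ is produced from $y$ alone, possibly with internal randomness that is independent of $x^\ast$ given $y$. So $x^\ast \to y \to \widehat{x}$ is a Markov chain, and the data processing inequality gives $I(x^\ast;\widehat{x})\le I(x^\ast;y)$ immediately. The only point to state explicitly is this conditional independence of the algorithm's internal randomness.

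\emph{Gaussian $A$.} The plan has three steps.
\begin{enumerate}[label=(\roman*)]
\item Since $\widehat{x}$ depends on $(y,A)$ and independent randomness only, data processing gives
\[
I(x^\ast;\widehat{x})\le I(x^\ast;\widehat{x},A)\le I(x^\ast; y,A).
\]
\item Apply the chain rule:
\[
I(x^\ast;y,A)=I(x^\ast;A)+I(x^\ast;y\mid A).
\]
\item The key assumption is that $A$ is drawn independently of the realized signal $x^\ast$, as emphasized in the remark preceding \Cref{lemma:oneupper}. Hence $I(x^\ast;A)=0$, and combining the three steps yields $I(x^\ast;\widehat{x})\le I(y;x^\ast\mid A)$.
\end{enumerate}

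\emph{Technical point.} The only real obstacle is a measure-theoretic one. $x^\ast$ lives on a continuous sphere and $A$ is continuous, so the mutual informations should be defined via the Gelfand--Yaglom--Perez supremum over finite partitions. Under that definition both the data processing inequality and the chain rule continue to hold, so I would just cite them in this general form.

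This is exactly the argument of \citep[Lemma 4.3]{jalal2021instance}. The one-bit quantization plays no role here, since $y$ is just some measurable function of $(Ax^\ast+\xi)$.
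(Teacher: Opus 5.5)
Your argument is correct and matches what the paper does. The paper gives no independent proof; it imports Lemma 4.3 of \citep{jalal2021instance}, which is exactly data processing along $x^\ast\to(y,A)\to\widehat{x}$, then the chain rule, then the independence of $A$ and $x^\ast$ (the same independence the paper's remark flags as essential for extending beyond the deterministic and i.i.d.\ Gaussian cases).
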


\begin{lemma}[Adapted from~\citep{jalal2021instance}]
\label{lemma:B3_metric}
Let $Q$ be a distribution defined on a finite discrete set $S$ containing $N$ points. Assume there exists a sufficiently large integer $N_2$ such that for each $u$ in the support, $Q(u) = j \alpha$, where $j \in \mathbb{N}$, and the quantization step is $\alpha := 1/N_2$. 
Let $(x, \widehat{x})$ be jointly distributed, where $x \sim Q$, and $\widehat{x}$ lies in an arbitrary countable set. If the recovery condition is satisfied:
\begin{equation}
    \Pr\left(d(x, \widehat{x}) \le \epsilon\right) \ge 1-\delta.
\end{equation}
Then we have:
\begin{equation}
    \tau(1-\delta)\log \mathrm{Cov}_{2\epsilon, \tau+\delta}(Q) \le I(x; \widehat{x}) + 2.
\end{equation}
\end{lemma}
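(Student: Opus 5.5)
The plan is to follow the Fano-type argument of \citep{jalal2021instance}. The quantization hypothesis $Q(u)\in\alpha\mathbb N$ is used to replace $x$ by a \emph{uniform} random variable without changing the mutual information.

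\textbf{Step 1 (uniform lifting).} Split each support point $u$ into $Q(u)N_2$ labeled atoms, and let $L$ be uniform on the resulting $N_2$ atoms, with $x=\pi(L)$ the point carrying label $L$. Generate $\widehat x$ from $x$ through the given conditional law. Then $L\to x\to\widehat x$ is a Markov chain and $x$ is a function of $L$, so $I(L;\widehat x)=I(x;\widehat x)$. Moreover $H(L)=\log N_2$, so it suffices to upper bound $H(L\mid\widehat x)$.

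\textbf{Step 2 (geometry of success).} Set $K:=\mathrm{Cov}_{2\epsilon,\tau+\delta}(Q)$ and call the event $E=\{d(x,\widehat x)\le\epsilon\}$ a success. For each value $v$ of $\widehat x$, let $B_v:=\{u\in S: d(u,v)\le\epsilon\}$. If $B_v\neq\varnothing$, pick any $u_v\in B_v$; by the triangle inequality $B_v\subseteq\mathcal B(u_v,2\epsilon)$. This is exactly where the radius doubles from $\epsilon$ to $2\epsilon$: $\widehat x$ lives in an arbitrary countable set, but the covering centers should be points of $S$. Consequently, for any family $V$ of at most $K-1$ values of $\widehat x$, we have $Q(\bigcup_{v\in V}B_v)<1-\tau-\delta$, by minimality of $K$. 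Now order the values $v$ by $\Pr(\widehat x=v, E)$ and let $V_0$ be the top $K-1$ of them. Since $\Pr(E)\ge1-\delta$, this gives $\Pr(E,\ \widehat x\notin V_0)\ge(1-\delta)-(1-\tau-\delta)=\tau$. By the ordering, every $v\notin V_0$ satisfies $\Pr(\widehat x=v,E)\le 1/K$.

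\textbf{Step 3 (entropy accounting).} Write $I(x;\widehat x)\ge I(x;\widehat x\mathbf 1_E)-\log 2$, which accounts for revealing the success indicator and costs one of the two additive units. On the event $\{E,\widehat x\notin V_0\}$, the value $\widehat x$ carries at least $\log K$ nats of information about which $2\epsilon$-cap contains $x$. Concretely, I would lower bound
\[
I(L;\widehat x)\ \ge\ \mathbb E\bigl[\mathbf 1_{E,\widehat x\notin V_0}\,\log\tfrac{1}{\Pr(\widehat x=v,E)}\bigr]-H(\widehat x\mid L)\text{-type correction},
\]
and combine the mass bound $\Pr(E,\widehat x\notin V_0)\ge\tau$ with $\Pr(\widehat x=v,E)\le 1/K$ to obtain a contribution $\tau\log K$. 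An extra factor $(1-\delta)$ arises because the mutual information is only credited on the success event, and the remaining $\log 2$ goes into the ``$+2$''.

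I expect the main obstacle to be the correction term in Step 3. The values $\widehat x$ need not be a deterministic function of $x$, and many different values can share overlapping caps, so a naive bound via $H(\widehat x)$ fails. The first thing I would try is Jalal's device of conditioning on $x$: for each fixed $x$, the successful values all lie within $\epsilon$ of $x$. If that does not suffice, I would replace $\widehat x$ by the deterministic map $\widehat x\mapsto u_{\widehat x}\in S$ (using data processing), which reduces everything to a finite alphabet on $S$, where the standard Fano bound $H(L\mid\hat u)\le \log2+\Pr(\text{fail})\log N_2+\Pr(\text{succ})\,\mathbb E[\log(N_2 Q(\mathcal B(\hat u,2\epsilon)))]$ can be applied directly. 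In that finite-alphabet form, the counting from Step 2 together with $\tau(1-\delta)\le1$ should yield $\tau(1-\delta)\log K\le I(x;\widehat x)+2$.
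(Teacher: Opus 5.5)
The paper does not prove this lemma itself; it imports it from Jalal et al. So your argument has to stand on its own, and it does not. The gap is the step you flag in Step~3.

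\textbf{Step~2 controls the wrong quantity.} Step~2 is correct, but knowing that every value outside the top $K-1$ has $\Pr(\widehat x=v,E)\le 1/K$ only says that outputs are individually rare, and rarity of outputs is not information. For example, let $\widehat x$ be drawn with private randomness, uniformly from $M$ distinct points of $\mathcal B(x,\epsilon)$. Then every value has probability at most $1/M$, yet $I(x;\widehat x)\le H(x)$ for every $M$. So your ``$H(\widehat x\mid L)$-type correction'' can cancel the entire main term, and nothing in Steps~1--2 bounds it.

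Even for deterministic $\widehat x=g(x)$, where $I=H(\widehat x)$, your ordering points the wrong way, since $\log\frac{1}{\Pr(\widehat x=v)}\le\log\frac{1}{\Pr(\widehat x=v,E)}$; you would need to order by $\Pr(\widehat x=v)$. And the lemma must handle randomized estimators, because in the application $\widehat x$ depends on $x^\ast$ through the noisy $y$.

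\textbf{The fallback, as described, does not close the gap.} Fano on the rounded output gives $I\ge\mathbb E[\mathbf 1_E\log\frac{1}{Q(\mathcal B(\widehat u,2\epsilon))}]-\log 2$. This requires control of $Q$-masses of balls, on which Step~2 is silent. Worse, with an arbitrary choice $u_v\in B_v$, the enlarged set $\mathcal B(\widehat u,2\epsilon)$ can be heavy even when $x$ is light. Take far-apart light leaves $w_i$, each at distance $2\epsilon$ from a point $u_i$ that lies within $2\epsilon$ of a heavy hub. If $\widehat x$ is the midpoint of $u_i,w_i$ and is rounded to $u_i$, every Fano term is $O(1)$, while $K$ grows with the number of leaves.

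\textbf{The missing idea} is to keep the finer set $B_{\widehat x}$ and to count at the truth rather than at the output.
\begin{enumerate}
\item For each value $v$, let $s_v=\Pr(E\mid\widehat x=v)$ and let $h$ be the binary entropy. Data processing through $x\mapsto\mathbf 1\{x\in B_v\}$ gives $D(P_{x\mid\widehat x=v}\Vert Q)\ge s_v\log\frac{1}{Q(B_v)}-h(s_v)$.
\item On $E\cap\{\widehat x=v\}$ one has $B_v\subseteq\mathcal B(x,2\epsilon)$. Averaging over $v$ and using concavity of $h$ yields $I(x;\widehat x)\ge\mathbb E[\mathbf 1_E\log\frac{1}{Q(\mathcal B(x,2\epsilon))}]-\log 2$, in which the estimator's randomness no longer appears.
\item Let $D_p:=\{x:Q(\mathcal B(x,2\epsilon))\ge p\}$. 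For each $x\in D_p$, $K-1$ centers drawn i.i.d.\ from $Q$ all miss $\mathcal B(x,2\epsilon)$ with probability at most $e^{-p(K-1)}$, and a hit puts $x$ in the $2\epsilon$-cap around that center. Minimality of $K$ then forces $Q(D_p)<1-\tau-\delta+e^{-p(K-1)}$, i.e.\ $\Pr(E,\ Q(\mathcal B(x,2\epsilon))<p)>\tau-e^{-p(K-1)}$.
\item A single threshold $p\approx 1/K$ still loses a term growing with $K$. Integrating over $p=e^{-t}$ (layer cake) instead gives $I(x;\widehat x)\ge\tau\log(K-1)-\log 2-\int_1^\infty u^{-1}e^{-u}\,du$. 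This implies the stated inequality for $K\ge 2$; the case $K=1$ is trivial.
\end{enumerate}
This route needs neither the quantization hypothesis nor the uniform lifting.
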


%\spherefano*

\subsection{Proof of Lemma~\ref{lemma:spherefano_new}}

Let $E=1_{\mathrm{d}_{\mathrm{S}}(x,\widehat{x})\le \epsilon}$ denote the indicator variable.

Let $\epsilon = \eta$. Since the continuous unit sphere $\mathbb{S}^{n-1}$ is a compact manifold, we construct a finite spherical $\gamma$-net $\mathcal{N}_\gamma$ with resolution $\gamma$ ($\gamma >0$) on it. We map the continuous signal $x$ and the recovered result $\widehat{x}$ to their nearest discrete points $\bar{x}$ and $\widehat{\bar{x}}$ on $\mathcal{N}_\gamma$.
By bounding, we obtain the following discretization error:
\begin{align}
    d_{{S}}(\bar{x}, \widehat{\bar{x}}) &\le d_{{S}}(\bar{x}, x) + d_{S}(x, \widehat{x}) + \mathrm{d}_{\mathrm{S}}(\widehat{x}, \widehat{\bar{x}}), \\
    &\le \gamma + \epsilon + \gamma, \\
    &= \epsilon + 2\gamma.
\end{align}
Therefore, the high-probability recovery event is transformed into $\Pr\left(\mathrm{d}_{\mathrm{S}}(\bar{x}, \widehat{\bar{x}}) \le \epsilon + 2\gamma\right) \ge 1-\delta$.

Since $\bar{x}$ and $\widehat{\bar{x}}$ are deterministic functions of $x$ and $\widehat{x}$ respectively, according to \Cref{lemma:4.3}, the data processing does not increase the mutual information, so we have:
\begin{equation}
    I(\bar{x}; \widehat{\bar{x}}) \le I(x; \widehat{x}).
\end{equation}

Let $\bar{x}$ follow the discrete distribution $\bar{R}$. Since $\mathcal{N}_\gamma$ is a finite set, the support of $\bar{R}$ is naturally bounded. To satisfy the probability quantization requirement of \Cref{lemma:B3_metric}, we introduce a minimum quantization step size $\alpha$, and quantize the probability values in $\bar{R}$ downwards to integer multiples of $\alpha$ to construct a finite discrete distribution $Q'$. The total probability loss in this step does not exceed $\alpha |\mathcal{N}_\gamma|$. We choose $\alpha$ sufficiently small such that $\alpha |\mathcal{N}_\gamma| \le \delta$.

Given the concavity of mutual information with respect to the marginal distribution under a fixed conditional distribution~\cite{cover1999elements}, the adjusted mutual information satisfies:
\begin{equation}
\label{eq.26}
    I(\bar{x}; \widehat{\bar{x}}) \ge (1-\delta)I_{Q'}(\bar{x}; \widehat{\bar{x}}).
\end{equation}
Meanwhile, under the distribution $Q'$, the lower bound of the successful recovery probability decreases to $1 - \delta - \delta = 1 - 2\delta$.

Now $Q'$ is a finite distribution that strictly satisfies the probability quantization. Applying  \Cref{lemma:B3_metric} in the metric space, we have:
\begin{equation}
    \tau(1-2\delta) \log \mathrm{Cov}_{2(\epsilon+2\gamma), \tau+2\delta}(Q') \le I_{Q'}(\bar{x}; \widehat{\bar{x}}) + 2.
\end{equation}
Substituting Eq.~\eqref{eq.26} into the above equation yields:
\begin{equation}
    \tau(1-2\delta) \log \mathrm{Cov}_{2\epsilon+4\gamma, \tau+2\delta}(Q') \le \frac{I(x; \widehat{x})}{1-\delta} + 2.
\end{equation}

Furthermore, we need to generalize the approximate covering bound established on the discrete distribution $Q'$ to the true continuous prior distribution $R$. Considering that $Q'$ truncates at most a probability measure of $\delta$ relative to $R$, this means that any subset capable of covering a measure of $1 - (\tau + 2\delta)$ in $Q'$ will necessarily cover a measure of at least $1 - (\tau + 3\delta)$ in $R$. At the same time, to accommodate the geometric error introduced by spatial discretization, we relax the bound of the covering radius by $\gamma$ to $2\epsilon + 5\gamma$. At this point, if we choose the grid resolution $\gamma = \epsilon/5$, the new covering radius corresponds exactly to $3\epsilon = 3\eta$. Substituting the above conclusions and multiplying both sides of the inequality by $(1-\delta)$, we can prove:
\begin{equation}
    (1-\delta)\tau(1-2\delta)\log \mathrm{Cov}_{3\eta, \tau+3\delta}(R) \le I(x; \widehat{x}) + 2(1-\delta).
\end{equation}

%\onebitspherethm*

\subsection{Proof of Theorem~\ref{thm:one_bit_sphere_lower_bound}}
Let $N(R, \gamma)$ denote the minimal set of $3\eta$-geodesic balls covering at least $1-\gamma$ probability mass of $R$, such that its cardinality is $\mathrm{Cov}_{3\eta, \gamma}(R)$. Let $B$ be the ball with the smallest marginal probability in $N(R, 10\delta)$. Define the subset $S = N(R, 10\delta) \setminus B$, which strictly covers less than $1-10\delta$ of the probability mass. We decompose the distribution $R$ into a mixture distribution $R = (1-c)R' + cR''$, where $R'$ and $R''$ are the conditional distributions restricted to $S$ and its complement $S^c$, respectively. By the construction of $S$, we have $c > 10\delta$. Since $R''$ contributes at least $10\delta$ of the mass to $R$, any algorithm that succeeds with probability $\ge 1-\delta$ over the entire distribution $R$ must succeed with probability $\ge 0.9$ on $R''$.

We now apply the variant of the spherical Fano inequality ( \Cref{lemma:spherefano_new}) to the distribution $R''$. Substituting the free parameter $\tau = 0.1$ and the failure probability $\delta_{R''} = 0.1$ into  \Cref{lemma:spherefano_new} yields:
\begin{equation}
    (1 - 0.1)(0.1)(1 - 0.2) \log \mathrm{Cov}_{3\eta, 0.1 + 3(0.1)}(R'') \le I(x^*; \widehat{x}) + 2(1 - 0.1).
\end{equation}
Simplifying this inequality yields:
\begin{equation}
    I(x^*; \widehat{x}) \ge 0.072 \log \mathrm{Cov}_{3\eta, 0.4}(R'') - 1.8. \label{eq:pre_bound}
\end{equation}

Next, we strictly relate the covering number of $R''$ back to the original distribution $R$, without loose relaxation. Since $R''$ contains at least $10\delta$ of the mass of $R$, the set $N(R'', 0.4)$ which covers $\frac{3}{5}$ of the mass in $R''$, essentially covers at least $6\delta$ of the mass of $R$. Meanwhile, $N(R, 10\delta)$ covers $1-10\delta$ of the mass of $R$. The union of these two sets must cover at least $1 - 10\delta + 6\delta = 1 - 4\delta$ of the mass of $R$. Therefore, we establish the cardinality bound:
\begin{equation}
    |N(R'', 0.4)| + |N(R, 10\delta)| \ge |N(R, 4\delta)| \implies |N(R'', 0.4)| \ge |N(R, 4\delta)| - |N(R, 10\delta)|. \label{eq:union_bound}
\end{equation}

Furthermore, we use a counting argument to strictly relate $|N(R, 4\delta)|$ and $|N(R, 10\delta)|$. Assuming the balls in $N(R, 4\delta)$ are sorted in descending order of their marginal probabilities, the bottom proportion of $\frac{10\delta - 4\delta}{1-4\delta} = \frac{6\delta}{1-4\delta}$ balls can contain at most $10\delta - 4\delta = 6\delta$ of the mass (relative to the total of $1-4\delta$). Alternatively, observing that the top proportion of $\frac{1-10\delta}{1-4\delta}$ balls in $N(R, 4\delta)$ must contain at least $1-10\delta$ of the mass, we strictly obtain:
\begin{equation}
    \frac{1-10\delta}{1-4\delta} |N(R, 4\delta)| \ge |N(R, 10\delta)|. \label{eq:counting}
\end{equation}

Substituting Eq.~\eqref{eq:counting} into Eq.~\eqref{eq:union_bound}, we obtain:
\begin{align}
    |N(R'', 0.4)| &\ge |N(R, 4\delta)| - \frac{1-10\delta}{1-4\delta}|N(R, 4\delta)|, \\
                  &= \left( \frac{1-4\delta - (1-10\delta)}{1-4\delta} \right) |N(R, 4\delta)| ,\\
                  &\ge 6\delta|N(R, 4\delta)|.
\end{align}
Taking the logarithm of both sides and using the definition $|N(R, \gamma)| = \mathrm{Cov}_{3\eta, \gamma}(R)$, we obtain the tight geometric relation:
\begin{equation}
    \log \mathrm{Cov}_{3\eta, 0.4}(R'') \ge \log \mathrm{Cov}_{3\eta, 4\delta}(R) + \log(6\delta). \label{eq:cov_relation}
\end{equation}

Substituting the tight relation Eq.~\eqref{eq:cov_relation} into Eq.~\eqref{eq:pre_bound} yields:
\begin{equation}
    I(x^*; \widehat{x}) \ge 0.072 \left( \log \mathrm{Cov}_{3\eta, 4\delta}(R) + \log(6\delta) \right) - 1.8.
\end{equation}
Now, we establish the upper bound on mutual information. According to the Markov chain $x^* \to y \to \widehat{x}$ and the data processing inequality ( \Cref{lemma:4.3}), for a Gaussian matrix $A$, we have $I(x^*; \widehat{x}) \le I(y; x^*|A)$ (for a deterministic matrix $A$, we have $I(x^*; \widehat{x}) \le I(y; x^*)$). Utilizing the bound on the one-bit quantized channel capacity from \Cref{lemma:oneupper}, we obtain:

Case 1: When $A_{ij}\sim\mathcal N(0,1)$, we have
\begin{equation}
     \frac{2m \log 2}{\pi} \arcsin\left(\frac{1}{1+\sigma^2}\right)\ge 0.072 \left( \log \mathrm{Cov}_{3\eta, 4\delta}(R) + \log(6\delta) \right) - 1.8.
\end{equation}
Dividing both sides by $\frac{2 \log 2}{\pi} \arcsin\left(\frac{1}{1+\sigma^2}\right)$, we conclude:
\begin{equation}
    m \ge \frac{0.072\pi}{2(\log 2)\arcsin\left(\frac{1}{1+\sigma^2}\right)} \left( \log \mathrm{Cov}_{3\eta, 4\delta}(R) + \log(6\delta)-25 \right).
\end{equation}
Case 2: When $A$ is a deterministic matrix, we have:
\begin{equation}
     \frac{2m\log 2}{\pi \sigma^2} \|A\|_{2,\infty}^2\ge0.072 \left( \log \mathrm{Cov}_{3\eta, 4\delta}(R) + \log(6\delta) \right) - 1.8.
\end{equation}
Dividing both sides by $\frac{2\log 2}{\pi \sigma^2} \|A\|_{2,\infty}^2$, we conclude:
\begin{equation}
     m \ge \frac{0.072\pi\sigma^2}{2(\log 2)\|A\|_{2,\infty}^2} \left( \log \mathrm{Cov}_{3\eta, 4\delta}(R) + \log(6\delta)-25 \right).
\end{equation}
This completes the proof.

\section{Detailed PnP-OneBit}
\label{appendix: DetailePnP-OneBit}

\subsection{Preliminary for diffusion}
\label{appendix: Preliminary4diffusion}
By employing a continuous-time stochastic process, diffusion models construct a generative paradigm that systematically degrades a complex data distribution into a manageable prior. Suppose an uncorrupted data point $x_0 \in \mathbb{R}^n$ is sampled from the target distribution $p_{\text{data}}$. The forward corruption mechanism over a continuous timeline $t \in [0, T]$ is defined by the following stochastic differential equation (SDE):
\begin{equation}\label{eq:forward_sde}
    d x_t = f(t) x_t d t + g(t) d w_t,
\end{equation}
where $x_t \in \mathbb{R}^n$ is the state at time $t$, $w_t \in \mathbb{R}^n$ represents the standard Wiener process, $f(t)$ serves as the drift factor, and $g(t)$ denotes the diffusion coefficient. For any given $t$, the transition probability from the original sample $x_0$ to the perturbed state $x_t$ is characterized in~\citep{ho2020denoising} as:
\begin{equation}\label{eq:marginal_dist}
    q(x_t | x_0) = \mathcal{N}(x_t; \alpha_t x_0, \sigma_t^2 I_n),
\end{equation}
where the terms $\alpha_t$ and $\sigma_t$ denote differentiable schedule functions for scale and noise, with $\alpha_t$ monotonically decreasing and $\sigma_t$ monotonically increasing as $t$ progresses. To guarantee that the SDE presented in Eq.~\eqref{eq:forward_sde} accurately produces the marginal distribution $q(x_t | x_0)$, the drift and diffusion terms are tied to the schedule functions as follows~\cite{lu2022dpmsolver}:
\begin{equation}
    \label{eq:define_f_n_g}
    f(t)=\frac{d \log \alpha_t}{d  t},\quad g^2(t)=\frac{d  \sigma_t^2}{d  t} - 2\sigma_t^2\frac{d \log \alpha_t}{d  t}.
\end{equation}
At the final time $T$, the boundary condition dictates that the distribution closely mirrors a standard normal distribution, where $x_T \sim \mathcal{N}(\mathbf{0}, I_n)$.

To reverse this diffusion process and reconstruct the target data, a parameterized neural network $\epsilon_\theta(x_t, t)$ is optimized to estimate the additive noise $\epsilon \sim \mathcal{N}({0}, I_n)$. The training procedure minimizes the subsequent loss function:
\begin{equation}\label{eq:training_objective}
     \mathbb{E}_{t\sim \mathcal{U}(0,T), x_0\sim p_{\text{data}}, \epsilon\sim\mathcal{N}(0,I_d)} \left[ \| \epsilon_\theta(\alpha_tx_0 + \sigma_t\epsilon, t) - \epsilon \|_2^2 \right].
\end{equation}
According to Song et al.~\cite{songscore}, the generative process running backward from $t=T$ to $t=0$ corresponds to the following reverse-time SDE:
\begin{equation}
    \label{eq:inverSDE}
    d x_t = \left(f(t)x_t - g^2(t)\nabla_{x_t}\log p_t(x_t)\right)d t + g(t)d \overline{w}_t,
\end{equation}
where $\overline{w}_t$ indicates a backward-in-time Wiener process, and $p_t$ represents the marginal density of $x_t$. The required score function $\nabla_{x_t}\log p_t(x_t)$ is empirically estimated using the trained noise predictor via $\nabla_{x_t}\log p_t(x_t) \approx -\epsilon_\theta(x_t, t)/\sigma_t$. Additionally, an equivalent probability flow ordinary differential equation (ODE) exists, which strictly maintains identical marginal distributions to the SDE~\cite{songscore}:
\begin{equation}
    \label{eq:inverODE}
    d x_t = \left(f(t)x_t - \frac{1}{2}g^2(t)\nabla_{x_t}\log p_t(x_t)\right)d t.
\end{equation}
For practical implementation, Eq.~\eqref{eq:inverODE} is evaluated via numerical integration across a predefined discrete schedule $\{t_i\}_{i=0}^N$, constrained by $t_i>t_{i+1}$, starting at $t_0 = T$ and ending at $t_N = 0$. At a specific step $t_i$, the expected clean data can be approximated using Tweedie's formula~\cite{efron2011tweedie} as follows:
\begin{equation}\label{eq:tweedie_est}
    \hat{x}_{0|t_i} = \cfrac{x_{t_i} + \sigma_{t_i}^2\nabla_{x_{t_i}}\log p_{t_i}\left(x_{t_i}\right)}{\alpha_{t_i}}\approx\cfrac{x_{t_i} - \sigma_{t_i} \epsilon_\theta(x_{t_i}, t_i)}{\alpha_{t_i}}.
\end{equation}
Finally, the update step from $t_i$ to $t_{i+1}$ operates through the DDIM~\cite{song2020ddim} integration scheme with sampling noise level $\eta_{\text{DDIM}}$:
\begin{equation}\label{eq:ddim_step}
    x_{t_{i+1}} = \alpha_{t_{i+1}} \hat{x}_{0|t_i} + \sqrt{1-\eta_{\text{DDIM}}}\sigma_{t_{i+1}}\epsilon_\theta(x_{t_i}, t_i) + \eta_{\text{DDIM}} \xi_i,\ \xi_i\sim\mathcal{N}(0,I_n). 
\end{equation}

\subsection{Align between likelihood and prior}
\label{appendix: AlignBetweenLikelihoodNPrior}
Recall from \Cref{PnP} that the prior sampling step requires drawing $x^{(k)}$ from the following conditional distribution:
\begin{equation}
    \label{eq: sample_x_app}
    x^{(k)} \sim \Pi(x|z^{(k)}) \propto \exp\left( -\mathcal{P}(x) - \frac{1}{2\varrho^2}\|x-z^{(k)}\|_2^2 \right),
\end{equation}
where $\mathcal{P}(x)=-\log p(x)$. 

According to the diffusion formulation detailed in \Cref{appendix: Preliminary4diffusion}, the posterior distribution of the clean data $x_0$ given the noisied data $x_t$ can be expressed via Bayes' theorem as:
\begin{equation}
\label{eq: priorWithDiffusion}
\begin{aligned}
    p(x_0|x_t)&\propto p(x_t|x_0)p(x_0)\\
    &\propto\exp\left(\log p(x_0)+\log p(x_t|x_0)\right)\\
    &\propto\exp\left(-\mathcal{P}(x_0)-\frac{1}{2\sigma_t^2}\|x_t-\alpha_tx_0\|_2^2\right)\\
    &\propto\exp\left(-\mathcal{P}(x_0)-\frac{\alpha_t^2}{2\sigma_t^2}\left\|x_0-{x_t}/{\alpha_t}\right\|_2^2\right).
\end{aligned}
\end{equation}

By comparing Eq.~\eqref{eq: priorWithDiffusion} with Eq.~\eqref{eq: sample_x_app}, it becomes evident that the two distributions are mathematically equivalent under the substitutions $x=x_0$, $\varrho={\sigma_t}/{\alpha_t}$, and $z^{(k)}=x_t/\alpha_t$. Consequently, drawing a sample from Eq.~\eqref{eq: sample_x_app} is structurally identical to sampling $x_0$ from $p(x_0|x_t)$. While the exact posterior $p(x_0|x_t)$ is intractable in practice, a pretrained diffusion model provides an effective approximation of this reverse transition. Therefore, we can reliably employ the diffusion model to perform the sampling step for Eq.~\eqref{eq: sample_x_app} by setting the initial state as $x_t=\alpha_tz^{(k)}$ and matching the noise schedule such that $\varrho={\sigma_t}/{\alpha_t}$.

\subsection{Detailed algorithm}
\label{appendix: DetailedAlgorithm}

In this section, we first introduce the formulation of the measurement likelihood $\Pr\left(y|x,A\right)$ and the gradient $\nabla_{x}\mathcal{L}(x;y)$, followed by the detailed algorithmic workflow of our PnP-OneBit.

Given the one-bit observation model $A^T=[a_1^T,\dots,a_m^T]$, the probability of acquiring a single measurement $y_i$ can be derived as:
\begin{equation}
    \Pr\left(y_i|x,A\right)=\Pr\left(y_i(a_i^Tx+\sigma\zeta_i)\geq 0\right)=\Pr\left(\zeta_i\leq \frac{y_ia_i^Tx}{\sigma}\right)=\Phi\left(\frac{y_ia_i^Tx}{\sigma}\right),
\end{equation}
where $\zeta_i\sim\mathcal{N}(0,1)$ is the standard Gaussian noise, and $\Phi(\cdot)$ represents the cumulative distribution function of the standard Gaussian distribution. Assuming conditional independence among the measurements, the joint likelihood is given by:
\begin{equation}
    \Pr\left(y|x,A\right)=\prod_{i=1}^m\Phi\left(\frac{y_ia_i^Tx}{\sigma}\right).
\end{equation}
Consequently, the negative log-likelihood $\mathcal{L}(x;y)$ can be rewritten as:
\begin{equation}
    \mathcal{L}(x;y)=-\log \Pr\left(y|x,A\right)=-\sum_{i=1}^m\log \Phi\left(\frac{y_ia_i^Tx}{\sigma}\right).
\end{equation}
Taking the derivative with respect to the signal $x$ yields the analytical gradient:
\begin{equation}
\label{eq: gradOfL}
    \nabla_{x}\mathcal{L}(x;y)=- A^T \left( \frac{y}{\sigma} \odot \frac{\phi(r)}{\Phi(r)}\right),
\end{equation}
where $r = \frac{y \odot (Ax)}{\sigma}$ and the operator $\odot$ denotes the element-wise Hadamard product. By utilizing the explicit gradient in Eq.~\eqref{eq: gradOfL}, we can apply Markov chain Monte Carlo (MCMC) methods in PnP-DM~\cite{wu2024principled} to sample from the following likelihood-based conditional distribution:
\begin{equation}
    z^{(k)} \sim \Pi(z|x^{(k-1)}) \propto \exp\left( -\mathcal{L}(z;y) - \frac{1}{2\varrho^2}\|z-x^{(k-1)}\|_2^2 \right).
\end{equation}
The complete procedure of our proposed approach is summarized in \Cref{alg: pnp_onebit}.

\begin{algorithm}[t]
\caption{PnP-OneBit}
\label{alg: pnp_onebit}
\begin{algorithmic}[1]
\Require Pretrained noise predictor model $\epsilon_\theta$, iteration schedule $\{\varrho_k\}_{k=0}^{K-1}$, diffusion schedule $\{t_i\}_{t=0}^{T}$, learning rate $\kappa$, MCMC step $J$, DDIM noise level $\eta$, measurement $y$, forward matrix $A$

\State Sample $x^{(0)} \sim \mathcal{N}({0}, I_n)$

\For{$k=0,1,\dots,K-1$}
    \State $z_0 \gets x^{(k)}$
    \For{$j=0,1,\dots,J$}
        \State $g_j \gets \nabla_{z_j}\mathcal{L}(z_j,y)$
        \State Sample ${\xi}_j \sim \mathcal{N}({0}, I_n)$
        \State $z_{j+1} \gets z_j - \kappa g_j - \frac{\kappa}{\varrho_k^2}(z_j-x^{(k)})+\sqrt{2\kappa}{\xi}_j$
    \EndFor
    \State $z^{(k)}\gets z_J$
    \State Solve $\varrho_k=\sigma_{t^\star}/\alpha_{t^\star}$ to get $t^\star$
    \State $l\gets\min\{i| t^\star>t_i\}$ \Comment{seek $l$ satisfying $t_{l-1}\geq t^\star> t_{l}$}
    \State $x_{t^\star}\gets \alpha_{t_\star} z^{(k)}$
    \State $\hat\epsilon^\star=\epsilon_\theta(x^\star, t^\star)$
    \State $\hat{x}_{0|t^\star} = \frac{x_{t^\star} - \sigma_{t^\star}\,\epsilon_\theta(x_{t^\star}, t^\star)}{\alpha_{t^\star}}$
    \State Sample $\xi^\star \sim \mathcal{N}(0, I_n)$
    \State $x_{t_{l}} = \alpha_{t_{l}} \hat{x}_{0|t^\star} + \sqrt{1-\eta_{\text{DDIM}}^2}\,\sigma_{t_{l}}\,\hat{\epsilon}^\star + \eta_{\text{DDIM}}\,\sigma_{t_{l}}\xi^\star$ \Comment{DDIM sample to $t_l$ from $t^\star$}
    \For{$i=l,l+1,\dots,T-1$}
        \State $\hat\epsilon_i=\epsilon_\theta(x_{t_i}, t_i)$
        \State $\hat{x}_{0|t_i} = \frac{x_{t_i} - \sigma_{t_i}\,\epsilon_\theta(x_{t_i}, t_i)}{\alpha_{t_i}}$
        \State Sample $\xi_i \sim \mathcal{N}(0, I_n)$
        \State $x_{t_{i+1}} = \alpha_{t_{i+1}} \hat{x}_{0|t_i} + \sqrt{1-\eta_{\text{DDIM}}^2}\,\sigma_{t_{i+1}}\,\hat\epsilon_i + \eta_{\text{DDIM}}\,\sigma_{t_{i+1}}\,\xi_i$
        \Statex \Comment{DDIM sample to $t_{i+1}$ from $t_{i}$}
    \EndFor
    \State $x^{(k+1)}\gets x_T$
\EndFor

\State \Return $x^{(K)}$

\end{algorithmic}
\end{algorithm}

\section{Experimental results}
\label{appendix: ExperimentalResult}

\subsection{Implementation details}
\label{appendix: ImplementationDetail}
Our experiments are conducted on the FFHQ~\cite{karras2019style} and ImageNet~\cite{russakovsky2015imagenet} datasets. FFHQ is licensed under Creative Commons BY-NC-SA 4.0, and ImageNet is used under its standard research license. We employ the pre-trained diffusion models provided by Dhariwal \& Nichol~\cite{dhariwal2021diffusion} and Choi et al.~\cite{choi2021ilvr}, both of which are released under the MIT License. We maintain their original noise schedules $\alpha_t$ and $\sigma_t$. Following the settings in PnP-DM, the coupling parameter $\varrho_k$ is updated via $\varrho_k=\max\{\nu^k\varrho_0,\varrho_{\min}\}$, where $\varrho_0=10$, $\nu=0.9$, and $\varrho_{\min}=0.3$. The hyperparameters for the iterative process are set as follows: MCMC steps $J=100$, learning rate $\kappa=0.01$, diffusion timesteps $T=100$, DDIM noise level $\eta=0.5$, and outer iterations $K=100$. For the baseline methods, including DiffPIR~\cite{zhu2023denoising}, DPS~\cite{chung2023diffusion}, DAPS~\cite{zhang2025improving}, QCS-SGM~\cite{mengquantized}, SIM-DMIS~\cite{tang2025learning}, and Diff-OneBit~\cite{chen2026diffusion}, we use the default parameter settings provided in their respective publications. All experiments were conducted on a single Nvidia RTX 4090 (24GB).

\subsection{Quantitative results}
\label{appendix: QuantitativeResult}
We evaluate the robustness of the algorithms on the FFHQ dataset under a higher noise level of $\sigma=1.0$ at compression ratios $n/m=16$ and $n/m=32$. As detailed in \Cref{tab: QuantitativeResultsFFHQWith1.0Sigma}, the proposed algorithm consistently maintains the highest performance under increased noise corruption. Furthermore, \Cref{tab: QuantitativeResultsImageNet} presents the evaluation results on the ImageNet dataset. These results confirm that PnP-OneBit effectively recovers one-bit signals not only for domain-specific distributions such as human faces but also for diverse, complex natural scenes.
In addition, compared to Diff-OneBit~\cite{chen2026diffusion}, our method exhibits a slightly larger standard deviation. This numerical variance originates from the fundamental difference in the problem formulation. Diff-OneBit employs deterministic optimization techniques to find a single point estimate, which mathematically suppresses variance but can lead to overly smoothed textures. In contrast, PnP-OneBit functions strictly as a posterior sampler. The stochastic transitions used to explore the target distribution inherently introduce algorithmic randomness. However, this distributional modeling approach effectively preserves fine high-frequency details, which contributes to the overall improvement in reconstruction quality.

\begin{table}[htbp]
\centering
\caption{Quantitative results of one bit recovery on FFHQ dataset with different $m$ and the additive Gaussian noise of $\sigma=1.0$. The best results are \textbf{bold}.}
\label{tab: QuantitativeResultsFFHQWith1.0Sigma}
\setlength{\tabcolsep}{3.5pt}
\begin{tabular}{lcccccc}
\toprule
\multirow{2}{*}{Methods}  & \multicolumn{3}{c}{$n/m=16$}                & \multicolumn{3}{c}{$n/m=32$}                \\ 
\cmidrule(lr){2-4} \cmidrule(lr){5-7}
                         & PSNR$\uparrow$ & SSIM$\uparrow$ & LPIPS$\downarrow$ & PSNR$\uparrow$ & SSIM$\uparrow$ & LPIPS$\downarrow$  \\ 
\hline
DiffPIR                      & 11.50$\pm$1.13          & 0.24$\pm$.05          & 0.78$\pm$.06           & 11.20$\pm$0.94          &  0.18$\pm$.05         &  0.80$\pm$.06          \\
DPS                          & 14.53$\pm$1.23          & 0.51$\pm$.06          & 0.52$\pm$.07           & 13.68$\pm$0.78          &  0.49$\pm$.06         &  0.57$\pm$.05          \\
DAPS                         & 14.50$\pm$1.48          & 0.40$\pm$.05          & 0.61$\pm$.07           & 13.57$\pm$0.98          & 0.48$\pm$.06          & 0.56$\pm$.07          \\
QCS-SGM                      & 18.43$\pm$2.47          & 0.52$\pm$.10          & 0.54$\pm$.07           & 17.00$\pm$1.76          &  0.49$\pm$.09         &  0.61$\pm$.06          \\
SIM-DMIS                     & 18.32$\pm$1.79          & 0.51$\pm$.08          & 0.53$\pm$.06           & 16.89$\pm$1.17          &  0.47$\pm$.08         &  0.59$\pm$.08          \\
Diff-OneBit                  & 21.15$\pm$1.17          & 0.40$\pm$.06          & 0.49$\pm$.06           & 18.32$\pm$0.92          &  0.48$\pm$.06         &  0.54$\pm$.06          \\
\rowcolor{gray!20}PnP-OneBit & \textbf{22.22}$\pm$1.73 & \textbf{0.58}$\pm$.09 & \textbf{0.35}$\pm$.07  & \textbf{19.37}$\pm$1.18 & \textbf{0.57}$\pm$.07 & \textbf{0.49}$\pm$.07  \\
\bottomrule
\end{tabular}
\end{table}

\begin{table}[htbp]
\centering
\caption{Quantitative results of one bit recovery on ImageNet dataset with different $m$ and the additive Gaussian noise of $\sigma=0.5$. The best results are \textbf{bold}.}
\label{tab: QuantitativeResultsImageNet}
\setlength{\tabcolsep}{3.5pt}
\begin{tabular}{lcccccc}
\toprule
\multirow{2}{*}{Methods}  & \multicolumn{3}{c}{$n/m=16$}                & \multicolumn{3}{c}{$n/m=32$}                \\ 
\cmidrule(lr){2-4} \cmidrule(lr){5-7}
                         & PSNR$\uparrow$ & SSIM$\uparrow$ & LPIPS$\downarrow$ & PSNR$\uparrow$ & SSIM$\uparrow$ & LPIPS$\downarrow$  \\ 
\hline
DiffPIR                      & 10.73$\pm$0.69          & 0.09$\pm$.03          & 0.77$\pm$.07           & 9.80$\pm$0.54          &  0.06$\pm$.02         &  0.80$\pm$.07          \\
DPS                          & 14.38$\pm$1.36          & 0.38$\pm$.05          & 0.69$\pm$.08           & 13.14$\pm$0.76          &  0.35$\pm$.04         &  0.76$\pm$.08          \\
DAPS                         & 14.17$\pm$1.51          & 0.24$\pm$.04          & 0.67$\pm$.07           & 12.27$\pm$0.77          &  0.33$\pm$.05         &  0.78$\pm$.08          \\
QCS-SGM                      & 16.28$\pm$1.92          & 0.39$\pm$.08          & 0.66$\pm$.08           & 14.13$\pm$1.64          &  0.37$\pm$.08         &  0.72$\pm$.09          \\
SIM-DMIS                     & 17.03$\pm$1.23          & 0.41$\pm$.06          & 0.60$\pm$.07           & 15.42$\pm$0.98          &  0.40$\pm$.06         &  0.68$\pm$.08          \\
Diff-OneBit                  & 18.27$\pm$0.87          & 0.43$\pm$.05          & 0.53$\pm$.05           & 16.46$\pm$0.61          &  0.42$\pm$.05         &  0.67$\pm$.07          \\
\rowcolor{gray!20}PnP-OneBit & \textbf{19.77}$\pm$1.32 & \textbf{0.53}$\pm$.07 & \textbf{0.47}$\pm$.06  & \textbf{18.03}$\pm$0.94 & \textbf{0.46}$\pm$.05 & \textbf{0.58}$\pm$.06  \\
\bottomrule
\end{tabular}
\end{table}
Furthermore, we provide additional visual comparisons to illustrate the reconstruction performance. \Cref{fig: QualitativeResultsFFHQ_supp} displays more reconstruction examples on the FFHQ dataset under the compression ratio $n/m=16$ and noise level $\sigma=0.5$. Compared to the evaluated baseline methods, PnP-OneBit recovers facial details with fewer visual artifacts. In Addition, \Cref{fig: QualitativeResultsImageNet_supp} compares PnP-OneBit with Diff-OneBit on the ImageNet dataset across varying noise levels ($\sigma=0.5$ and $\sigma=1.0$) at $n/m=16$.

\begin{figure}[htbp]
    \centering
    \includegraphics[width=1.0\linewidth]{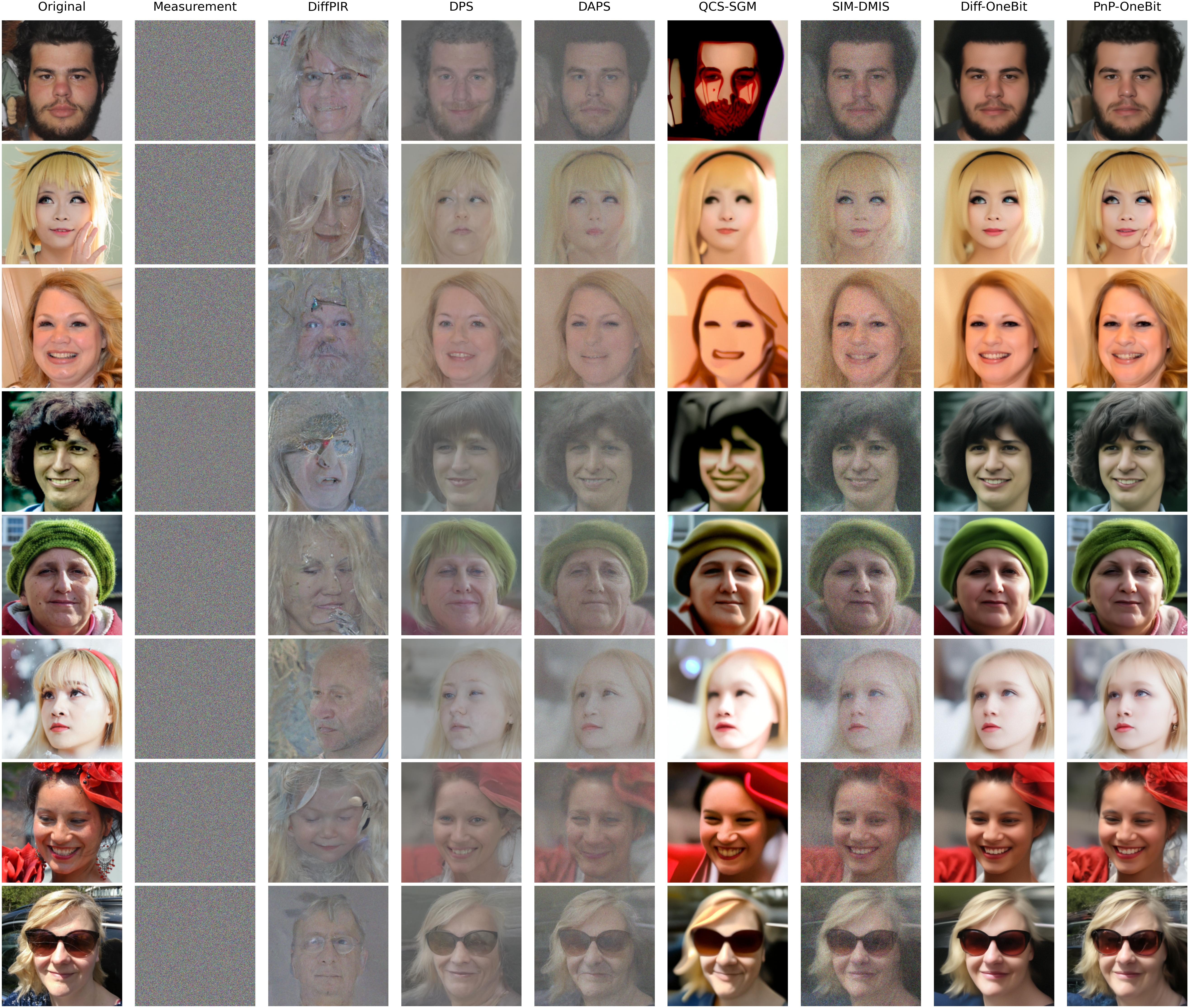}
    \caption{Qualitative comparison of different methods on the FFHQ dataset with compression ratio $n/m=16$ and additive Gaussian noise $\sigma=0.5$.}
    \label{fig: QualitativeResultsFFHQ_supp}
\end{figure}

\begin{figure}[htbp]
    \centering
    \includegraphics[width=0.8\linewidth]{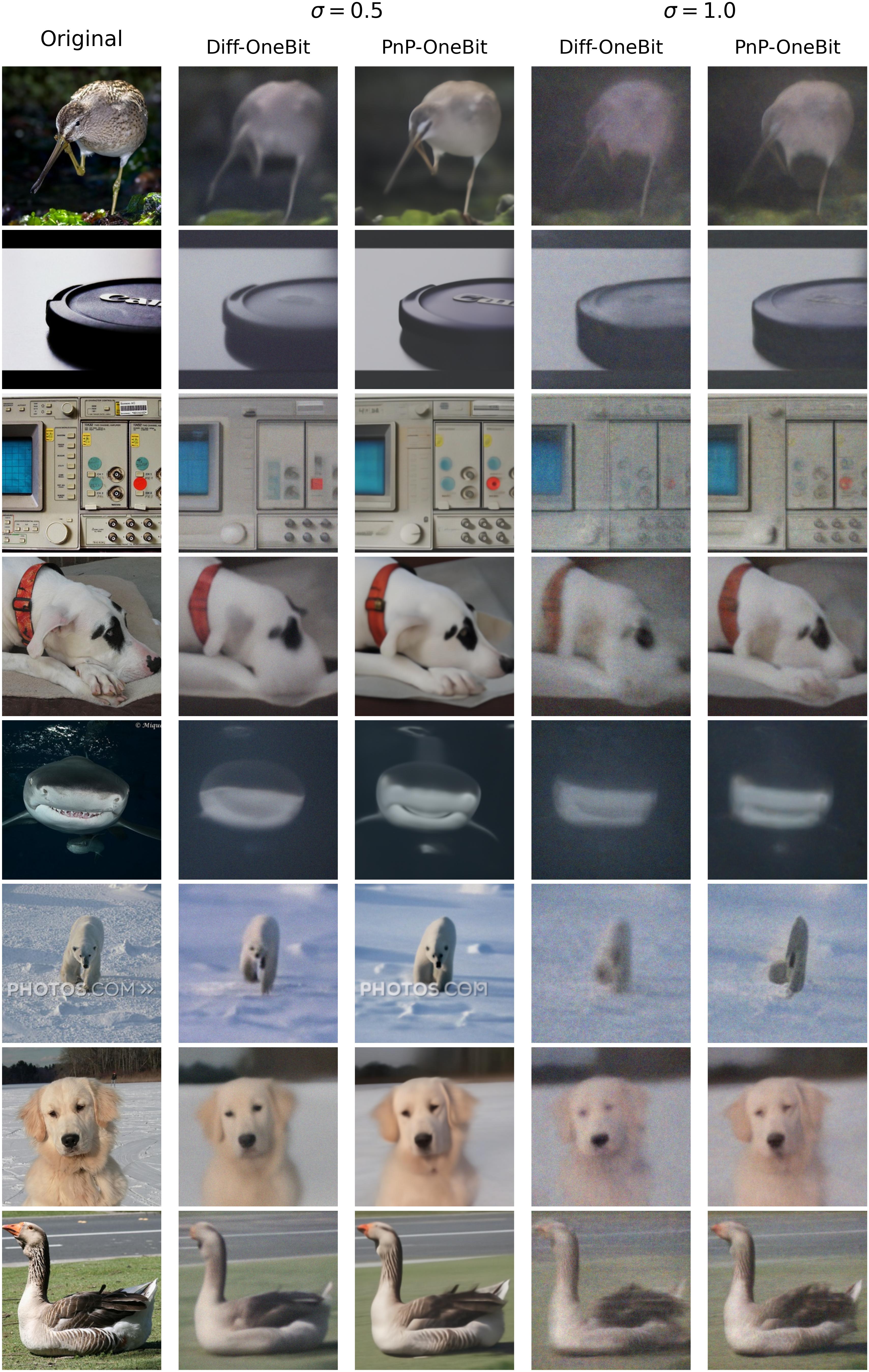}
    \caption{Qualitative comparison between Diff-OneBit and PnP-OneBit on the ImageNet dataset with compression ratio $n/m=16$ under different noise levels ($\sigma=0.5$ and $\sigma=1.0$).}
    \label{fig: QualitativeResultsImageNet_supp}
\end{figure}

% \subsection{Ablation study}
% \label{appendix: AblationStudy}

% \begin{table}[htbp]
% \centering
% \caption{Ablation study on the number of $\hat\alpha$ on the FFHQ dataset with $n/m=16$ and the additive Gaussian noise of $\sigma=0.5$.}
% \label{tab: AblationIterations4HatAlpha}
% \begin{tabular}{llllcccc}
% \toprule
% $\hat\alpha$ & $\varrho_0$ & $\nu$ & $\varrho_{\min}$ & PSNR$\uparrow$ & SSIM$\uparrow$ & LPIPS$\downarrow$ \\ 
% \midrule
% 0.0  & 1   & 0.98  & 0.3  & 18.96 & 0.46 & 0.54 \\
% 0.5  & 10  & 0.9   & 0.3  & 22.37 & 0.61 & 0.39 \\
% \rowcolor{gray!20}0.75 & 10  & 0.9   & 0.3  & 22.96 & 0.69 & 0.33 \\
% 1.0  & 10  & 0.9   & 0.3  & 21.74 & 0.56 & 0.45 \\
% \bottomrule
% \end{tabular}
% \end{table}

% \begin{table}[htbp]
% \centering
% \caption{Ablation study on the number of outer iterations ($K$) and inference time on the FFHQ dataset with $n/m=16$ and the additive Gaussian noise of $\sigma=0.5$.\textcolor{red}{TODO}}
% \label{tab: AblationIterationsAndTime}
% \begin{tabular}{lcccc}
% \toprule
% $K$ & Time (s/img) & PSNR$\uparrow$ & SSIM$\uparrow$ & LPIPS$\downarrow$ \\ 
% \midrule
% $20$  & XX.XX & XX.XX & 0.XX & 0.XX \\
% $50$  & XX.XX & XX.XX & 0.XX & 0.XX \\
% \rowcolor{gray!20}$100$ & XX.XX & 22.96 & 0.69 & 0.33 \\
% $200$ & XX.XX & XX.XX & 0.XX & 0.XX \\
% \bottomrule
% \end{tabular}
% \end{table}

\iffalse
\clearpage
\newpage
\input{checklist}
\fi

\end{document}